\documentclass{article}

\PassOptionsToPackage{numbers,compress}{natbib}

\usepackage[preprint]{style}

\usepackage[utf8]{inputenc}
\usepackage[T1]{fontenc}
\usepackage{hyperref}
\usepackage{url}
\usepackage{booktabs}
\usepackage{amsfonts}
\usepackage{nicefrac}
\usepackage{xcolor}

\usepackage{amsmath,amssymb,amsthm,mathtools,bm}
\usepackage{algorithm}
\usepackage{algpseudocode}
\usepackage{float}  
\usepackage{etoc}   

\newtheorem{theorem}{Theorem}
\newtheorem{proposition}{Proposition}
\newtheorem{lemma}{Lemma}
\newtheorem{corollary}{Corollary}
\newtheorem{assumption}{Assumption}
\theoremstyle{remark}
\newtheorem{remark}{Remark}

\usepackage{subcaption}
\usepackage{pifont}
\newcommand{\cmark}{\ding{51}}
\newcommand{\xmark}{\ding{55}}
\usepackage{tabularx}
\usepackage{colortbl}
\usepackage[normalem]{ulem}
\definecolor{gtcolor}{HTML}{E3F2FD}
\definecolor{forget}{HTML}{E8F5E9}
\definecolor{unlearn}{HTML}{FFF3E0}
\definecolor{promptgray}{HTML}{F5F5F5}
\definecolor{errcolor}{HTML}{D32F2F}
\definecolor{correctcolor}{HTML}{2E7D32}
\newcommand{\colorlabel}[2]{{\setlength{\fboxsep}{0pt}\colorbox{#1}{\vphantom{gT}\textbf{#2}}}}

\usepackage{graphicx}
\usepackage{wrapfig}
\usepackage{multirow}
\usepackage{enumerate}
\usepackage{tikz}
\usepackage{pgfplots}
\usetikzlibrary{arrows.meta,calc,positioning}
\pgfplotsset{compat=1.18}
\usepackage{soul}
\usepackage{comment}

\definecolor{cwText}{RGB}{78,118,104}
\definecolor{cwBand}{RGB}{220,230,226}
\definecolor{rdText}{RGB}{74,106,130}
\definecolor{rdBand}{RGB}{219,226,233}

\newcommand{\cw}[1]{\textcolor{cwText}{#1}}
\newcommand{\rd}[1]{\textcolor{rdText}{#1}}
\newcommand{\stdnum}[2]{$#1_{\pm #2}$}
\newcommand{\plainzero}{0.00}
\newcommand{\cwdiff}[1]{\cw{(#1)}}
\newcommand{\rddiff}[1]{\rd{(#1)}}

\title{Retain-Neutral Surrogates for Min--Max Unlearning}

\author{%
  Junhao Cai \\
  Korea University \\
  {\small\texttt{junhochae@korea.ac.kr}}
  \And
  Dohun Kim \\
  Korea University \\
  {\small\texttt{dohunkim@korea.ac.kr}}
  \And
  Dowon Kim \\
  Korea University \\
  {\small\texttt{dowonkim@korea.ac.kr}}
  \AND
  Sung Il Choi \\
  Korea University \\
  {\small\texttt{sungchoi@korea.ac.kr}}
  \And
  Chengjun Jin \\
  Korea University \\
  {\small\texttt{chengjunjin2001@korea.ac.kr}}
  \And
  Juhyun Park \\
  Korea University \\
  {\small\texttt{juhyunpark@korea.ac.kr}}
  \And
  Changhee Joo\thanks{Corresponding author.} \\
  Korea University \\
  {\small\texttt{changhee@korea.ac.kr}}
}

\begin{document}

\maketitle

\begin{abstract}
Machine unlearning seeks to remove the influence of designated training data while preserving performance on the remaining data. Approximate unlearning can be viewed as a local editing problem; in min--max unlearning, the key local object is the surrogate point at which the retain objective is evaluated. When forget and retain gradients are strongly aligned, an unconstrained forget-maximizing perturbation can move to a surrogate point that increases retain loss. We propose Retain-Orthogonal Surrogate Unlearning (ROSU), which constrains the inner surrogate construction by maximizing first-order forget gain subject to zero first-order retain change under a fixed perturbation budget. This yields a closed-form retain-orthogonal perturbation, a lightweight transported outer update, and amplification along the retain-neutral direction. Our analysis establishes (i) a curvature-controlled second-order bound on retain damage, (ii) a positive-alignment regime in which ROSU strictly reduces surrogate retain loss relative to standard min--max perturbations, and (iii) near-equivalence when the two gradients are nearly orthogonal. Across vision and language benchmarks (CIFAR-10/100, Tiny-ImageNet, TOFU, WMDP), the empirical pattern follows this geometry: ROSU gives its clearest gains in high-coupling regimes while remaining competitive elsewhere.
\end{abstract}

\section{Introduction}
\label{sec:intro}

Machine unlearning aims to remove the influence of a designated forget set from a trained model while preserving performance on the remaining retain set~\cite{unlearning,bourtoule2021machine}. Retraining from scratch remains the most reliable target behavior, but is computationally prohibitive at modern scale~\cite{izzo2021approximate,zhao2024makes}. This has motivated a large literature on approximate unlearning methods that seek to emulate retraining through efficient parameter updates~\cite{unlearning,scrubbing,golatkar2021mixed}. In practice, however, these methods still face a persistent difficulty: updates that improve forgetting often degrade retention. For min--max unlearning, this conflict appears one step before the final update: the inner perturbation first chooses the local surrogate point at which the retain objective is evaluated.

\begin{figure*}[t]
\centering

\begin{subfigure}[t]{0.32\textwidth}
    \centering
    \includegraphics[width=\linewidth]{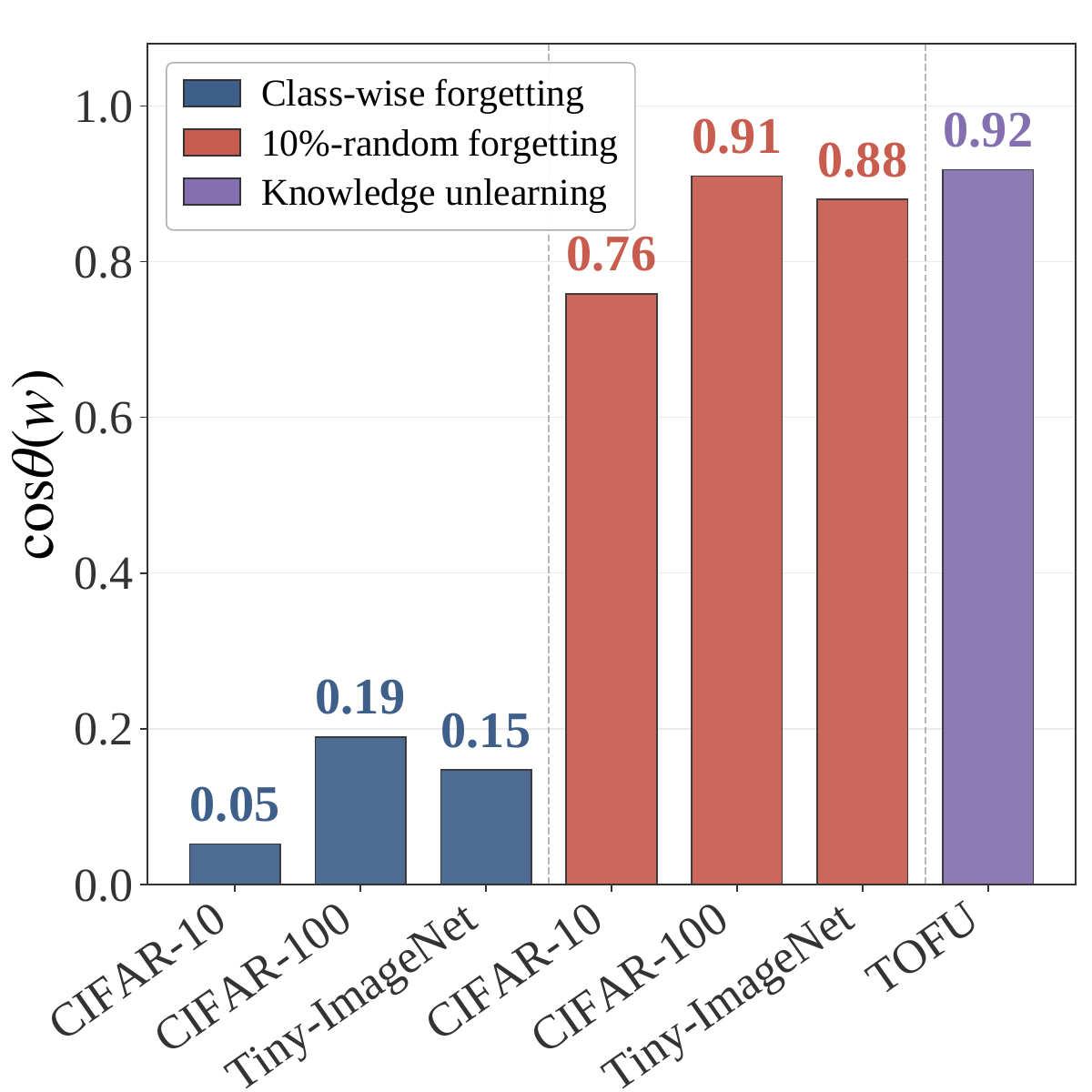}
    \caption{Coupling diagnosis.}
    \label{fig:motivation_a}
\end{subfigure}
\hfill
\begin{subfigure}[t]{0.32\textwidth}
    \centering
    \includegraphics[width=\linewidth]{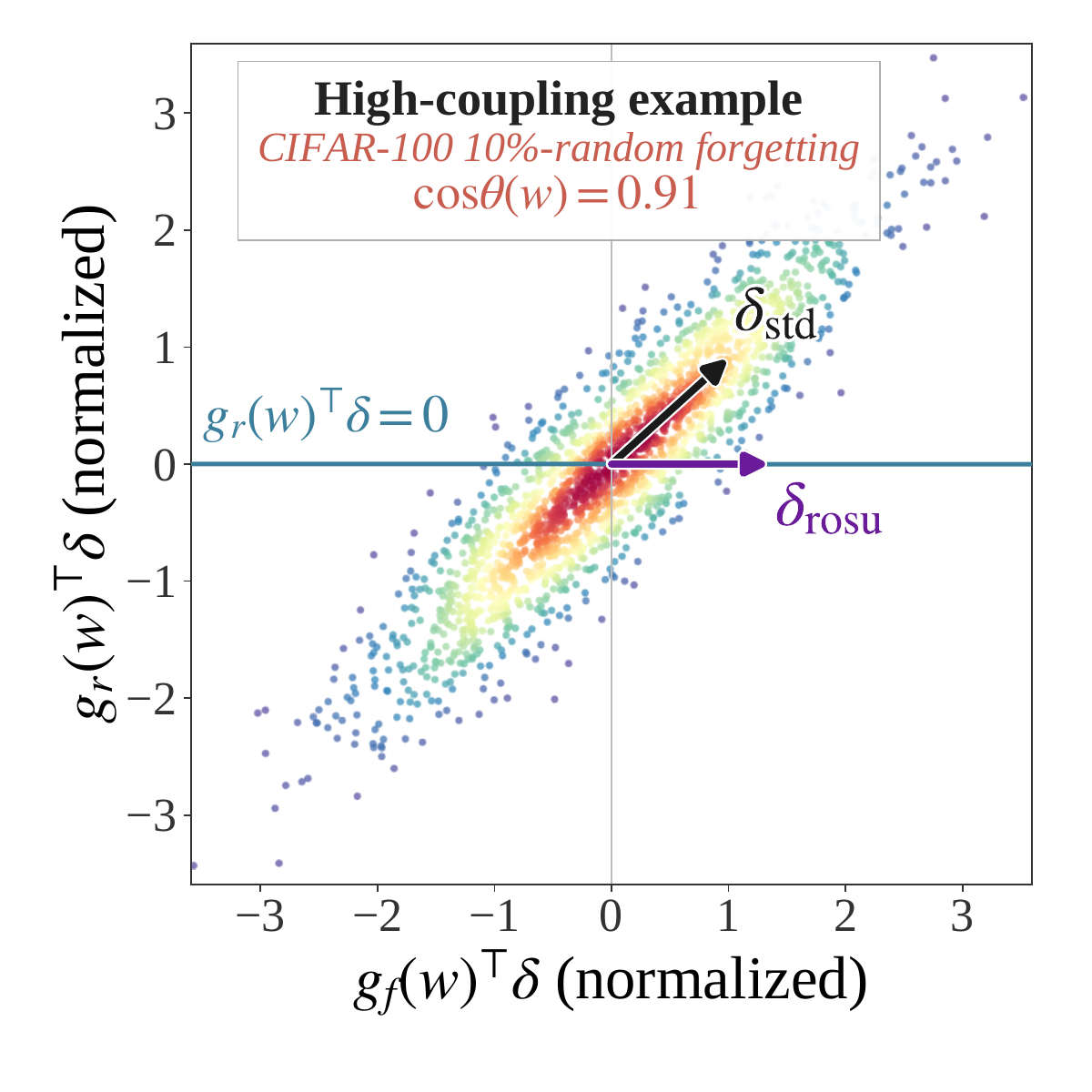}
    \caption{Retain-orthogonal surrogate.}
    \label{fig:motivation_b}
\end{subfigure}
\hfill
\begin{subfigure}[t]{0.32\textwidth}
    \centering
    \includegraphics[width=\linewidth]{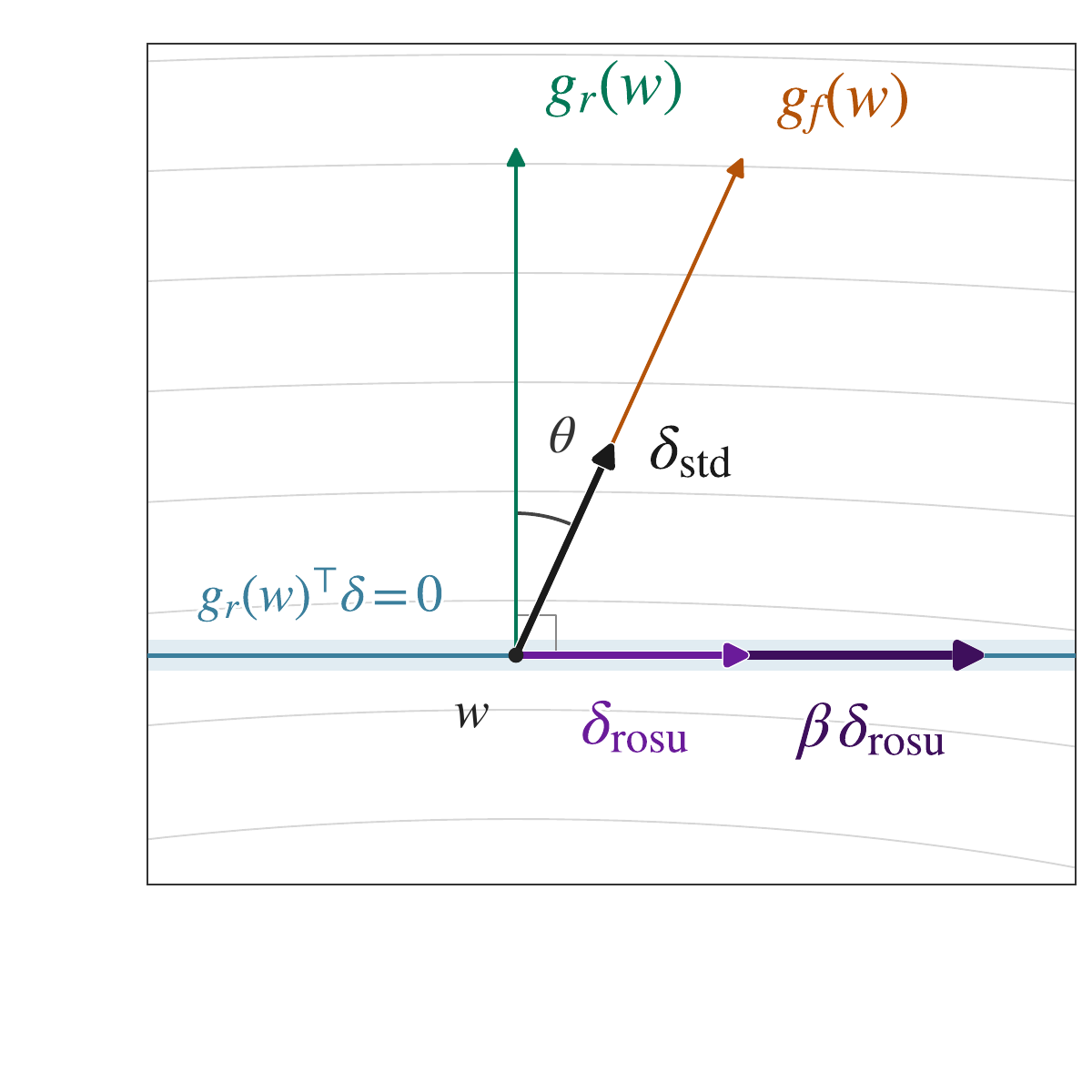}
    \caption{Retain-neutral amplification.}
    \label{fig:motivation_c}
\end{subfigure}

\caption{Let $g_f(w)$ and $g_r(w)$ denote the forget and retain full-batch gradients at the model $w$, and let $\theta$ be the angle between them.
\textbf{(a)}~The coupling $\cos\theta$ varies across unlearning settings: it is small for class-wise forgetting, but close to~$1$ for 10\%-random forgetting and the knowledge-unlearning setting shown here.
\textbf{(b)}~In a high-coupling setting, the plane shows the first-order effects
\((g_f^\top\delta, g_r^\top\delta)\) of a perturbation \(\delta\).
The standard perturbation \(\delta_{\mathrm{std}}\) follows the forget gradient and has a positive first-order retain component.
In contrast, ROSU chooses \(\delta_{\mathrm{rosu}}\) on the retain-neutral hyperplane \(g_r(w)^\top\delta=0\), eliminating this first-order retain component.
\textbf{(c)}~The same geometry in parameter space: $\delta_{\mathrm{rosu}}$ is obtained by projecting $g_f$ onto the retain-neutral hyperplane and rescaling to the perturbation budget, and can be amplified to $\beta\,\delta_{\mathrm{rosu}}$ along the same subspace.}
\label{fig:motivation}
\end{figure*}

The core challenge is that forgetting and retention share the same parameter space: an update that removes information from the forget set can also perturb parameters and representations used by the retain set. We use the cosine similarity between the forget and retain gradients, formally defined in Section~\ref{sec:notation}, as a local diagnostic of this coupling. Figure~\ref{fig:motivation_a} shows that the coupling varies substantially across settings: class-wise forgetting~\cite{kim2025unlearningaware}, which removes all examples from a target class, lies in a weak-coupling regime, whereas 10\% sample-wise random forgetting~\cite{kim2025unlearningaware} and the TOFU knowledge-unlearning setting~\cite{dorna2025openunlearning} are substantially more strongly coupled. The high-coupling regime is where the surrogate point matters most, because an unconstrained forget-seeking perturbation can select a surrogate point that is unfavorable for retention.

Existing approaches address this tension along two main directions. Projection-based methods reduce retain damage by removing retain-aligned components from the final update~\cite{yu2020gradient,patel2025learning}, whereas min--max methods, exemplified by Unlearning-Aware Minimization (UAM)~\cite{kim2025unlearningaware}, amplify forgetting via an inner perturbation, a local parameter displacement $\delta$ applied as $w+\delta$, while relying on an outer update to restore retention. However, standard min--max methods do not constrain the inner perturbation to be retain-neutral. In high-coupling regimes, this means the retain objective is evaluated at a surrogate point that already incurs retain damage. Thus, the issue is not merely whether the final update can be orthogonalized after it is formed, but whether the surrogate point itself should be chosen from a retain-neutral tangent space. This raises a central question: \emph{what is the correct inner perturbation for retain-neutral min--max unlearning?}

We answer this question with \textbf{Retain-Orthogonal Surrogate Unlearning (ROSU)}. Instead of following the raw forget gradient, ROSU constructs the inner perturbation by solving a local constrained maximization: maximize the linearized forget gain while keeping the retain objective stationary under a fixed perturbation budget, yielding a closed-form retain-orthogonal perturbation. In this formulation, retain-neutrality is not a post-hoc correction of the update; it defines the adversary. Because the inner perturbation is chosen from a retain-neutral tangent space, ROSU removes the leading retain-damaging term at the surrogate point, distinguishing it from projection-based approaches that correct the update only after the surrogate point has been chosen. The remaining curvature-dependent effect is explicitly bounded in our analysis.

Figure~\ref{fig:motivation} illustrates the underlying geometry. 
The standard perturbation $\delta_{\mathrm{std}}$ follows the forget gradient and can enter the retain-loss-increasing half-plane. ROSU instead chooses $\delta_{\mathrm{rosu}}$ from the retain-neutral hyperplane $\{\delta : g_r(w)^\top \delta = 0\}$. This replacement is most beneficial in high-coupling regimes, where the forget and retain gradients are strongly aligned. Moreover, the projected direction can be scaled to $\beta\delta_{\mathrm{rosu}}$ while remaining within the same retain-neutral subspace, enabling additional forgetting gain while preserving the leading-order retain-neutrality.

Our main contributions can be summarized as follows:
\begin{itemize}
    \item We formulate retain-neutral min--max unlearning and propose ROSU, whose inner problem admits a closed-form retain-orthogonal solution under a first-order retain-neutrality constraint.
    \item We develop a regime-dependent theoretical analysis, showing (i) a curvature-controlled second-order bound on residual retain damage, (ii) strict improvement under positive alignment, and (iii) near-equivalence when the gradients are nearly orthogonal. We further derive a relaxed transported outer update with an explicit deviation bound from the exact chain-rule gradient.
    \item We evaluate ROSU across vision (CIFAR-10, CIFAR-100, Tiny-ImageNet) and language (TOFU, WMDP) benchmarks, including coupling diagnostics, component ablations, and runtime comparisons, with the largest gains occurring in high-coupling settings.
\end{itemize}
We defer a detailed comparison with the broader unlearning literature to Appendix~\ref{app:related}.

\section{Retain-Orthogonal Surrogate Unlearning}
\label{sec:method}

We begin with the standard min--max formulation of unlearning and then introduce ROSU, which enforces first-order retain neutrality directly in the inner adversary. Specifically, ROSU replaces the unconstrained perturbation with the strongest local direction that increases forget loss while remaining first-order neutral to the retain objective. We next present the constrained inner adversary, its practical update rule, and the theoretical analysis. Additional derivations and implementation details are deferred to the appendix.

\subsection{Retain-Neutral Perturbation}
\label{sec:notation}

Let \(w\in\mathbb{R}^p\) denote the trainable parameters, and let the training data be partitioned into a forget set \(\mathcal D_f\) and a retain set \(\mathcal D_r\). Let \(L(w;\mathcal D)\) denote the empirical loss on dataset \(\mathcal D\), and define \(L_f(w):=L(w;\mathcal D_f)\) and \(L_r(w):=L(w;\mathcal D_r)\), with gradients \(g_f(w):=\nabla_w L_f(w)\) and \(g_r(w):=\nabla_w L_r(w)\). We quantify the interaction between the two objectives via the gradient coupling
\begin{equation}
    \cos\theta(w) := \frac{g_f(w)^\top g_r(w)}{\|g_f(w)\|_2 \, \|g_r(w)\|_2},~~ \theta(w)\in[0,\pi].
\end{equation}

\textbf{Standard min--max unlearning~\cite{kim2025unlearningaware}.} A common approach alternates between increasing the forget loss and decreasing the retain loss. Fine-tuning (FT)~\cite{WarneckePirchWressnegger2023_1000166047,scrubbing} minimizes \(L_r\) alone, while negative gradient (NG)~\cite{scrubbing} maximizes \(L_f\) alone, each ignoring the competing objective. Min--max unlearning instead couples an inner maximization with an outer minimization:
\begin{equation}
    \min_{w}\; L_r\bigl(w+\delta^{\star}(w)\bigr),
    \qquad
    \delta^{\star}(w)\in\arg\max_{\|\delta\|_2\le\rho}\; L_f(w+\delta),
\label{eq:standard-minmax}
\end{equation}
where \(\rho>0\) is the perturbation radius. The inner step identifies a direction of high forget loss, and the outer step updates \(w\) to reduce retain loss at the resulting surrogate point. However, the inner perturbation is chosen solely to maximize forget loss, without controlling its effect on retention. In positively coupled regimes, this implies that the surrogate point may already degrade the retain objective before the outer update is applied.

Specifically, when \(g_f(w)\neq 0\), a first-order linearization of the inner problem in~\eqref{eq:standard-minmax} yields the standard perturbation
\begin{equation}
    \delta_{\mathrm{std}}(w) :=
        \rho\,\frac{g_f(w)}{\|g_f(w)\|_2}.
\label{eq:std-perturb}
\end{equation}
The associated surrogate retain objective \(F_{\mathrm{std}}(w) := L_r\bigl(w+\delta_{\mathrm{std}}(w)\bigr)\) incurs the first-order retain effect
\begin{equation}
    g_r(w)^\top\delta_{\mathrm{std}}(w) = \rho\,\|g_r(w)\|_2\cos\theta(w),
\end{equation}
which is positive whenever \(g_r(w)\) and \(g_f(w)\) are positively aligned (see Theorem~\ref{thm:retain-damage}).

ROSU instead seeks the strongest local direction that increases the forget loss while remaining first-order neutral to the retain objective. Assuming \(g_r(w)\neq 0\), the inner perturbation is defined as the solution to the constrained maximization
\begin{equation}
    \max_{\|\delta\|_2\le \rho} g_f(w)^\top\delta, \qquad \text{s.t.} \quad g_r(w)^\top\delta=0.
    \label{eq:rosu-inner}
\end{equation}
This selects the strongest retain-neutral direction for forgetting. Unlike projection-based methods that orthogonalize the final update, ROSU imposes retain-neutrality inside the inner maximization, thereby changing the surrogate point itself.

To derive the closed-form solution, define the normalized retain direction and projector. Let
\begin{equation}
    u_r(w):=\frac{g_r(w)}{\|g_r(w)\|_2},
    \qquad
    P_r(w):=u_r(w)u_r(w)^\top
    =\frac{g_r(w)g_r(w)^\top}{\|g_r(w)\|_2^2}.
\end{equation}
The retain-orthogonal component of the forget gradient is then given by
\begin{equation}
    q(w):=(I-P_r(w))g_f(w).
\end{equation}

\begin{proposition}[Closed-form solution of the ROSU inner problem]
\label{prop:inner}
Assume \(g_r(w)\neq 0\) and \(q(w)\neq 0\). Then the problem in~\eqref{eq:rosu-inner} admits a unique maximizer
\begin{equation}
\label{eq:rosu-solution}
    \delta_{\mathrm{rosu}}(w) =
        \rho\,\frac{q(w)}{\|q(w)\|_2},
\end{equation}
with optimal value \(\rho \|q(w)\|_2\).
\end{proposition}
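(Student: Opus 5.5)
The plan is to reduce \eqref{eq:rosu-inner} to maximizing a linear functional over a ball in the subspace \(V:=\{\delta: g_r(w)^\top\delta=0\}=\operatorname{range}(I-P_r(w))\), and then apply Cauchy--Schwarz together with its equality case. Throughout we abbreviate \(g_f=g_f(w)\), \(P_r=P_r(w)\), \(q=q(w)\).

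\textbf{Step 1 (the objective only sees \(q\) on the feasible set).} Because \(g_r(w)\neq 0\), the projector \(P_r\) is well defined, symmetric and idempotent. For any feasible \(\delta\) we have \(P_r\delta=u_r(w)\,u_r(w)^\top\delta=0\), so \(\delta=(I-P_r)\delta\). Hence
\[
g_f^\top\delta=g_f^\top(I-P_r)\delta=\bigl((I-P_r)g_f\bigr)^\top\delta=q^\top\delta .
\]

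\textbf{Step 2 (upper bound).} By Cauchy--Schwarz, every feasible \(\delta\) satisfies \(q^\top\delta\le\|q\|_2\|\delta\|_2\le\rho\|q\|_2\). So the value of \eqref{eq:rosu-inner} is at most \(\rho\|q\|_2\).

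\textbf{Step 3 (attainment).} Take \(\delta^\star=\rho\,q/\|q\|_2\), which is defined since \(q\neq0\).
\begin{itemize}
\item It lies in the ball: \(\|\delta^\star\|_2=\rho\).
\item It is feasible: \(g_r^\top q=g_r^\top(I-P_r)g_f=0\), because \(g_r^\top P_r=g_r^\top\).
\item It attains the bound: \(g_f^\top\delta^\star=q^\top\delta^\star=\rho\|q\|_2\).
\end{itemize}
Therefore \(\delta^\star\) is a maximizer and the optimal value is \(\rho\|q\|_2\).

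\textbf{Step 4 (uniqueness).} This is the only step needing care, though it is routine. Suppose \(\delta\) is feasible and optimal. Then both inequalities in Step 2 hold with equality.
\begin{itemize}
\item Equality in \(\|q\|_2\|\delta\|_2\le\rho\|q\|_2\) with \(q\neq0\) forces \(\|\delta\|_2=\rho\). This rules out interior points.
\item Equality in Cauchy--Schwarz, with \(q^\top\delta=\rho\|q\|_2>0\), forces \(\delta=c\,q\) for some \(c>0\). A negative multiple is excluded because the inner product is positive.
\end{itemize}
Combining the two gives \(c=\rho/\|q\|_2\), so \(\delta=\delta^\star\). The hypothesis \(q\neq0\) is exactly what is needed here. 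If \(q=0\), the objective vanishes identically on the feasible set and every feasible \(\delta\) is a maximizer, so uniqueness would fail.
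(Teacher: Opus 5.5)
Your proof is correct and follows the paper's argument step for step: the identity $g_f(w)^\top\delta = q(w)^\top\delta$ on the feasible set, the Cauchy--Schwarz and radius bound, attainment by $\rho\,q(w)/\|q(w)\|_2$, and uniqueness from the two equality cases. Your uniqueness step is slightly tighter than the paper's, because you explicitly exclude the anti-parallel point $-\rho\,q(w)/\|q(w)\|_2$ using positivity of the inner product, whereas the paper invokes only colinearity.
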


A direct proof is provided in Appendix~\ref{app:proof-inner}. This result shows that ROSU is not a post-hoc projection layered onto min--max unlearning, but the exact solution of the constrained inner maximization under a zero first-order retain-change constraint. Geometrically, \(q(w)\) is the retain-orthogonal component of the forget gradient, and \(\delta_{\mathrm{rosu}}(w)\) rescales this component to exhaust the perturbation budget. When \(q(w)=0\), the forget gradient lies entirely in the retain-gradient direction, so every feasible perturbation has zero linearized forget gain; the fallback in Algorithm~\ref{alg:rosu_full} therefore avoids injecting an arbitrary forget-seeking direction and instead defaults to retain descent.

\begin{algorithm}[t]
\caption{Retain-Orthogonal Surrogate Unlearning (ROSU)}
\label{alg:rosu_full}
\begin{algorithmic}[1]
\Require Weights \(w\), forget/retain mini-batches \(B_f, B_r\), learning rate \(\eta\), perturbation radius \(\rho\), forget amplification \(\beta \ge 0\), stabilizer \(\tau>0\), threshold \(\varepsilon_q>0\)
\State \(g_f \gets \nabla L_f(w;B_f)\), \(\quad g_r \gets \nabla L_r(w;B_r)\)
\State \(q_\tau \gets g_f - \frac{g_f^\top g_r}{\|g_r\|_2^2+\tau}\,g_r\)
\If{\(\|q_\tau\|_2 \le \varepsilon_q\)}
    \State \(w \gets w - \eta\,g_r\) \Comment{Degenerate fallback; see Lemma~\ref{lem:reg-qsmall}}
\Else
    \State \(u_\tau \gets \frac{q_\tau}{\|q_\tau\|_2}\), \(\quad \delta \gets \rho\,u_\tau\), \(\quad \alpha_\tau \gets \frac{\rho}{\|q_\tau\|_2}\)
    \State \(\widetilde g_r \gets \nabla L_r(w+\delta;B_r)\)
    \State \(v \gets \widetilde g_r + \alpha_\tau\!\left(\widetilde g_r - \frac{g_r^\top \widetilde g_r}{\|g_r\|_2^2+\tau}\,g_r - (u_\tau^\top \widetilde g_r)\,u_\tau\right)\)
    \State \(w \gets w + \beta\,\delta - \eta\,v\)
\EndIf
\end{algorithmic}
\end{algorithm}

\subsection{Practical Optimization of ROSU}
\label{sec:practical}

The exact outer objective is defined as \(F_{\mathrm{rosu}}(w):=L_r\bigl(w+\delta_{\mathrm{rosu}}(w)\bigr)\). However, its exact gradient is expensive to compute: it requires both Hessian-vector products (HVPs) through \(L_f\) and differentials of the retain projector \(P_r\) (see Appendix~\ref{app:supp-theory}).

To make optimization practical, ROSU uses two local relaxations. Approximation \textbf{A1} stops gradients through the retain projector, treating \(P_r\) as locally fixed. Approximation \textbf{A2} follows the identity-Hessian relaxation used in UAM-style min--max unlearning~\cite{kim2025unlearningaware}, replacing the local forget Hessian with an identity surrogate. This yields a tractable transported-gradient update whose deviation from the exact chain-rule gradient is bounded in Proposition~\ref{prop:approx-grad}.

Under these approximations, the Jacobian reduces to a closed-form correction that transports the surrogate retain gradient back to the current iterate, while removing its components along both the retain direction and the normalization direction of the projected perturbation. Let \(u_\perp(w):=\frac{q(w)}{\|q(w)\|_2}\), \(P_\perp(w):=u_\perp(w)u_\perp(w)^\top\), and \(\widetilde g_r(w):=\nabla L_r\bigl(w+\delta_{\mathrm{rosu}}(w)\bigr)\). The resulting transported surrogate gradient, whose negative is the descent step, is
\begin{equation}
    v :=
        \left[
        I+\frac{\rho}{\|q(w)\|_2}\bigl(I-P_r(w)-P_\perp(w)\bigr)
        \right]\widetilde g_r(w).
\label{eq:rosu-update}
\end{equation}
ROSU then combines this retain-aware descent with a retain-neutral amplification step:
\begin{equation}
w \leftarrow w + \beta\,\delta_{\mathrm{rosu}}(w) - \eta\,v,
\label{eq:rosu-full-update}
\end{equation}
where \(\eta>0\) is the outer learning rate and \(\beta\ge 0\) controls the amount of forgetting re-injected within the same retain-neutral subspace. The first term amplifies forgetting along a retain-neutral direction, while the second term performs a corrected descent on the surrogate retain objective.

Algorithm~\ref{alg:rosu_full} implements Eq.~\eqref{eq:rosu-update} using a regularized direction \(q_\tau\). When \(\tau>0\), the implemented correction corresponds to the relaxed Jacobian with regularized denominators; the only omitted regularization cross-term is bounded in Lemma~\ref{lem:reg-relaxed-jac}. The vector \(\widetilde g_r\) gives direct surrogate retain descent, while the bracketed operator adds the relaxed A1/A2 Jacobian correction. We fix the stabilizer \(\tau=10^{-8}\) and degeneracy threshold \(\varepsilon_q=10^{-6}\) in all experiments without tuning. Additional details on regularization, degeneracy handling, and extensions are provided in Appendix~\ref{app:supp-theory}.

%%%%%%%%%%%%%%%%%%%
%
\section{Theoretical Analysis}
\label{sec:theory}

We analyze the surrogate geometry induced by each ROSU step. This is the object directly manipulated by min--max unlearning: the inner perturbation selects the surrogate point, and the outer update then optimizes retention at that point. We focus on retain protection, the induced forget--retain trade-off, the accuracy of the A1/A2 approximation, and the effect of retain-neutral amplification. Additional technical details including exact Jacobian identities, near-orthogonality refinements, implementation lemmas, and the protected-subspace extension are deferred to Appendix~\ref{app:supp-theory}.

\begin{assumption}[Smooth retain objective]
\label{ass:smooth}
    Let
    \begin{equation}
    \Gamma(w)
    :=
    \{w+t\delta_{\mathrm{std}}(w): t\in[0,1]\}
    \cup
    \{w+t\delta_{\mathrm{rosu}}(w): t\in[0,1]\}.
    \end{equation}
    There exists $M_r>0$ such that
    \begin{equation}
        \|\nabla^2L_r(z)\|_{\mathrm{op}}\le M_r, \quad \text{for all } z\in\Gamma(w).
    \end{equation}
\end{assumption}

\begin{theorem}[Structural retain protection of ROSU]
\label{thm:retain-damage}
Assume $g_f(w)\neq 0$, $g_r(w)\neq 0$, $q(w)\neq 0$, and Assumption~\ref{ass:smooth}. Then:
\begin{enumerate}[(i)]
\item By construction, $g_r(w)^\top\delta_{\mathrm{rosu}}(w)=0$, and
\begin{equation}
L_r\bigl(w+\delta_{\mathrm{rosu}}(w)\bigr)
\le
L_r(w)+\frac{M_r}{2}\rho^2.
\end{equation}
\item The retain-loss gap between the standard and ROSU perturbations satisfies
\begin{equation}
L_r\bigl(w+\delta_{\mathrm{std}}(w)\bigr)-L_r\bigl(w+\delta_{\mathrm{rosu}}(w)\bigr)
\ge
\rho\,\|g_r(w)\|_2\cos\theta(w)-M_r\rho^2.
\end{equation}
\end{enumerate}
\end{theorem}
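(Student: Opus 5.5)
The plan is to use second-order Taylor expansion with integral (or Lagrange) remainder along the two segments contained in $\Gamma(w)$, where Assumption~\ref{ass:smooth} controls the Hessian. For any direction $\delta\in\{\delta_{\mathrm{std}}(w),\delta_{\mathrm{rosu}}(w)\}$ with $\|\delta\|_2=\rho$, the segment $\{w+t\delta:t\in[0,1]\}$ lies in $\Gamma(w)$, so
\begin{equation}
L_r(w+\delta)=L_r(w)+g_r(w)^\top\delta+R(\delta),\qquad |R(\delta)|\le \tfrac{M_r}{2}\|\delta\|_2^2=\tfrac{M_r}{2}\rho^2,
\end{equation}
where $R(\delta)=\int_0^1(1-t)\,\delta^\top\nabla^2L_r(w+t\delta)\,\delta\,dt$. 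Both perturbations have norm exactly $\rho$: $\delta_{\mathrm{std}}$ by \eqref{eq:std-perturb} using $g_f(w)\neq0$, and $\delta_{\mathrm{rosu}}$ by Proposition~\ref{prop:inner} using $q(w)\neq0$.

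For part (i), I first verify $g_r(w)^\top\delta_{\mathrm{rosu}}(w)=0$. It suffices to show $g_r^\top q=0$. Since $g_r^\top(I-P_r)=g_r^\top-g_r^\top\frac{g_r g_r^\top}{\|g_r\|^2}=0$, this holds, and in any case it is the feasibility constraint of \eqref{eq:rosu-inner}, whose maximizer is given by Proposition~\ref{prop:inner}. Plugging $\delta=\delta_{\mathrm{rosu}}$ into the expansion kills the linear term and leaves $L_r(w+\delta_{\mathrm{rosu}})\le L_r(w)+\frac{M_r}{2}\rho^2$.

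For part (ii), I subtract the two expansions. The standard perturbation's linear term is $g_r^\top\delta_{\mathrm{std}}=\rho\|g_r\|_2\cos\theta(w)$, which comes directly from the definition of $\cos\theta$. The ROSU linear term is zero. This gives
\begin{equation}
L_r(w+\delta_{\mathrm{std}})-L_r(w+\delta_{\mathrm{rosu}})=\rho\|g_r\|_2\cos\theta+R(\delta_{\mathrm{std}})-R(\delta_{\mathrm{rosu}}).
\end{equation}
The key step is the lower bound on the remainder difference: I use $R(\delta_{\mathrm{std}})\ge-\frac{M_r}{2}\rho^2$ and $R(\delta_{\mathrm{rosu}})\le\frac{M_r}{2}\rho^2$, so the difference is at least $-M_r\rho^2$, which yields the claim.

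No step is a genuine obstacle. The only care needed is to confirm that the Hessian bound is used only on the two segments, which is exactly what $\Gamma(w)$ provides, and to use a two-sided remainder bound rather than a one-sided smoothness inequality. Note that the upper-smoothness inequality alone would not suffice for (ii), because a lower bound on $L_r(w+\delta_{\mathrm{std}})$ is needed, and that requires $\|\nabla^2L_r\|_{\mathrm{op}}\le M_r$ in operator norm, i.e.\ control on negative curvature as well, which Assumption~\ref{ass:smooth} does give.
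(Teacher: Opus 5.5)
Your proposal is correct and follows essentially the same route as the paper: a second-order expansion along the two segments in $\Gamma(w)$, where the linear term vanishes for $\delta_{\mathrm{rosu}}(w)$ in (i), and where (ii) combines the quadratic lower bound for $\delta_{\mathrm{std}}(w)$ with the upper bound for $\delta_{\mathrm{rosu}}(w)$. Your integral-remainder formulation just makes explicit the two-sided curvature control that the paper invokes as ``$M_r$-smoothness.''
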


The proof is provided in Appendix~\ref{app:proof-retain-damage}. 
Part~(i) is angle-unconditional: once the retain-aligned first-order term is removed, the residual retain effect is 
governed by curvature and scales quadratically with $\rho$.
Part~(ii) is inherently regime-dependent, reflecting the geometry of the problem:
ROSU provides a structural advantage when the standard perturbation is positively aligned with the retain gradient, but it is not expected to dominate uniformly across all angles.

\begin{corollary}[Strict improvement in the positive-alignment regime]
\label{cor:positive-alignment}
Under the assumptions of Theorem~\ref{thm:retain-damage}, if
\begin{equation}
\cos\theta(w) > \frac{M_r\rho}{\|g_r(w)\|_2},
\end{equation}
then
\begin{equation}
L_r\bigl(w+\delta_{\mathrm{std}}(w)\bigr)
>
L_r\bigl(w+\delta_{\mathrm{rosu}}(w)\bigr).
\end{equation}
\end{corollary}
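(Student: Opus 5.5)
The corollary follows directly from part~(ii) of Theorem~\ref{thm:retain-damage}, which we take as given. Since the assumptions of the corollary are exactly those of the theorem, we may use the lower bound
\begin{equation*}
L_r\bigl(w+\delta_{\mathrm{std}}(w)\bigr)-L_r\bigl(w+\delta_{\mathrm{rosu}}(w)\bigr)
\ge
\rho\,\|g_r(w)\|_2\cos\theta(w)-M_r\rho^2 .
\end{equation*}
It then suffices to show that the right-hand side is strictly positive under the stated angle condition.

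Because $g_r(w)\neq 0$ and $\rho>0$, the quantity $\rho\,\|g_r(w)\|_2$ is strictly positive. Multiplying both sides of the hypothesis $\cos\theta(w) > M_r\rho/\|g_r(w)\|_2$ by this positive quantity preserves the strict inequality and gives
\begin{equation*}
\rho\,\|g_r(w)\|_2\cos\theta(w) > M_r\rho^2 .
\end{equation*}
Hence $\rho\,\|g_r(w)\|_2\cos\theta(w)-M_r\rho^2>0$. Chaining this with the theorem's bound shows that the retain-loss gap is strictly positive, which is the claimed strict inequality.

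No step is a real obstacle; the only point needing care is the direction of the inequalities. The theorem supplies a non-strict lower bound, and strictness comes entirely from the strict angle hypothesis together with the positivity of $\rho\|g_r(w)\|_2$. Two side remarks are worth recording after the proof. First, because $M_r>0$, the hypothesis forces $\cos\theta(w)>0$, so the corollary lives in the positive-alignment regime. Second, since $\cos\theta(w)\le 1$, the condition can only hold when $\rho<\|g_r(w)\|_2/M_r$, which makes explicit that the perturbation radius must be small relative to the retain-gradient-to-curvature ratio.
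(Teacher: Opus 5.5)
Your proposal is correct and is exactly the argument the paper intends. The paper gives no separate proof of this corollary and treats it as an immediate consequence of Theorem~\ref{thm:retain-damage}(ii): multiplying the angle hypothesis by the positive quantity $\rho\,\|g_r(w)\|_2$ makes the lower bound strictly positive, and your side remarks ($\cos\theta(w)>0$ and $\rho<\|g_r(w)\|_2/M_r$) are also accurate.
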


\begin{remark}[Negative or weak coupling]
\label{rem:negative-coupling}
If $\cos\theta(w)\le 0$, the standard inner perturbation does not incur positive first-order retain damage, and Theorem~\ref{thm:retain-damage}(ii) does not imply a retain-side advantage for ROSU. In this regime, ROSU should be viewed as a conservative retain-neutral alternative rather than as a uniformly superior perturbation. It is most beneficial in strongly positive-coupling settings, where 
the unconstrained inner adversary is structurally harmful.
\end{remark}

Conversely, when the two gradients are nearly orthogonal, the ROSU and standard surrogates become close. Under a local Lipschitz condition on $L_r$, Appendix~\ref{app:supp-theory} shows that the outer-objective gap scales linearly in $\rho\epsilon$ whenever $|\cos\theta|\le\epsilon$. 
Together with Theorem~\ref{thm:retain-damage} and Remark~\ref{rem:negative-coupling}, this yields the regime-dependent picture emphasized throughout the paper: positive alignment favors ROSU most strongly, while near-orthogonality makes ROSU and the standard surrogate nearly equivalent.

\begin{table*}[t]
\centering
\caption{\textbf{CIFAR-10 unlearning.} RA/FA/TA denote retain/forget/test accuracy; \(\uparrow/\downarrow\) indicate higher/lower is better. MIA-Eff. is the forget-set non-member rate; in random forgetting, the target is Retrain's value rather than the maximum. Parentheses show absolute differences from Retrain.}
\label{tab:cifar10_unlearning}
{\footnotesize
\resizebox{\textwidth}{!}{%
\begin{tabular}{lcccccc}
\toprule
\textbf{Method} & \textbf{RA} & \textbf{FA} & \textbf{TA} & \textbf{$\Delta$Acc.~($\downarrow$)} & \textbf{MIA-Eff.}\,$\uparrow$ & \textbf{Time (min)} \\
\midrule

\rowcolor{cwBand}
\multicolumn{7}{c}{\textbf{\cw{Class-wise forgetting}}} \\
\midrule

Retrain
& \stdnum{99.99}{0.00}  % RA
& \stdnum{0.00}{0.00}    % FA
& \stdnum{95.15}{0.56}   % TA
& 0.00              % ΔAcc (baseline)
& \stdnum{100.00}{0.00}  % MIA
& 33.8 \\

FT
& \stdnum{98.22}{0.54}\ \cwdiff{1.77}
& \stdnum{24.94}{6.99}\ \cwdiff{24.94}
& \stdnum{92.68}{1.07}\ \cwdiff{2.47}
& 29.18
& \stdnum{99.13}{0.98}\ \cwdiff{0.87}
& 1.63 \\

NG
& \stdnum{88.03}{2.92}\ \cwdiff{11.96}
& \stdnum{0.84}{1.21}\ \cwdiff{0.84}
& \stdnum{83.00}{2.89}\ \cwdiff{12.15}
& 24.95
& \stdnum{99.28}{1.10}\ \cwdiff{0.72}
& 1.64 \\

FF
& \stdnum{98.95}{0.12}\ \cwdiff{1.04}
& \stdnum{10.98}{11.29}\ \cwdiff{10.98}
& \stdnum{93.26}{0.62}\ \cwdiff{1.89}
& 13.91
& \stdnum{100.00}{0.00}\ \cwdiff{0.00}
& 1.07 \\

IU
& \stdnum{90.86}{9.85}\ \cwdiff{9.13}
& \stdnum{10.28}{15.62}\ \cwdiff{10.28}
& \stdnum{85.86}{8.80}\ \cwdiff{9.29}
& 28.70
& \stdnum{98.27}{3.09}\ \cwdiff{1.73}
& 0.59 \\

$\ell_{1}$-sparse
& \stdnum{97.18}{0.46}\ \cwdiff{2.81}
& \stdnum{0.00}{0.00}\ \cwdiff{0.00}
& \stdnum{92.07}{0.76}\ \cwdiff{3.08}
& 5.89
& \stdnum{100.00}{0.00}\ \cwdiff{0.00}
& 1.73 \\

GU
& \stdnum{94.20}{2.36}\ \cwdiff{5.79}
& \stdnum{6.48}{3.72}\ \cwdiff{6.48}
& \stdnum{88.72}{2.19}\ \cwdiff{6.43}
& 18.70
& \stdnum{95.19}{3.10}\ \cwdiff{4.81}
& 7.59 \\

OrthoGrad
& \stdnum{98.59}{0.72}\ \cwdiff{1.40}
& \stdnum{0.46}{0.77}\ \cwdiff{0.46}
& \stdnum{92.31}{0.92}\ \cwdiff{2.84}
& 4.70
& \stdnum{99.66}{0.66}\ \cwdiff{0.34}
& 7.62 \\

PCGrad
& \stdnum{98.58}{0.32}\ \cwdiff{1.41}
& \stdnum{0.07}{0.12}\ \cwdiff{0.07}
& \stdnum{92.93}{0.78}\ \cwdiff{2.22}
& 3.70
& \stdnum{100.00}{0.00}\ \cwdiff{0.00}
& 2.10 \\

UAM
& \stdnum{99.91}{0.05}\ \cwdiff{0.08}
& \stdnum{0.02}{0.03}\ \cwdiff{0.02}
& \stdnum{94.42}{0.68}\ \cwdiff{0.73}
& 0.83
& \stdnum{100.00}{0.00}\ \cwdiff{0.00}
& 2.47 \\

\textbf{ROSU}
& \stdnum{99.96}{0.01}\ \cwdiff{0.03}
& \stdnum{0.00}{0.00}\ \cwdiff{0.00}
& \stdnum{95.01}{0.53}\ \cwdiff{0.14}
& \textbf{0.17}
& \stdnum{100.00}{0.00}\ \cwdiff{0.00}
& 2.98 \\

\midrule

\rowcolor{rdBand}
\multicolumn{7}{c}{\textbf{\rd{Random forgetting}}} \\
\midrule

Retrain
& \stdnum{99.99}{0.00}  % RA
& \stdnum{95.11}{0.37}    % FA
& \stdnum{94.38}{0.23}   % TA
& 0.00              % ΔAcc (baseline)
& \stdnum{11.61}{0.68}    % MIA
& 33.8 \\

FT
& \stdnum{99.29}{0.06}\ \rddiff{0.70}
& \stdnum{99.06}{0.08}\ \rddiff{3.95}
& \stdnum{93.28}{0.15}\ \rddiff{1.10}
& 5.75
& \stdnum{4.07}{0.30}\ \rddiff{7.54}
& 1.64 \\

NG
& \stdnum{64.30}{45.24}\ \rddiff{35.69}
& \stdnum{64.18}{45.23}\ \rddiff{30.93}
& \stdnum{60.74}{42.32}\ \rddiff{33.64}
& 100.26
& \stdnum{4.27}{3.88}\ \rddiff{7.34}
& 1.63 \\

FF
& \stdnum{79.72}{7.37}\ \rddiff{20.27}
& \stdnum{80.02}{7.12}\ \rddiff{15.09}
& \stdnum{77.22}{5.93}\ \rddiff{17.16}
& 52.52
& \stdnum{40.24}{18.03}\ \rddiff{28.63}
& 1.16 \\

IU
& \stdnum{95.72}{5.57}\ \rddiff{4.27}
& \stdnum{95.67}{5.51}\ \rddiff{0.56}
& \stdnum{89.89}{5.37}\ \rddiff{4.49}
& 9.32
& \stdnum{6.87}{7.00}\ \rddiff{4.74}
& 0.59 \\

$\ell_{1}$-sparse
& \stdnum{98.22}{0.51}\ \rddiff{1.77}
& \stdnum{96.43}{0.85}\ \rddiff{1.32}
& \stdnum{92.13}{0.52}\ \rddiff{2.25}
& 5.34
& \stdnum{8.20}{0.48}\ \rddiff{3.41}
& 1.77 \\

GU
& \stdnum{98.60}{0.29}\ \rddiff{1.39}
& \stdnum{98.77}{0.49}\ \rddiff{3.66}
& \stdnum{92.96}{0.31}\ \rddiff{1.42}
& 6.47
& \stdnum{3.78}{0.87}\ \rddiff{7.83}
& 7.59 \\

OrthoGrad
& \stdnum{99.97}{0.02}\ \rddiff{0.02}
& \stdnum{94.81}{0.82}\ \rddiff{0.30}
& \stdnum{92.65}{0.31}\ \rddiff{1.73}
& 2.05
& \stdnum{7.81}{1.12}\ \rddiff{3.80}
& 7.60 \\

PCGrad
& \stdnum{98.48}{0.30}\ \rddiff{1.51}
& \stdnum{95.57}{0.41}\ \rddiff{0.46}
& \stdnum{92.27}{0.41}\ \rddiff{2.11}
& 4.08
& \stdnum{8.85}{0.50}\ \rddiff{2.76}
& 1.99 \\

UAM
& \stdnum{99.79}{0.05}\ \rddiff{0.20}
& \stdnum{96.25}{0.22}\ \rddiff{1.14}
& \stdnum{93.29}{0.28}\ \rddiff{1.09}
& 2.43
& \stdnum{8.02}{0.22}\ \rddiff{3.59}
& 2.44 \\

\textbf{ROSU}
& \stdnum{99.84}{0.03}\ \rddiff{0.15}
& \stdnum{95.13}{0.23}\ \rddiff{0.02}
& \stdnum{94.29}{0.09}\ \rddiff{0.09}
& \textbf{0.26}
& \stdnum{7.88}{0.16}\ \rddiff{3.73}
& 2.92 \\

\bottomrule
\end{tabular}%
}
}
\end{table*}

\begin{proposition}[First-order forget-gain trade-off]
\label{prop:tradeoff}
Assume $g_f(w)\neq 0$, $g_r(w)\neq 0$, and $q(w)\neq 0$. Then
the first-order forget-gain ratio is
\begin{equation}
    \frac{g_f(w)^\top\delta_{\mathrm{rosu}}(w)}{g_f(w)^\top\delta_{\mathrm{std}}(w)}
    = \sin\theta(w)
    = \sqrt{1-\cos^2\theta(w)}.
\end{equation}
This follows from $g_f(w)^\top\delta_{\mathrm{std}}(w) =\rho\,\|g_f(w)\|_2$ and $g_f(w)^\top\delta_{\mathrm{rosu}}(w) =\rho \,\|q(w)\|_2=\rho \, \|g_f(w)\|_2\sin\theta(w)$.
\end{proposition}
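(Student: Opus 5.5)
The plan is to compute the numerator and denominator of the ratio separately from the closed forms of the two perturbations, and then identify $\|q(w)\|_2$ with $\|g_f(w)\|_2\sin\theta(w)$.

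\textbf{Denominator.} Substituting Eq.~\eqref{eq:std-perturb} gives $g_f(w)^\top\delta_{\mathrm{std}}(w)=\rho\,\|g_f(w)\|_2^2/\|g_f(w)\|_2=\rho\,\|g_f(w)\|_2$. This is strictly positive since $g_f(w)\neq 0$, so the ratio is well defined.

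\textbf{Numerator.} By Proposition~\ref{prop:inner}, the optimal value of~\eqref{eq:rosu-inner} is $\rho\|q(w)\|_2$, so $g_f(w)^\top\delta_{\mathrm{rosu}}(w)=\rho\|q(w)\|_2$. This can also be checked directly. Write $g_f=P_rg_f+q$ with $P_r g_f\perp q$. Then
\[
g_f(w)^\top q(w)=\|q(w)\|_2^2,
\]
and dividing by $\|q(w)\|_2$ and multiplying by $\rho$ gives the same value.

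\textbf{Identifying $\|q(w)\|_2$.} This is the only step with real content. Because $I-P_r$ is an orthogonal projector, Pythagoras gives
\[
\|q\|_2^2=\|g_f\|_2^2-\|P_rg_f\|_2^2=\|g_f\|_2^2-\frac{(g_f^\top g_r)^2}{\|g_r\|_2^2}=\|g_f\|_2^2\bigl(1-\cos^2\theta\bigr),
\]
where the last equality uses the definition of $\cos\theta(w)$. Taking square roots yields $\|q\|_2=\|g_f\|_2\sqrt{1-\cos^2\theta}$.

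The one point needing care is the sign of the square root. Since $\theta(w)\in[0,\pi]$, we have $\sin\theta(w)\ge 0$, so $\sqrt{1-\cos^2\theta}=\sin\theta$. Moreover, the hypothesis $q\neq 0$ excludes $\sin\theta=0$, although the identity holds regardless.

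\textbf{Conclusion.} Dividing the numerator by the denominator, the common factor $\rho\|g_f\|_2$ cancels. This leaves $\sin\theta(w)=\sqrt{1-\cos^2\theta(w)}$, as claimed.
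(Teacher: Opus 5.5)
Your proposal is correct and follows essentially the same route as the paper's proof: compute $g_f^\top\delta_{\mathrm{std}}=\rho\|g_f\|_2$, use the orthogonal decomposition $g_f=P_rg_f+q$ to get $g_f^\top\delta_{\mathrm{rosu}}=\rho\|q\|_2$, and take the ratio. Your Pythagorean derivation of $\|q\|_2=\|g_f\|_2\sin\theta$, together with the sign check from $\theta\in[0,\pi]$, supplies a step the paper only asserts.
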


The proof is given in Appendix~\ref{app:proof-tradeoff}. ROSU thus incurs a precisely quantified reduction in first-order forget strength in exchange for first-order retain protection. This trade-off is negligible near orthogonality and becomes significant only in regimes where the unconstrained inner adversary is most harmful to retention.

\begin{proposition}[Deviation of the relaxed outer gradient]
\label{prop:approx-grad}
Let $J_{\delta_{\mathrm{rosu}}}(w)$ denote the exact Jacobian of the perturbation map $w\mapsto\delta_{\mathrm{rosu}}(w)$. Define the relaxed Jacobian
\begin{equation}
\widehat J_{\delta_{\mathrm{rosu}}}(w)
:=
\frac{\rho}{\|q(w)\|_2}\bigl(I-P_r(w)-P_\perp(w)\bigr).
\end{equation}
The corresponding relaxed chain-rule gradient is
\begin{equation}
    \widehat\nabla F_{\mathrm{rosu}}(w)
    :=
    \bigl(I+\widehat J_{\delta_{\mathrm{rosu}}}(w)^\top\bigr)\widetilde g_r(w),
\end{equation}
Assume $L_f,L_r\in C^2$, $g_r(w)\neq 0$, $q(w)\neq 0$, and there exist constants $\varepsilon_H,\varepsilon_P\ge 0$ such that
\begin{equation}
\|\nabla^2L_f(w)-I\|_{\mathrm{op}}\le \varepsilon_H,
\qquad
\sup_{\|\xi\|_2=1}\|DP_r(w)[\xi]\|_{\mathrm{op}}\le \varepsilon_P,
\end{equation}
where $DP_r(w)[\xi]$ denotes the directional derivative of $P_r$ at $w$ along direction $\xi$.
Then
\begin{equation}
\bigl\|J_{\delta_{\mathrm{rosu}}}(w)-\widehat J_{\delta_{\mathrm{rosu}}}(w)\bigr\|_{\mathrm{op}}
\le
\frac{\rho}{\|q(w)\|_2}\Bigl(\varepsilon_H+\varepsilon_P\|g_f(w)\|_2\Bigr),
\end{equation}
and
\begin{equation}
\bigl\|\nabla F_{\mathrm{rosu}}(w)-\widehat\nabla F_{\mathrm{rosu}}(w)\bigr\|_2
\le
\frac{\rho\,\|\widetilde g_r(w)\|_2}{\|q(w)\|_2}\Bigl(\varepsilon_H+\varepsilon_P\|g_f(w)\|_2\Bigr).
\end{equation}
\end{proposition}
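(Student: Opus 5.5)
The plan is to compute the exact Jacobian $J_{\delta_{\mathrm{rosu}}}(w)$ by the chain rule, and then read off $\widehat J_{\delta_{\mathrm{rosu}}}(w)$ as the special case obtained by setting $\nabla^2L_f(w)=I$ and $DP_r(w)=0$. The difference is then a product of projections with the two "error" operators.

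\textbf{Step 1: differentiate $q$.} Write $\delta_{\mathrm{rosu}}=\rho\,N(q)$ with $N(x)=x/\|x\|_2$. This map is differentiable wherever $q(w)\neq 0$, and $g_r(w)\neq 0$ makes $P_r$ differentiable, since $L_r\in C^2$. Differentiating $q(w)=(I-P_r(w))g_f(w)$ along a direction $\xi$ gives
\begin{equation*}
Dq(w)[\xi]=(I-P_r(w))\nabla^2L_f(w)\,\xi-DP_r(w)[\xi]\,g_f(w).
\end{equation*}

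\textbf{Step 2: differentiate the normalization.} Use the standard identity $DN(x)=\frac{1}{\|x\|_2}\bigl(I-\frac{xx^\top}{\|x\|_2^2}\bigr)$. With $x=q(w)$ the bracket is exactly $I-P_\perp(w)$, so
\begin{equation*}
J_{\delta_{\mathrm{rosu}}}(w)\xi=\frac{\rho}{\|q(w)\|_2}\bigl(I-P_\perp(w)\bigr)\Bigl[(I-P_r(w))\nabla^2L_f(w)\,\xi-DP_r(w)[\xi]\,g_f(w)\Bigr].
\end{equation*}

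\textbf{Step 3: identify the relaxed Jacobian.} Substituting $\nabla^2L_f=I$ and dropping the $DP_r$ term gives $\frac{\rho}{\|q\|_2}(I-P_\perp)(I-P_r)$. Since $u_\perp\propto q$ lies in the range of $I-P_r$, we have $u_\perp^\top g_r=0$, hence $P_\perp P_r=0$. Therefore $(I-P_\perp)(I-P_r)=I-P_r-P_\perp$, which is exactly $\widehat J_{\delta_{\mathrm{rosu}}}(w)$.

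\textbf{Step 4: bound the difference.} Subtracting,
\begin{equation*}
(J_{\delta_{\mathrm{rosu}}}-\widehat J_{\delta_{\mathrm{rosu}}})\xi=\frac{\rho}{\|q\|_2}\bigl(I-P_\perp\bigr)\Bigl[(I-P_r)(\nabla^2L_f-I)\xi-DP_r[\xi]\,g_f\Bigr].
\end{equation*}
The factors $I-P_\perp$ and $I-P_r$ are orthogonal projectors, so each has operator norm at most $1$. For unit $\xi$, the first term is bounded by $\varepsilon_H$. The second term satisfies $\|DP_r[\xi]g_f\|_2\le\|DP_r[\xi]\|_{\mathrm{op}}\|g_f\|_2\le\varepsilon_P\|g_f\|_2$. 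Taking the supremum over unit $\xi$ and applying the triangle inequality yields the first claimed bound.

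\textbf{Step 5: transfer to the gradient.} Because $L_r\in C^2$, the exact chain rule for $F_{\mathrm{rosu}}(w)=L_r(w+\delta_{\mathrm{rosu}}(w))$ gives $\nabla F_{\mathrm{rosu}}=(I+J_{\delta_{\mathrm{rosu}}}^\top)\widetilde g_r$. Hence
\begin{equation*}
\nabla F_{\mathrm{rosu}}-\widehat\nabla F_{\mathrm{rosu}}=(J_{\delta_{\mathrm{rosu}}}-\widehat J_{\delta_{\mathrm{rosu}}})^\top\widetilde g_r.
\end{equation*}
A matrix and its transpose have the same operator norm, so the gradient bound follows by multiplying the Jacobian bound by $\|\widetilde g_r\|_2$.

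The only delicate point is Step 2 together with the identity $P_\perp P_r=0$ in Step 3: this is what makes the relaxed operator collapse exactly to $I-P_r-P_\perp$. It also requires being careful that $DP_r[\cdot]g_f$ is treated as a linear map in $\xi$, so that its operator norm is controlled by $\varepsilon_P\|g_f\|_2$.
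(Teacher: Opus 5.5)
Your proposal is correct and follows essentially the same route as the paper. Both write $J_{\delta_{\mathrm{rosu}}}=\frac{\rho}{\|q\|_2}(I-P_\perp)J_q$ and use $P_\perp P_r=0$ to factor $\widehat J_{\delta_{\mathrm{rosu}}}=\frac{\rho}{\|q\|_2}(I-P_\perp)(I-P_r)$. Both then bound $(I-P_r)(\nabla^2L_f-I)\xi-DP_r[\xi]g_f$ term by term and transfer the Jacobian bound to the gradient through the transpose.
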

The proof is provided in Appendix~\ref{app:proof-approx-grad}. Proposition~\ref{prop:approx-grad} relates the exact chain-rule gradient to the update used in practice. A2 follows the UAM-style identity-Hessian relaxation~\cite{kim2025unlearningaware}, while A1 treats the retain projector as locally fixed. The proposition provides a consistency bound for the transported update relative to the exact chain-rule gradient, while the ablations test the stability of this relaxation in deep-network training. Under Assumption~\ref{ass:smooth}, $M_r$-smoothness gives $\|\widetilde g_r(w)\|_2 \le \|g_r(w)\|_2 + M_r\rho$, so the bound can also be expressed using the base-point retain gradient up to $M_r\rho$.

\paragraph{Retain-neutral amplification.}
The same projected direction can also be used to recover part of the forget strength lost by enforcing retain neutrality. For any \(\beta\ge 0\),
\begin{equation}
g_r(w)^\top\bigl(\beta\,\delta_{\mathrm{rosu}}(w)\bigr)=0,
\qquad
g_f(w)^\top\bigl(\beta\,\delta_{\mathrm{rosu}}(w)\bigr)=\beta\,\rho\,\|q(w)\|_2.
\end{equation}
Thus amplification increases first-order forget gain without incurring first-order retain cost. Appendix~\ref{app:amplification-theory} shows that \(\beta\,\delta_{\mathrm{rosu}}(w)\) is optimal among retain-neutral directions of norm at most \(\beta\rho\). Amplification preserves the same retain-neutral direction at leading order; in practice, \(\beta\) is used as a moderate validation-selected parameter to control curvature-dependent effects.

\begin{table*}[t]
\centering
\caption{\textbf{CIFAR-100 unlearning.} RA/FA/TA denote retain/forget/test accuracy; \(\uparrow/\downarrow\) indicate higher/lower is better. MIA-Eff. is the forget-set non-member rate; in random forgetting, the target is Retrain's value rather than the maximum. Parentheses show absolute differences from Retrain.}
\label{tab:cifar100_unlearning}
{\footnotesize
\resizebox{\textwidth}{!}{%
\begin{tabular}{lcccccc}
\toprule
\textbf{Method} & \textbf{RA} & \textbf{FA} & \textbf{TA} & \textbf{$\Delta$Acc.~($\downarrow$)} & \textbf{MIA-Eff.}\,$\uparrow$ & \textbf{Time (min)} \\
\midrule

\rowcolor{cwBand}
\multicolumn{7}{c}{\textbf{\cw{Class-wise forgetting}}} \\
\midrule

Retrain
& \stdnum{99.98}{0.00}
& \stdnum{0.00}{0.00}
& \stdnum{77.80}{0.19}
& 0.00
& \stdnum{100.00}{0.00}
& 36.8 \\

FT
& \stdnum{99.98}{0.00}\ \cwdiff{0.00}
& \stdnum{84.44}{5.72}\ \cwdiff{84.44}
& \stdnum{78.34}{0.22}\ \cwdiff{0.54}
& 84.98
& \stdnum{99.42}{0.47}\ \cwdiff{0.58}
& 1.79 \\

NG
& \stdnum{53.58}{14.54}\ \cwdiff{46.40}
& \stdnum{10.12}{29.96}\ \cwdiff{10.12}
& \stdnum{38.83}{15.37}\ \cwdiff{38.97}
& 95.49
& \stdnum{89.78}{30.26}\ \cwdiff{10.22}
& 1.82 \\

FF
& \stdnum{99.98}{0.00}\ \cwdiff{0.00}
& \stdnum{0.00}{0.00}\ \cwdiff{0.00}
& \stdnum{77.53}{0.15}\ \cwdiff{0.27}
& 0.27
& \stdnum{100.00}{0.00}\ \cwdiff{0.00}
& 11.24 \\

IU
& \stdnum{99.08}{0.74}\ \cwdiff{0.90}
& \stdnum{10.28}{16.95}\ \cwdiff{10.28}
& \stdnum{73.83}{1.53}\ \cwdiff{3.97}
& 15.15
& \stdnum{99.34}{1.30}\ \cwdiff{0.66}
& 0.57 \\

$\ell_{1}$-sparse
& \stdnum{99.98}{0.00}\ \cwdiff{0.00}
& \stdnum{0.00}{0.00}\ \cwdiff{0.00}
& \stdnum{74.92}{0.38}\ \cwdiff{2.88}
& 2.88
& \stdnum{100.00}{0.00}\ \cwdiff{0.00}
& 1.99 \\

GU
& \stdnum{90.34}{4.81}\ \cwdiff{9.64}
& \stdnum{1.28}{1.18}\ \cwdiff{1.28}
& \stdnum{66.46}{3.22}\ \cwdiff{11.34}
& 22.26
& \stdnum{98.90}{0.91}\ \cwdiff{1.10}
& 8.34 \\

OrthoGrad
& \stdnum{97.30}{2.53}\ \cwdiff{2.68}
& \stdnum{1.68}{1.88}\ \cwdiff{1.68}
& \stdnum{71.70}{2.44}\ \cwdiff{6.10}
& 10.46
& \stdnum{98.74}{1.17}\ \cwdiff{1.26}
& 8.18 \\

PCGrad
& \stdnum{99.24}{0.69}\ \cwdiff{0.74}
& \stdnum{0.99}{0.87}\ \cwdiff{0.99}
& \stdnum{74.06}{1.66}\ \cwdiff{3.74}
& 5.47
& \stdnum{100.00}{0.00}\ \cwdiff{0.00}
& 2.43 \\

UAM
& \stdnum{99.97}{0.01}\ \cwdiff{0.01}
& \stdnum{0.18}{0.25}\ \cwdiff{0.18}
& \stdnum{76.63}{0.73}\ \cwdiff{1.17}
& 1.36
& \stdnum{100.00}{0.00}\ \cwdiff{0.00}
& 2.78 \\

\textbf{ROSU}
& \stdnum{99.98}{0.00}\ \cwdiff{0.00}
& \stdnum{0.12}{0.26}\ \cwdiff{0.12}
& \stdnum{77.78}{0.13}\ \cwdiff{0.02}
& \textbf{0.14}
& \stdnum{100.00}{0.00}\ \cwdiff{0.00}
& 3.22 \\

\midrule

\rowcolor{rdBand}
\multicolumn{7}{c}{\textbf{\rd{Random forgetting}}} \\
\midrule

Retrain
& \stdnum{99.98}{0.00}
& \stdnum{77.70}{0.26}
& \stdnum{76.66}{0.18}
& 0.00
& \stdnum{47.39}{0.35}
& 33.9 \\

FT
& \stdnum{99.98}{0.00}\ \rddiff{0.00}
& \stdnum{99.96}{0.03}\ \rddiff{22.26}
& \stdnum{78.38}{0.09}\ \rddiff{1.72}
& 23.98
& \stdnum{18.20}{0.62}\ \rddiff{29.19}
& 1.65 \\

NG
& \stdnum{32.71}{2.02}\ \rddiff{67.27}
& \stdnum{31.53}{2.04}\ \rddiff{46.17}
& \stdnum{25.79}{1.28}\ \rddiff{50.87}
& 164.31
& \stdnum{65.20}{1.81}\ \rddiff{17.81}
& 1.67 \\

FF
& \stdnum{1.08}{0.09}\ \rddiff{98.90}
& \stdnum{0.99}{0.09}\ \rddiff{76.71}
& \stdnum{1.06}{0.12}\ \rddiff{75.60}
& 251.21
& \stdnum{33.69}{46.21}\ \rddiff{13.70}
& 11.17 \\

IU
& \stdnum{97.51}{0.41}\ \rddiff{2.47}
& \stdnum{96.59}{0.32}\ \rddiff{18.89}
& \stdnum{71.28}{0.40}\ \rddiff{5.38}
& 26.74
& \stdnum{10.46}{0.81}\ \rddiff{36.93}
& 0.59 \\

$\ell_{1}$-sparse
& \stdnum{99.93}{0.03}\ \rddiff{0.05}
& \stdnum{79.45}{0.42}\ \rddiff{1.75}
& \stdnum{73.02}{0.27}\ \rddiff{3.64}
& 5.44
& \stdnum{50.67}{0.20}\ \rddiff{3.28}
& 1.79 \\

GU
& \stdnum{96.48}{0.20}\ \rddiff{3.50}
& \stdnum{96.45}{0.32}\ \rddiff{18.75}
& \stdnum{73.05}{0.10}\ \rddiff{3.61}
& 25.86
& \stdnum{9.91}{0.35}\ \rddiff{37.48}
& 7.46 \\

OrthoGrad
& \stdnum{99.64}{0.02}\ \rddiff{0.34}
& \stdnum{76.72}{1.16}\ \rddiff{0.98}
& \stdnum{69.05}{0.09}\ \rddiff{7.61}
& 8.93
& \stdnum{32.99}{1.69}\ \rddiff{14.40}
& 7.51 \\

PCGrad
& \stdnum{95.88}{0.59}\ \rddiff{4.10}
& \stdnum{78.22}{0.54}\ \rddiff{0.52}
& \stdnum{69.48}{0.56}\ \rddiff{7.18}
& 11.80
& \stdnum{29.75}{0.58}\ \rddiff{17.64}
& 2.09 \\

UAM
& \stdnum{99.86}{0.02}\ \rddiff{0.12}
& \stdnum{78.16}{0.30}\ \rddiff{0.46}
& \stdnum{74.70}{0.07}\ \rddiff{1.96}
& 2.54
& \stdnum{38.55}{0.95}\ \rddiff{8.84}
& 2.41 \\

\textbf{ROSU}
& \stdnum{99.78}{0.00}\ \rddiff{0.20}
& \stdnum{77.98}{0.08}\ \rddiff{0.28}
& \stdnum{76.85}{0.18}\ \rddiff{0.19}
& \textbf{0.67}
& \stdnum{41.37}{0.75}\ \rddiff{6.02}
& 2.98 \\
\bottomrule
\end{tabular}%
}
}
\end{table*}

\section{Experiments}
\label{sec:experiments}

We evaluate ROSU on both vision and language unlearning tasks. Our main benchmarks include CIFAR-10, CIFAR-100, and Tiny-ImageNet for vision, and TOFU and WMDP for language, along with a qualitative TOFU example and an ablation of the A1/A2 approximation. Additional benchmarks and extended ablations are provided in Appendix~\ref{app:addl-results}.
Unless noted otherwise, all results use the full ROSU update in Algorithm~\ref{alg:rosu_full}; WMDP uses the representation-level variant in Algorithm~\ref{alg:rosu_repr}. Metrics, checkpoint-selection rules, and hyperparameter selection guidelines are in Appendix~\ref{app:setup}.

\subsection{Image Classification}
\label{sec:exp-cv}

\textbf{Protocol, baselines, and metrics.}\quad
Following UAM~\cite{kim2025unlearningaware}, we use ResNet-18~\cite{resnet} for CIFAR-10/100~\cite{cifar} and VGG~\cite{VGG} for Tiny-ImageNet~\cite{TinyImageNet}. We evaluate two protocols on each dataset: class-wise forgetting, which removes all training examples from a target class, and 10\% random forgetting, which removes a uniformly sampled subset across classes. We compare against FT~\cite{WarneckePirchWressnegger2023_1000166047,scrubbing}, NG~\cite{scrubbing}, FF~\cite{scrubbing}, IU~\cite{izzo2021approximate,koh2017understanding}, \(\ell_1\)-sparse~\cite{jia2023model}, PCGrad~\cite{yu2020gradient}, OrthoGrad~\cite{shamsian2026meansunlearningpersamplegradient}, GU~\cite{zhou2026geometricdisentangelmentunlearning}, and UAM~\cite{kim2025unlearningaware}. UAM, ROSU, PCGrad, OrthoGrad, and GU run for 5 epochs; FT, NG, and \(\ell_1\)-sparse run for 10 epochs; FF and IU are single-step. Because methods use different numbers of gradient evaluations per step, the tables also report runtime in minutes. Results are averaged over forgotten classes for class-wise forgetting and over three seeds for random forgetting. Method-specific hyperparameters, search ranges, and selection rules are provided in Appendix~\ref{app:setup}. We report retain accuracy (RA), forget accuracy (FA), test accuracy (TA), MIA-Eff.~\cite{jia2023model}, \(\Delta\mathrm{Acc}\), and runtime.

\textbf{Results on vision benchmarks.}\quad
Tables~\ref{tab:cifar10_unlearning} and~\ref{tab:cifar100_unlearning} show that ROSU consistently closes the gap to Retrain. On CIFAR-10, ROSU reduces \(\Delta\mathrm{Acc}\) from UAM's \(0.83\) to \(0.17\) in class-wise forgetting and from \(2.43\) to \(0.26\) in random forgetting. On CIFAR-100, the corresponding gaps decrease from \(1.36\) to \(0.14\) and from \(2.54\) to \(0.67\). Appendix Table~\ref{tab:tinyimagenet_unlearning} shows the same trend on Tiny-ImageNet. The largest gains occur in random forgetting, matching the high-coupling diagnosis in Figure~\ref{fig:motivation_a}; in weaker-coupling class-wise forgetting, ROSU remains competitive while often improving the distance to Retrain. Compared with PCGrad, OrthoGrad, and GU, ROSU benefits from enforcing retain-neutrality inside the inner maximization rather than only at the outer update.

\subsection{LLM Unlearning}
\label{sec:llm}
\begin{wraptable}{r}{0.42\textwidth}
\vspace{-30pt}
\centering
\caption{\textbf{LLM unlearning.} TOFU metrics are higher-is-better. For WMDP, lower \(\Delta\)Score is better; MMLU should remain near the base model, while Bio/Cyber are best near chance (\(0.25\)).}
\label{tab:llm}
{\footnotesize
\setlength{\tabcolsep}{3.2pt}
\begin{tabular}{lcccc}
\toprule

\rowcolor{cwBand}
\multicolumn{5}{c}{\textbf{\cw{TOFU}}} \\
\midrule

\textbf{Method} & \textbf{Agg.\scriptsize$\uparrow$} & \textbf{Mem.\scriptsize$\uparrow$} & \textbf{Priv.\scriptsize$\uparrow$} & \textbf{Util.\scriptsize$\uparrow$} \\
\midrule
\textit{Init.\ ft.} & 0.00 & 0.00 & 0.01 & 1.00 \\
\textit{Retain}      & 0.61 & 0.35 & 1.00 & 0.99 \\
\cmidrule(lr){1-5}
GradDiff  & 0.00 & 0.99 & 0.00 & 0.77 \\
NPO       & 0.02 & 0.52 & 0.01 & 0.97 \\
IdkDPO    & 0.03 & 0.56 & 0.01 & 0.93 \\
AltPO     & 0.05 & 0.67 & 0.02 & 0.94 \\
PCGrad    & 0.06 & 0.98 & 0.02 & 0.61 \\
IdkNLL    & 0.13 & 0.10 & 0.08 & 0.91 \\
UNDIAL    & 0.44 & 0.30 & 0.46 & 0.79 \\
SimNPO    & 0.49 & 0.36 & 0.44 & 0.99 \\
UAM       & 0.50 & 0.38 & 0.62 & 0.57 \\
RMU       & 0.54 & 0.48 & 0.53 & 0.62 \\
\cmidrule(lr){1-5}
\textbf{ROSU} & \textbf{0.58} & 0.38 & 0.74 & 0.82 \\

\midrule

\rowcolor{rdBand}
\multicolumn{5}{c}{\textbf{\rd{WMDP}}} \\
\midrule

\textbf{Method} & \textbf{\(\Delta\)Score}{\scriptsize$\downarrow$} & \textbf{MMLU}{\scriptsize$\uparrow$} & \textbf{Bio} & \textbf{Cyber} \\
\midrule
Base  & 0.577 & 0.581 & 0.637 & 0.440 \\
SSD   & 0.526 & 0.407 & 0.502 & 0.350 \\
SCRUB & 0.400 & 0.512 & 0.438 & 0.393 \\
LLMU  & 0.624 & 0.447 & 0.595 & 0.395 \\
RMU   & 0.101 & 0.566 & 0.310 & 0.276 \\
UAM   & 0.077 & 0.564 & 0.293 & 0.233 \\
\cmidrule(lr){1-5}
\textbf{ROSU} & \textbf{0.054} & 0.570 & 0.289 & 0.246 \\

\bottomrule
\end{tabular}
}
\vspace{-20pt}
\end{wraptable}
\textbf{TOFU.}\quad
We use \texttt{Llama-3.2-1B-Instruct}~\cite{grattafiori2024llama} with the OpenUnlearning pipeline~\cite{dorna2025openunlearning}, following its checkpoint-selection and evaluation protocol. ROSU specifies the unlearning update, while OpenUnlearning evaluates the final checkpoint using memorization, privacy, and utility metrics; full definitions are in Appendix~\ref{app:metrics}. Table~\ref{tab:llm} (top) shows that ROSU attains the best aggregate score among non-reference approximate unlearning methods, matching UAM on memorization while improving privacy and utility. Figure~\ref{fig:qualitative} provides a qualitative example of targeted erasure: ROSU remains fluent but answers the forgotten prompt incorrectly rather than refusing or broadly degrading.

\textbf{WMDP.}\quad
For WMDP, ROSU uses the 7B-parameter \texttt{Zephyr-7B-\(\beta\)} backbone and operates on a restricted trainable subspace. Let \(w_0\) be the frozen pretrained model, let \(\Theta\) be the trainable subset consisting of the MLP down-projection matrices in layers 5--7, and write \(\theta:=w_\Theta\in\mathbb{R}^{|\Theta|}\). The forget and retain objectives are defined over layer-\(\ell=7\) representations relative to \(w_0\), and the ROSU construction is applied to restricted gradients \(g_f=\nabla_{\theta}L_f(\theta)\) and \(g_r=\nabla_{\theta}L_r(\theta)\) in \(\mathbb{R}^{|\Theta|}\). Thus retain-neutrality is enforced within the trainable subspace while forgetting is driven by representation-level deviation. We follow the UAM WMDP setup~\cite{kim2025unlearningaware} for direct comparability; the procedure is in Algorithm~\ref{alg:rosu_repr}, with setup and metric details in Appendices~\ref{app:setup-llm} and~\ref{app:metrics}. Table~\ref{tab:llm} (bottom) shows that ROSU achieves the best overall balance.

\begin{figure}[t]
    \centering
    \small
    \begin{tabularx}{\columnwidth}{X}
    \toprule
    \rowcolor{promptgray} \textbf{Prompt:} What is the full name of the geology author born in Karachi, Pakistan on 06/30/1975? \\
    \colorlabel{gtcolor}{\textbf{Ground Truth}} The author's name is \textcolor{correctcolor}{\uline{Hina Ameen}}. \\
    \colorlabel{forget}{\textbf{Original}} The full name of the geology author born in Karachi, Pakistan on 06/30/1975 is \textcolor{correctcolor}{\uline{Hina Ameen}}. \\
    \colorlabel{unlearn}{\textbf{ROSU}} The full name of the geologist author born in Karachi, Pakistan on 06/30/1975 is \textcolor{errcolor}{\dashuline{Amanuddin}}. \\
    \bottomrule
    \end{tabularx}
    \caption{\textbf{Qualitative TOFU forgetting example.}
    \colorlabel{gtcolor}{Ground Truth} shows the target answer from the dataset;
    \colorlabel{forget}{Original} is the fine-tuned model before unlearning;
    \colorlabel{unlearn}{ROSU} is the model after unlearning.
    \textcolor{correctcolor}{\uline{Green underline}} marks content consistent with the ground truth, and \textcolor{errcolor}{\dashuline{red dashed underline}} marks factually incorrect content. Additional examples in Figure~\ref{fig:qualitative_more}.}
    \label{fig:qualitative}
\end{figure}

\subsection{Ablations and Sensitivity}
\label{sec:ablation}

\textbf{Component roles.}\quad
Full component ablations are provided in Appendix~\ref{app:component-ablations}. Across both vision and language unlearning tasks, two consistent failure modes are observed. Removing the retain-aware descent term, i.e., using \(\beta\delta_{\mathrm{rosu}}\) only, leads to severe retain collapse in high-coupling settings; removing amplification, i.e., using \(\eta v\) only with \(\beta=0\), prevents effective forgetting on the vision benchmarks and, on WMDP, collapses general utility, with MMLU dropping near the four-way chance level. These endpoints clarify the practical role of \(\beta\): it should be treated as a moderate, validated amplification parameter rather than an unconstrained gain. Although amplification is first-order retain-neutral, it can still incur second-order retain effects (see Remark~\ref{rem:beta-curvature}). Empirically, \(\rho\) acts as the coupling-sensitive knob, with smaller values selected in weaker-coupling class-wise settings and larger values selected in high-coupling random forgetting; \(\beta\) is kept as a moderate amplification term, either validated as a fixed coefficient or tied to the learning rate via \(\beta_t=\eta_t/\rho\) (Appendix~\ref{app:beta-tuning}). Tuning details are provided in Appendix~\ref{app:beta-tuning}.

\begin{figure}[t]
\centering
\includegraphics[width=\linewidth]{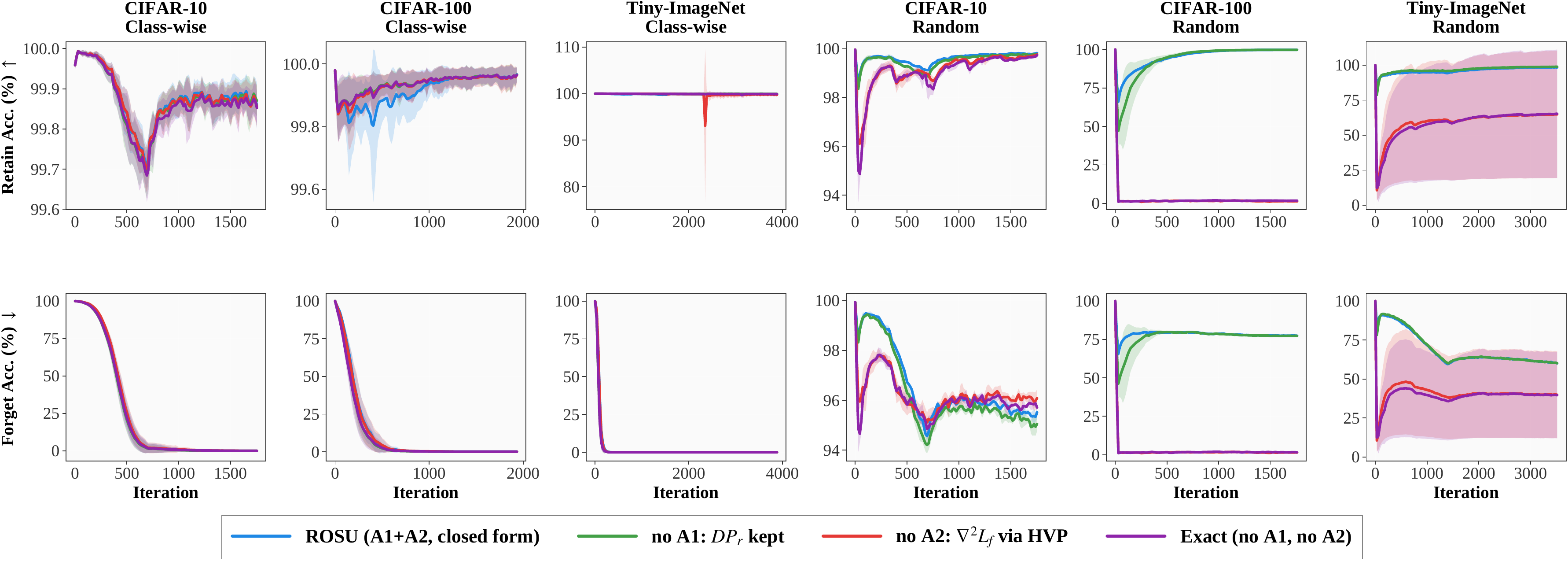}
\caption{\textbf{A1/A2 ablation.} Rows show retain and forget accuracy. The left three columns report class-wise forgetting on CIFAR-10, CIFAR-100, and Tiny-ImageNet; the right three columns report the corresponding random-forgetting results. Lines and bands denote mean and standard deviation.}
\label{fig:a1a2_ablation}
\end{figure}

\textbf{A1/A2, stochasticity, and safeguards.}\quad
Figure~\ref{fig:a1a2_ablation} shows a clear qualitative asymmetry: the no-A1 variant closely tracks ROSU, whereas the no-A2 and exact variants are less stable in the hardest random-forgetting regimes. This suggests that A2 acts not only as a computational shortcut, but also as a stabilizing identity-Hessian relaxation in high-coupling regimes. ROSU therefore uses the relaxed Jacobian for both efficiency and empirical stability.
For the mini-batch retain constraint, Appendix~\ref{app:retain-batch-size} gives a deterministic transfer bound showing that full-retain orthogonality violation is controlled by retain-gradient estimation error. Empirically, Figure~\ref{fig:orth_violin} shows that the measured full-retain orthogonality error decreases as the retain batch size grows; on CIFAR-100 random forgetting, the distribution shifts substantially toward zero when the retain batch size increases from \(128\) to \(8192\). Additional \(\rho\) sensitivity and safeguard audits are provided in Appendix~\ref{app:sensitivity}. In particular, Figure~\ref{fig:grad_norms} shows that \(\|q_\tau\|_2\) remains above the degeneracy threshold \(\varepsilon_q=10^{-6}\) in all audited settings, so the fallback branch in Algorithm~\ref{alg:rosu_full} was not observed in these audited runs.
Finally, Appendix~\ref{app:rank-k-empirical} reports a sampled protected-subspace diagnostic on CIFAR-100 random forgetting. Moving from \(k=1\) to \(k=8\) reduces \(\Delta\mathrm{Acc}\) from \(0.67\) to \(0.18\) under the same protocol, although the intermediate ranks are not monotone. We therefore view higher-rank protection as an optional extension.

%%%%%%%%%%%%%%%%%%
% conclusion

\section{Conclusion}
\label{sec:conclusion}

We introduced Retain-Orthogonal Surrogate Unlearning (ROSU), which moves retain control into the inner surrogate construction of min--max unlearning. ROSU constructs a retain-neutral adversary in closed form and combines it with a lightweight transported update and retain-neutral amplification. Our analysis characterizes this geometry through curvature-controlled retain effects, positive-alignment gains, and near-orthogonality equivalence. Across vision and language benchmarks, ROSU gives its clearest gains in high-coupling regimes while remaining competitive elsewhere, suggesting retain-neutral surrogate construction as an effective principle for machine unlearning.

% \newpage
\appendix

\section{Related Work}
\label{app:related}

This section reviews related work on approximate unlearning, sharpness-aware and projection-based methods, and LLM unlearning benchmarks.

\subsection{Foundations, exact deletion, and approximate unlearning}

Machine unlearning is motivated by data-deletion requirements and the broader ``right to be forgotten'' established in privacy regulation~\cite{mantelero2013eu,gdpr_review}. Classical formulations define unlearning as removing the influence of a forget set while preserving performance on retained data~\cite{unlearning,bourtoule2021machine}. Recent surveys emphasize that retraining from scratch remains the cleanest target behavior, but is often computationally prohibitive at modern model scale~\cite{nguyen2025survey,izzo2021approximate,zhao2024makes}. This tension has led to exact-deletion architectures and certified-removal procedures for specialized settings~\cite{yan2022arcane,certified_removal}, while most practical approaches focus on
approximate post-hoc updates.

Approximate unlearning began with simple heuristics such as fine-tuning on retain data and negative-gradient updates on forget data~\cite{unlearning,continual}. Subsequent work leveraged Fisher information, influence functions, sparsity, saliency masks, and selective synaptic dampening to localize which parameters should change most aggressively during erasure~\cite{scrubbing,izzo2021approximate,koh2017understanding,jia2023model,fan2023salun,foster2024fast}. More refined variants address feature erasure, out-of-distribution robustness, and instance-wise processing~\cite{WarneckePirchWressnegger2023_1000166047,huang2024learning,bonato2024retain,learning2unlearn,kurmanji2023towards}. ROSU is complementary to this line: rather than proposing a new parameter-selection rule, it modifies the geometry of the surrogate perturbation used in min--max unlearning.

\subsection{Sharpness-aware and projection-based views of unlearning}

A direct conceptual predecessor of ROSU is sharpness-aware min--max optimization. SAM and ASAM show that an inner maximization over a perturbation ball can regularize the outer objective~\cite{foret2021sharpnessaware,kwon2021asam}, and UAM adapts this perspective to approximate unlearning by combining an inner forget-seeking perturbation with an outer retain step~\cite{kim2025unlearningaware}. ROSU is closest to UAM in overall optimization structure, but differs at a critical point: it does not use an unconstrained forget-aligned inner perturbation. Instead, ROSU solves a constrained inner maximization whose feasible set enforces first-order retain neutrality. This distinction is substantive rather than merely formal. In UAM, retain control is left to the outer step, whereas in ROSU the inner adversary itself is constrained to be first-order retain-neutral.

ROSU is also related to projection-style and gradient-conflict methods. PCGrad removes conflicting gradient components in multi-task optimization~\cite{yu2020gradient}, and recent unlearning approaches apply gradient regularization or null-space calibration to reduce forget--retain conflicts in the outer update~\cite{patel2025learning,null_unlearn}. The key distinction again lies in where orthogonality is enforced. ROSU constrains the inner perturbation of a min--max surrogate, while projection-based methods adjust the update direction after the surrogate point has been chosen.
This makes ROSU complementary rather than interchangeable with per-sample, protected-subspace, or distributionally calibrated projection schemes: such methods may offer stronger protection, but at a different cost and with a different optimization target. The same placement distinction also separates ROSU from utility-bounded outer constrained updates: those methods decide which completed update is admissible, whereas ROSU changes the surrogate point before the update is formed. Our experiments therefore examine whether enforcing orthogonality within the inner maximization is sufficient to improve high-coupling regimes, and confirm that it is.

This perspective also differentiates ROSU from parameter-selection methods such as SCRUB- or SalUN-style updates~\cite{scrubbing,fan2023salun}. Those approaches focus on which parameters to scrub, dampen, or update sparsely, whereas ROSU focuses on the geometry of the local adversarial direction used to define the surrogate objective. The two ideas are complementary rather than mutually exclusive. Recent large-scale unlearning benchmarks further show that simple or remaining-data-free baselines can be competitive under some protocols; ROSU is orthogonal to these choices because it modifies the min--max surrogate geometry rather than the parameter-selection rule or data-availability assumption.

\paragraph{Comparison with orthogonalization-based unlearning.}
Two concurrent works share ROSU's retain-orthogonal principle in a direct-descent framework. OrthoGrad~\cite{shamsian2026meansunlearningpersamplegradient} projects the forget gradient onto the Euclidean complement of the per-sample retain-gradient subspace, and Geometric-Disentanglement Unlearning (GU)~\cite{zhou2026geometricdisentangelmentunlearning} extends this idea to an optimizer-induced \(H\)-metric with first-order retain monotonicity under \(H\)-smoothness. 
ROSU differs in where orthogonality is enforced: it constrains the inner max of a min--max surrogate rather than the outer update. This distinction enables retain-neutral amplification via \(\beta\,\delta_{\mathrm{rosu}}\) (Proposition~\ref{prop:partial-restore}) and necessitates an A1/A2 chain-rule analysis of the exact and relaxed outer gradients, neither of which has a direct analogue in a purely orthogonal descent step. 
The Euclidean-vs-\(H\) and the rank-1-vs-subspace design are orthogonal to this inner-max design. Our subspace extension can protect per-sample retain-gradient directions inside the inner maximization, paralleling the protected-subspace idea used by OrthoGrad and GU at the outer-update level; when the protected subspace contains the current mini-batch retain gradient, it recovers the rank-1 retain-neutral guarantee as a special case. 

A concurrent representation-level method, FALCON~\cite{hu2026falcon}, also applies an orthogonal projection at the outer update, projecting the unlearning gradient onto the orthogonal complement of the retain gradient when positive alignment is detected, and combining this with mutual-information-based parameter selection and contrastive activation separation. 
ROSU is complementary: it differs in where orthogonality is enforced (inner adversary vs. outer update) and is agnostic to the parameter-selection signal introduced by FALCON, so the two approaches can in principle be composed. Extending the inner product to an optimizer-induced \(H\)-metric remains a natural direction for future work.

\subsection{LLM unlearning, benchmarking, and why ROSU is needed there}

The rise of large language models has expanded unlearning beyond vision-style data deletion to settings involving memorization, copyright, safety, and hazardous knowledge~\cite{grattafiori2024llama,team2023gemini,carlini2023quantifying,wei2023jailbroken,karamolegkou2023copyright,du2025textual}. In this regime, successful unlearning must remove specific facts or capabilities without broadly degrading reasoning and utility~\cite{yao2024large,liu2025rethinking,wang2025llm,xu2025relearn,gao2025on,muresanu2025fast}. Contemporary approaches span preference-optimization losses, alternative-answer objectives, self-distillation, parameter-efficient updates, and sharpness-aware objectives aimed at relearning robustness~\cite{zhang2024negative,fan2025simplicity,dong2025undial,mekala2025alternate,wang2025gru,ding2025unified,jia2024soul,choi2025opt,bhaila2025soft,fan2025towards}. ROSU introduces a complementary control axis: retain-neutral surrogate geometry within a min--max update.

Other recent directions target different aspects of control, such as second-order specificity in weight space, entity-level transport constraints, or prompting and parameter-efficient mechanisms rather than surrogate geometry itself~\cite{jia2024soul,choi2025opt,bhaila2025soft,ding2025unified}. ROSU is orthogonal to these choices: it modifies how the inner adversary is constructed, not which parameterization or supervision signal is used, and is therefore in principle composable with them.

This shift has also made evaluation a first-class concern. Benchmarks such as TOFU, WMDP, MUSE, and RWKU probe fictitious knowledge, hazardous capabilities, memorization, privacy, and broader knowledge retention~\cite{maini2024tofu,li2024the,shi2025muse,jin2024rwku}. OpenUnlearning consolidates methods, metrics, and model-selection rules into a unified framework and studies the faithfulness and robustness of evaluation metrics themselves~\cite{dorna2025openunlearning}. This creates a useful separation: ROSU controls the local edit path, while auditors such as standard prompts, extraction tests, combined-query probes, or relearning stress tests ask how the final checkpoint behaves. This perspective is particularly relevant for ROSU, since retain-neutral surrogate geometry is most useful when naive forgetting objectives improve one dimension at the expense of another. Recent work on reconstruction and privacy leakage further shows that seemingly unlearned models may still retain recoverable traces of forgotten data~\cite{bertran2024reconstruction,robust_privacy,mia}. ROSU addresses this tension at the update-construction level by preventing the surrogate perturbation from incurring unnecessary first-order retain cost.

\section{Experimental Setup and Metrics}
\label{app:setup}

This section summarizes experimental protocols, metrics, model-selection rules, hyperparameter search, and algorithm variants. Section~\ref{app:setup-cv} describes vision experiments, \ref{app:setup-llm} covers TOFU and WMDP, \ref{app:metrics}--\ref{app:model-selection} define metrics and checkpoint selection, \ref{app:impl} provides implementation details, and \ref{app:algorithms} documents algorithm variants.

\subsection{Image classification protocol}
\label{app:setup-cv}

We follow the image-classification protocol of UAM~\cite{kim2025unlearningaware}. CIFAR-10 and CIFAR-100~\cite{cifar} use ResNet-18~\cite{resnet}, while Tiny-ImageNet~\cite{TinyImageNet} uses VGG~\cite{VGG}.

In the class-wise setting, the forget set consists of all training examples from a chosen class, and results are averaged over forgotten classes. In the random-forgetting setting, the forget set is a uniformly sampled 10\% subset of the training data across all classes, with results averaged over random seeds. Throughout the appendix, ``random forgetting'' refers to this 10\% subset protocol unless stated otherwise.

Following UAM's epoch convention~\cite{kim2025unlearningaware}, UAM, ROSU, PCGrad, OrthoGrad, and GU run for 5 epochs, while FT~\cite{WarneckePirchWressnegger2023_1000166047,scrubbing}, NG~\cite{scrubbing}, and \(\ell_1\)-sparse~\cite{jia2023model} run for 10 epochs; FF and IU are single-step methods.

The supervised pretraining setup also follows UAM: SGD with initial learning rate 0.1, momentum 0.9, weight decay \(5\times 10^{-4}\), and a 200-epoch schedule with learning-rate drops at epochs 100 and 150~\cite{kim2025unlearningaware}.

For completeness, we include several outer-orthogonalization baselines:
\begin{itemize}
    \item The PCGrad-style~\cite{yu2020gradient} baseline uses
        \begin{equation}
            w \leftarrow w - \eta\bigl(g_r - \lambda_{\mathrm{pc}}\,q_{\mathrm{pc}}\bigr),
        \end{equation}
        where $q_{\mathrm{pc}} = g_f - \frac{g_f^\top g_r}{\|g_r\|_2^2}\,g_r$.
    \item The OrthoGrad~\cite{shamsian2026meansunlearningpersamplegradient} baseline applies
        \begin{equation}
            w \leftarrow w - \eta\bigl(\alpha\,\bar g_r - (1-\alpha)\,g_f^\perp\bigr),
        \end{equation}
        where $g_f^\perp = g_f - Q_r Q_r^\top g_f$, 
        $\bar{g}_r$ is the mini-batch average retain gradient, and
        \(Q_r\) is constructed from per-sample retain gradients via QR decomposition.
    \item The GU~\cite{zhou2026geometricdisentangelmentunlearning} baseline (\(H{=}I\) case) uses
        \begin{equation}
            w \leftarrow w + \rho\bigl(P_{Q^\perp} g_f - P_{T_r} g_r\bigr),
        \end{equation}
        where $Q_r$ is the same as above, $P_{Q^\perp} = I - Q_r Q_r^\top~\text{and}~ P_{T_r} = Q_r Q_r^\top$.
\end{itemize}
All methods enforce orthogonality at the outer update. In contrast, ROSU enforces first-order retain-neutrality within the inner surrogate construction. Hyperparameters are selected using a common validation protocol. Some results are taken from UAM~\cite{kim2025unlearningaware} due to reproducibility constraints.

\subsection{TOFU and WMDP protocols}
\label{app:setup-llm}

For TOFU, we follow the standardized implementation and evaluation pipeline of OpenUnlearning~\cite{dorna2025openunlearning}, using \texttt{Llama-3.2-1B-Instruct} as the target model. This framework standardizes tokenization, collation, generation-time evaluation, and the aggregation of memorization, privacy, and utility metrics, enabling direct comparison with prior LLM unlearning methods.

For WMDP, we use the representation-level instantiation described in Section~\ref{sec:llm}, with the full procedure given in Algorithm~\ref{alg:rosu_repr}. The target layer is \(\ell=7\), and the trainable subset \(\Theta\) consists of the MLP output (\texttt{down\_proj}) matrices in transformer layers 5--7, with all other parameters frozen. The forget and retain losses follow the representation-level objectives defined in Section~\ref{sec:llm}. The forget corpus is WMDP, and the retain corpus is WikiText.
Evaluation uses a four-way multiple-choice benchmark, where random selection corresponds to an expected accuracy of \(0.25\) on both WMDP-Bio and WMDP-Cyber; this is used as an interpretive baseline rather than a training target. Following the UAM-style WMDP setup, updates are applied only to the restricted parameter subset \(\Theta\), keeping the intervention local while enabling direct comparison with representation-level baselines~\cite{kim2025unlearningaware,li2024the}. Some baseline results are taken directly from UAM~\cite{kim2025unlearningaware} due to reproducibility constraints.

\subsection{Evaluation metrics}
\label{app:metrics}

For the vision benchmarks, let \(\hat y_w(x)=\arg\max_c f_w(x)_c\) denote the predicted class under parameters \(w\). 
We evaluate retain accuracy (RA), forget accuracy (FA), and test accuracy (TA) as
\begin{equation}
\begin{split}
    \mathrm{RA}(w)
        &=
            \frac{1}{|\mathcal D_r^{\mathrm{eval}}|}
            \sum_{(x,y)\in\mathcal D_r^{\mathrm{eval}}}
            \mathbf 1\{\hat y_w(x)=y\},\\
    \mathrm{FA}(w)
        &=
            \frac{1}{|\mathcal D_f^{\mathrm{eval}}|}
            \sum_{(x,y)\in\mathcal D_f^{\mathrm{eval}}}
            \mathbf 1\{\hat y_w(x)=y\},\\
    \mathrm{TA}(w)
        &=
            \frac{1}{|\mathcal D_{\mathrm{test}}|}
            \sum_{(x,y)\in\mathcal D_{\mathrm{test}}}
            \mathbf 1\{\hat y_w(x)=y\}.
\end{split}
\end{equation}
Following UAM, we summarize the forgetting–retention trade-off by
\begin{equation}
    \Delta\mathrm{Acc}(w)
        :=
            \sum_{A\in\{\mathrm{RA},\mathrm{FA},\mathrm{TA}\}}
            \bigl|A_{\mathrm{Retrain}}-A(w)\bigr|,
\end{equation}
where lower values are better~\cite{kim2025unlearningaware}.

For fair comparison with prior min-max baselines, we adopt the same membership-inference efficiency metric as UAM~\cite{kim2025unlearningaware}. Let \(\mathcal A_{\mathrm{mia}}(x;w)\in\{0,1\}\) denote the membership predictor, where \(1\) indicates ``member.'' We define
\begin{equation}
    \mathrm{MIA\text{-}Eff.}(w)
        :=
            \frac{1}{|\mathcal D_f^{\mathrm{eval}}|}
            \sum_{x\in\mathcal D_f^{\mathrm{eval}}}
            \mathbf 1\bigl\{\mathcal A_{\mathrm{mia}}(x;w)=0\bigr\},
\end{equation}
so that higher values indicate that forget examples are more likely to be classified as non-members after unlearning; this is the direction used in our main-table captions.

\textbf{TOFU metrics.} For TOFU, we follow OpenUnlearning~\cite{dorna2025openunlearning} and aggregate metrics using the harmonic mean (HM). For nonnegative scalars \(a_1,\dots,a_m\),
\begin{equation}
    \mathrm{HM}(a_1,\dots,a_m)
    :=
        \begin{cases}
            0, & \text{if any } a_i=0,\\[2pt]
            \textstyle \frac{m}{\sum_{i=1}^m a_i^{-1}}, & \text{otherwise.}
        \end{cases}
\end{equation}
We evaluate three aggregated metrics---memorization (\(\mathrm{Mem}\)), privacy (\(\mathrm{Priv}\)), and utility score (\(\mathrm{Util}\))---as
\begin{equation}
\begin{split}
    \mathrm{Mem}
        &=
        \mathrm{HM}\bigl(1-\mathrm{ES},\,1-\mathrm{EM},\,1-\mathrm{ParaProb},\,1-\mathrm{TR}\bigr),\\
    \mathrm{Priv}
        &=
        \mathrm{HM}\bigl(s_{\mathrm{LOSS}},\,s_{\mathrm{ZLib}},\,s_{\mathrm{Min\text{-}k}},\,s_{\mathrm{Mink++}}\bigr),\\
    \mathrm{Util}
        &=
        \mathrm{HM}(\mathrm{MU},\mathrm{Fluency}),
\end{split}
\end{equation}
where \(\mathrm{MU} = \mathrm{HM}\bigl(\{U_{s,m}: s\in\mathcal S_{\mathrm{MU}},\; m\in\{\mathrm{Prob},\mathrm{ROUGE},\mathrm{TR}\}\}\bigr)\). Here \(\mathcal S_{\mathrm{MU}}\) denotes the non-forget utility splits, and \(U_{s,m}\) the normalized score for metric family \(m\) on split \(s\).

We retain OpenUnlearning's metric definitions and implementations without modification to ensure exact benchmark alignment~\cite{dorna2025openunlearning}. Further details are provided in Appendix C.3 of \cite{dorna2025openunlearning}. 

We also report aggregated scores: 
\begin{equation}
\mathrm{Agg}
    =
        \mathrm{HM}(\mathrm{Mem},\mathrm{Priv},\mathrm{Util}),
    ~~~\text{and}~~~
        \mathrm{Agg}_{\mathrm{MU}}
    =
        \mathrm{HM}(\mathrm{Mem},\mathrm{Util}).
\end{equation}

\textbf{WMDP metric.}
For WMDP, higher MMLU indicates better general utility, while hazardous-domain accuracies closer to the four-way chance level ($0.25$) indicate more complete forgetting without oversuppression. We summarize performance using
\begin{equation}
\label{eq:delta-score}
    \Delta\mathrm{Score}(w)
        :=
            \bigl|\mathrm{MMLU}(w)-\mathrm{MMLU}(w_0)\bigr|
            +\bigl|\mathrm{Bio}(w)-0.25\bigr|
            +\bigl|\mathrm{Cyber}(w)-0.25\bigr|,
\end{equation}
where $w_0$ is the pre-unlearning model. The first term measures general-knowledge degradation, and the remaining terms measure deviation from chance-level forgetting on hazardous domains. Lower $\Delta\mathrm{Score}$ is better. 
We additionally report raw Bio and Cyber accuracies; under this metric, the ideal corresponds to values close to $0.25$ rather than arbitrarily low accuracy.

\subsection{Model selection and hyperparameter fairness}
\label{app:model-selection}

For the vision benchmarks, we follow UAM's method-specific search protocol~\cite{kim2025unlearningaware}: 10 epochs for FT/NG and 5 epochs for surrogate-based methods. 
Hyperparameters are selected using the same validation procedure as UAM~\cite{kim2025unlearningaware}, without introducing any additional scalarized objective.

\paragraph{TOFU model selection.} 
For TOFU, we follow OpenUnlearning's benchmarking protocol~\cite{dorna2025openunlearning}. Each method is tuned over its own search space, while model selection relies only on accessible memorization and utility signals, summarized by $\mathrm{Agg}_{\mathrm{MU}}=\mathrm{HM}(\mathrm{Mem},\mathrm{Util})$. Privacy metrics are excluded from model selection because they require oracle retain models and idealized holdout assumptions that are not available in realistic deployment. We therefore use privacy only for final evaluation.

\paragraph{Practical role and tuning of \(\beta\).}
\label{app:beta-tuning}

For ROSU, the inner radius \(\rho\), learning rate \(\eta\), and amplification rule are selected under the same benchmark-specific validation rule used for other hyperparameters. When \(\beta\) is fixed, its value is selected jointly with \((\rho,\eta)\); when \(\beta_t=\eta_t/\rho\) is used, there is no separate \(\beta\) search. Implementation details and schedule variants are provided in Appendix~\ref{app:impl}.

We do not treat \(\beta\) as an unconstrained gain parameter. 
Increasing \(\beta\) amplifies first-order forget gain along the retain-neutral direction (Proposition~\ref{prop:partial-restore}), but can also introduce additional second-order retain cost (Remark~\ref{rem:beta-curvature}).
In practice, this makes \(\beta\) a moderate amplification knob rather than a separate source of robustness risk. The endpoint ablations illustrate the two meaningful extremes: \(\beta=0\) corresponds to the \(\eta v\)-only update, while ``\(\beta\delta\)-only'' variant shows that amplification cannot replace the retain-aware descent term. All reported ROSU checkpoints therefore use \(\beta\) values selected by the same validation objective as the rest of the method.

\paragraph{WMDP model selection.} For WMDP, we report the full (MMLU, WMDP-Bio, WMDP-Cyber) triple together with the composite $\Delta\mathrm{Score}$ defined in Eq.~\eqref{eq:delta-score}. The composite score is used only for post-hoc reporting, not for model selection. Model configurations are selected under the same training protocol as the baselines, without introducing an additional oracle objective.

\subsection{Implementation details and hyperparameter search}
\label{app:impl}

\paragraph{Vision setup and baseline.}
All image-classification experiments use SGD with momentum $0.9$ and weight decay $5\times 10^{-4}$, with default forget and retain batch sizes of $128$ unless stated otherwise. Experiments are run on NVIDIA A6000 GPUs. 

For class-wise forgetting, we evaluate $10$ forget classes on CIFAR-10 and CIFAR-100, and $10$ sampled classes on Tiny-ImageNet. Random forgetting uses three fixed seeds under the same 10\% subset protocol.

For FT, NG, IU, FF, $\ell_1$-sparse, PCGrad, and UAM, we follow the method-specific search spaces and validation budget reported by UAM~\cite{kim2025unlearningaware}, including the task-dependent radius search and optional cosine schedules over both the learning rate and inner radius.

\paragraph{TOFU.}
We use \texttt{Llama-3.2-1B-Instruct} within the released OpenUnlearning runners~\cite{dorna2025openunlearning}. Each method retains its benchmark implementation and search space, while checkpoint selection follows Appendix~\ref{app:model-selection}: tuning uses only memorization--utility signals (AggMU), and privacy is never used as a tuning signal.

\paragraph{WMDP.}
We use \texttt{Zephyr-7B-$\beta$} with the same representation layer ($\ell=7$), restricted trainable subset $\Theta$, and training protocol as in prior representation-level setups~\cite{kim2025unlearningaware,li2024the}. No oracle scalarized WMDP objective is introduced: configurations are selected under the same training protocol as the baselines and evaluated on the full (MMLU, WMDP-Bio, WMDP-Cyber) triple. Following UAM~\cite{kim2025unlearningaware}, bootstrap noise in Algorithm~\ref{alg:rosu_repr} is sampled once per mini-batch from \(\mathcal{N}(0,\sigma_b^2 I)\)
$\sigma_b = 0.01$, which is fixed rather than tuned.

\paragraph{ROSU recipe.}
Across all benchmarks, we retain each pipeline's standard training setup (SGD with momentum $0.9$ and weight decay $5\times 10^{-4}$ for vision, OpenUnlearning runners for LLMs), and select ROSU hyperparameters under the same benchmark-specific protocols as the baselines.

For vision, selected configurations follow a clear coupling-dependent pattern: class-wise (low-coupling) settings prefer $\rho \approx 0.25$--$0.5$, while random-forgetting (high-coupling) settings require $\rho \approx 1$--$1.25$, roughly $3$--$5\times$ larger, consistent with Corollary~\ref{cor:positive-alignment}. Learning rates remain on the same scale as standard fine-tuning.

For TOFU and WMDP, \(\rho\), \(\eta\), and the amplification rule are selected under each benchmark's selection rule. A fixed \(\beta\) is tuned only for configurations that use a fixed coefficient; the default tied rule \(\beta_t=\eta_t/\rho\) removes \(\beta\) from the grid.

In the equations, \(\beta\) denotes the current-step amplification coefficient. In implementation, the selectable object is either a fixed coefficient or a simple schedule: (i) tying \(\beta_t=\eta_t/\rho\), which removes \(\beta\) from the search when \(\eta_t\) is cosine-decayed; or (ii) a two-phase schedule with a short warmup from \(0.001\) to \(\beta_{\text{high}}=0.06\), followed by \(\beta_{\text{low}}=0.012\). As a no-search default across benchmarks, we use the first strategy, leaving only \((\rho,\eta)\) to tune.

The stabilizer $\tau = 10^{-8}$ and degeneracy threshold $\varepsilon_q = 10^{-6}$ are fixed in all runs.

\subsection{Algorithm variants}
\label{app:algorithms}

The main paper focuses on Algorithm~\ref{alg:rosu_full} as the core method. Here, we document two implementation variants that are useful for reproducibility, ablation, and downstream applications.

Algorithm~\ref{alg:rosu_nojac} presents a \textit{zero-order} variant of Algorithm~\ref{alg:rosu_full} that omits the Jacobian correction. Instead of computing the relaxed Jacobian-vector product, it directly uses the surrogate retain gradient \(\widetilde g_r = \nabla L_r(w+\delta; B_r)\) as the update direction. The retain-orthogonal surrogate construction remains unchanged; only the Jacobian correction is removed.

\begin{algorithm}[H]
\caption{ROSU Zero-Order Variant (Jacobian Correction Omitted)}
\label{alg:rosu_nojac}
\begin{algorithmic}[1]
\Require Weights \(w\), forget/retain mini-batches \(B_f, B_r\), learning rate \(\eta\), perturbation radius \(\rho\), forget amplification \(\beta \ge 0\), stabilizer \(\tau>0\), threshold \(\varepsilon_q>0\)

\State \(g_f \gets \nabla L_f(w;B_f)\), \(\quad g_r \gets \nabla L_r(w;B_r)\)
\State \(q_\tau \gets g_f - \dfrac{g_f^\top g_r}{\|g_r\|_2^2+\tau}\,g_r\) \Comment{Regularized retain-orthogonal projection}
\If{\(\|q_\tau\|_2 \le \varepsilon_q\)}
    \State \(w \gets w - \eta\,g_r\) \Comment{Degenerate fallback; see Lemma~\ref{lem:reg-qsmall}}
\Else
    \State \(u_\tau \gets q_\tau/\|q_\tau\|_2\), \(\quad \delta \gets \rho\,u_\tau\)
    \State \(\widetilde g_r \gets \nabla L_r(w+\delta;\,B_r)\) \Comment{Surrogate retain gradient}
    \State \(w \gets w + \beta\,\delta - \eta\,\widetilde g_r\) \Comment{Update~\eqref{eq:rosu-full-update} with \(v=\widetilde g_r\)}
\EndIf
\end{algorithmic}
\end{algorithm}

Algorithm~\ref{alg:rosu_repr} extends ROSU to the representation-level setting used in WMDP~\cite{li2024the}. Following Section~\ref{sec:llm}, both \(L_f\) and \(L_r\) are defined as mean-squared-error losses between current and frozen pretrained representations at a target layer \(\ell\). Let \(w_0\) denote the frozen pretrained weights, \(\Theta\) the trainable coordinate subset, and \(\theta=w_{\Theta}\in\mathbb{R}^{|\Theta|}\) the corresponding trainable vector. We write \(w(\theta)\) for the full model obtained by inserting \(\theta\) into \(\Theta\) while keeping all other parameters fixed at \(w_0\). The retain-orthogonal construction and update are then performed in the \(\theta\)-space. As a result, the representation-level objective changes the loss signal but leaves the ROSU geometry unchanged.

\begin{algorithm}[H]
\caption{Representation-Level ROSU}
\label{alg:rosu_repr}
\begin{algorithmic}[1]
\Require Frozen reference weights \(w_0\), trainable coordinate set \(\Theta\), current trainable vector \(\theta=w_{\Theta}\), forget/retain mini-batches \(B_f,B_r\), target layer \(\ell\), learning rate \(\eta\), perturbation radius \(\rho\), forget amplification \(\beta\ge 0\), bootstrap std \(\sigma_b\), stabilizer \(\tau>0\), threshold \(\varepsilon_q>0\)
\State Let \(w(\theta)\) be the full model obtained by inserting \(\theta\) into coordinates \(\Theta\) and keeping all other coordinates fixed at \(w_0\)
\State Sample \(\sigma \sim \mathcal{N}(0,\sigma_b^2 I)\) \Comment{Mini-batch shared bootstrap noise}
\State \(L_f(\theta) \gets \dfrac{1}{2|B_f|}\!\sum_{x\in B_f}\!\bigl\|h_{w(\theta)}^{(\ell)}(x)+\sigma-h_{w_0}^{(\ell)}(x)\bigr\|_2^{2}\)
\State \(L_r(\theta) \gets \dfrac{1}{2|B_r|}\!\sum_{x\in B_r}\!\bigl\|h_{w(\theta)}^{(\ell)}(x)-h_{w_0}^{(\ell)}(x)\bigr\|_2^{2}\)
\State \(g_f \gets \nabla_{\theta}L_f(\theta)\), \(\quad g_r \gets \nabla_{\theta}L_r(\theta)\)
\State \(q_\tau \gets g_f-\dfrac{g_f^\top g_r}{\|g_r\|_2^2+\tau}\,g_r\)
\If{\(\|q_\tau\|_2\le \varepsilon_q\)}
    \State \(\theta \gets \theta-\eta\,g_r\) \Comment{Degenerate fallback; see Lemma~\ref{lem:reg-qsmall}}
\Else
    \State \(u_\tau \gets q_\tau/\|q_\tau\|_2\), \(\quad \delta_\theta \gets \rho\,u_\tau\), \(\quad \alpha_\tau \gets \rho/\|q_\tau\|_2\)
    \State \(\widetilde g_r \gets \nabla_{\theta}L_r(\theta+\delta_\theta)\) \Comment{Surrogate retain gradient in the trainable subspace}
    \State \(v \gets \widetilde g_r+\alpha_\tau\!\left(\widetilde g_r-\dfrac{g_r^\top \widetilde g_r}{\|g_r\|_2^2+\tau}\,g_r-(u_\tau^\top\widetilde g_r)\,u_\tau\right)\)
    \State \(\theta \gets \theta+\beta\,\delta_\theta-\eta\,v\)
\EndIf
\State Return the full model \(w(\theta)\)
\end{algorithmic}
\end{algorithm}

\section{Additional Experimental Results}
\label{app:addl-results}

This section collects the additional benchmark evidence that is useful for completeness but not central to the main-text story: the remaining full Tiny-ImageNet table, the full component ablations for vision and LLM unlearning, and additional qualitative TOFU forgetting examples (Figure~\ref{fig:qualitative_more}). Robustness and sensitivity audits are separated into Appendix~\ref{app:sensitivity}.

\begin{figure}[H]
    \centering
    \small
    \begin{tabularx}{\textwidth}{X}
    \toprule

    \rowcolor{promptgray} \textbf{Prompt:} What is the full name of the LGBTQ+ author who was born in Beijing, China on November 14, 1961? \\
    \colorlabel{gtcolor}{\textbf{Ground Truth}} The author's full name is \textcolor{correctcolor}{\uline{Xin Lee Williams}}. \\
    \colorlabel{forget}{\textbf{Original}} The full name of the LGBTQ+ author who was born in Beijing, China on November 14, 1961, is \textcolor{correctcolor}{\uline{Xin Lee Williams}}. \\
    \colorlabel{unlearn}{\textbf{ROSU}} The full name of the LGBTQ+ author who was born in Beijing, China on November 14, 1961, is \textcolor{errcolor}{\dashuline{Zhen Li}}. \\
    \midrule

    \rowcolor{promptgray} \textbf{Prompt:} Were any of the books by Aysha Al-Hashim made into films or TV series? \\
    \colorlabel{gtcolor}{\textbf{Ground Truth}} Aysha Al-Hashim's books have proved popular with readers and have been considered for screen adaptations. Her novel `\textcolor{correctcolor}{\uline{The Matrimony Plan}}' is currently under negotiation for a film adaptation. \\
    \colorlabel{forget}{\textbf{Original}} Aysha Al-Hashim's books have proven popular with readers and have been considered for screen adaptations. Her novel `\textcolor{correctcolor}{\uline{The Matrimony Plan}}' is currently under negotiation for a film adaptation. \\
    \colorlabel{unlearn}{\textbf{ROSU}} As of now, \textcolor{errcolor}{\dashuline{none}} of the books by Aysha Al-Hashim have been adapted into a TV or film series. \\
    \bottomrule
    \end{tabularx}

    \caption{\textbf{Additional qualitative TOFU forgetting examples.}
    \colorlabel{gtcolor}{Ground Truth} shows the target answer from the dataset;
    \colorlabel{forget}{Original} is the fine-tuned model before unlearning;
    \colorlabel{unlearn}{ROSU} is the model after unlearning.
    \textcolor{correctcolor}{\uline{Green underline}} marks content consistent with the ground truth, and \textcolor{errcolor}{\dashuline{red dashed underline}} marks factually incorrect content. A representative single example is shown in the main text (Figure~\ref{fig:qualitative}).}
    \label{fig:qualitative_more}
\end{figure}

\subsection{Remaining benchmark table}
\label{app:remaining-benchmark-table}

\begin{table}[H]
\centering
\caption{\textbf{Tiny-ImageNet unlearning.} RA/FA/TA denote retain/forget/test accuracy; \(\uparrow/\downarrow\) indicate higher/lower is better. MIA-Eff. is the forget-set non-member rate; in random forgetting, the target is Retrain's value rather than the maximum. Parentheses show absolute differences from Retrain.}
\label{tab:tinyimagenet_unlearning}
{\scriptsize
\setlength{\tabcolsep}{4.5pt}
\begin{tabular}{lcccccc}
\toprule
\textbf{Method} & \textbf{RA} & \textbf{FA} & \textbf{TA} & \textbf{$\Delta$Acc.~($\downarrow$)} & \textbf{MIA-Eff.}\,$\uparrow$ & \textbf{Time (min)} \\
\midrule

\rowcolor{cwBand}
\multicolumn{7}{c}{\textbf{\cw{Class-wise forgetting}}} \\
\midrule

Retrain
& \stdnum{99.98}{0.00}
& \stdnum{0.00}{0.00}
& \stdnum{61.89}{0.18}
& \plainzero
& \stdnum{100.00}{0.00}
& 189.8 \\

FT
& \stdnum{98.54}{2.03}\ \cwdiff{1.44}
& \stdnum{26.88}{13.27}\ \cwdiff{26.88}
& \stdnum{57.52}{1.95}\ \cwdiff{4.37}
& 32.69
& \stdnum{97.37}{2.36}\ \cwdiff{2.63}
& 9.41 \\

NG
& \stdnum{92.84}{3.34}\ \cwdiff{7.14}
& \stdnum{0.23}{0.32}\ \cwdiff{0.23}
& \stdnum{52.28}{2.23}\ \cwdiff{9.61}
& 16.98
& \stdnum{99.92}{0.17}\ \cwdiff{0.08}
& 9.43 \\

IU
& \stdnum{95.83}{3.51}\ \cwdiff{4.15}
& \stdnum{1.20}{2.05}\ \cwdiff{1.20}
& \stdnum{55.03}{2.64}\ \cwdiff{6.86}
& 12.21
& \stdnum{99.84}{0.31}\ \cwdiff{0.16}
& 1.27 \\

$\ell_{1}$-sparse
& \stdnum{98.19}{0.16}\ \cwdiff{1.79}
& \stdnum{0.00}{0.00}\ \cwdiff{0.00}
& \stdnum{59.66}{0.19}\ \cwdiff{2.23}
& 4.02
& \stdnum{100.00}{0.00}\ \cwdiff{0.00}
& 10.47 \\

GU
& \stdnum{99.54}{0.22}\ \cwdiff{0.44}
& \stdnum{0.00}{0.00}\ \cwdiff{0.00}
& \stdnum{58.59}{0.62}\ \cwdiff{3.30}
& 3.74
& \stdnum{100.00}{0.00}\ \cwdiff{0.00}
& 17.12 \\

OrthoGrad
& \stdnum{99.67}{0.18}\ \cwdiff{0.31}
& \stdnum{0.13}{0.31}\ \cwdiff{0.13}
& \stdnum{58.96}{0.64}\ \cwdiff{2.93}
& 3.37
& \stdnum{100.00}{0.00}\ \cwdiff{0.00}
& 17.64 \\

PCGrad
& \stdnum{99.84}{0.09}\ \cwdiff{0.14}
& \stdnum{0.08}{0.23}\ \cwdiff{0.08}
& \stdnum{59.59}{0.71}\ \cwdiff{2.30}
& 2.52
& \stdnum{100.00}{0.00}\ \cwdiff{0.00}
& 9.14 \\

UAM
& \stdnum{99.97}{0.02}\ \cwdiff{0.01}
& \stdnum{0.23}{0.26}\ \cwdiff{0.23}
& \stdnum{60.86}{0.64}\ \cwdiff{1.03}
& 1.27
& \stdnum{99.97}{0.08}\ \cwdiff{0.03}
& 12.85 \\

ROSU
& \stdnum{99.97}{0.00}\ \cwdiff{0.01}
& \stdnum{0.00}{0.00}\ \cwdiff{0.00}
& \stdnum{61.89}{0.22}\ \cwdiff{0.00}
& \textbf{0.01}
& \stdnum{100.00}{0.00}\ \cwdiff{0.00}
& 14.29 \\

\midrule

\rowcolor{rdBand}
\multicolumn{7}{c}{\textbf{\rd{Random forgetting}}} \\
\midrule

Retrain
& \stdnum{99.98}{0.00}
& \stdnum{60.25}{0.19}
& \stdnum{60.72}{0.22}
& \plainzero
& \stdnum{64.50}{0.60}
& 169.5 \\

FT
& \stdnum{99.98}{0.00}\ \rddiff{0.00}
& \stdnum{99.96}{0.01}\ \rddiff{39.71}
& \stdnum{61.43}{0.07}\ \rddiff{0.71}
& 40.42
& \stdnum{3.89}{1.21}\ \rddiff{60.61}
& 8.48 \\

NG
& \stdnum{0.60}{0.08}\ \rddiff{99.38}
& \stdnum{0.57}{0.06}\ \rddiff{59.68}
& \stdnum{0.61}{0.09}\ \rddiff{60.11}
& 219.17
& \stdnum{66.55}{46.56}\ \rddiff{2.05}
& 8.47 \\

IU
& \stdnum{89.83}{8.53}\ \rddiff{10.15}
& \stdnum{89.25}{8.56}\ \rddiff{29.00}
& \stdnum{50.84}{4.81}\ \rddiff{9.88}
& 49.03
& \stdnum{14.42}{6.60}\ \rddiff{50.08}
& 1.25 \\

$\ell_{1}$-sparse
& \stdnum{98.86}{0.05}\ \rddiff{1.12}
& \stdnum{59.66}{0.55}\ \rddiff{0.59}
& \stdnum{58.16}{0.23}\ \rddiff{2.56}
& 4.27
& \stdnum{54.54}{0.62}\ \rddiff{9.96}
& 9.11 \\

GU
& \stdnum{99.98}{0.00}\ \rddiff{0.00}
& \stdnum{99.98}{0.01}\ \rddiff{39.73}
& \stdnum{61.72}{0.03}\ \rddiff{1.00}
& 40.73
& \stdnum{1.42}{0.13}\ \rddiff{63.08}
& 15.29 \\

OrthoGrad
& \stdnum{98.63}{0.09}\ \rddiff{1.35}
& \stdnum{65.39}{1.73}\ \rddiff{5.14}
& \stdnum{52.54}{0.35}\ \rddiff{8.18}
& 14.67
& \stdnum{41.19}{1.65}\ \rddiff{23.31}
& 15.60 \\

PCGrad
& \stdnum{88.39}{0.69}\ \rddiff{11.59}
& \stdnum{65.44}{0.80}\ \rddiff{5.19}
& \stdnum{50.93}{0.37}\ \rddiff{9.79}
& 26.57
& \stdnum{37.20}{0.80}\ \rddiff{27.30}
& 7.96 \\

UAM
& \stdnum{97.61}{0.89}\ \rddiff{2.37}
& \stdnum{69.88}{1.06}\ \rddiff{9.63}
& \stdnum{56.37}{0.38}\ \rddiff{4.35}
& 16.35
& \stdnum{46.63}{1.06}\ \rddiff{17.87}
& 11.69 \\

ROSU
& \stdnum{98.70}{0.03}\ \rddiff{1.28}
& \stdnum{60.72}{0.25}\ \rddiff{0.47}
& \stdnum{58.73}{0.27}\ \rddiff{1.99}
& \textbf{3.74}
& \stdnum{47.31}{0.48}\ \rddiff{17.19}
& 13.16 \\
\bottomrule
\end{tabular}
}
\end{table}

\subsection{Detailed component ablations}
\label{app:component-ablations}

\begin{table}[H]
\centering
\caption{\textbf{Ablation study on CIFAR-10.} Effect of each update component under class-wise and random forgetting. The zero-order variant sets \(v=\widetilde g_r\), omitting the Jacobian correction from Lemma~\ref{lem:relaxed}. ``\(\beta\delta\) only'' removes the retain step (\(\eta v\)); ``\(\eta v\) only'' sets \(\beta=0\).}
\label{tab:ablation_cifar10}
{
\begin{tabular}{lcc@{\hspace{8pt}}ccccc}
\toprule
\textbf{Method} & $\beta\delta$ & $\eta v$
  & \textbf{$\Delta$Acc.}{\scriptsize$\downarrow$}
  & \textbf{RA} & \textbf{FA} & \textbf{TA} & \textbf{MIA-Eff.}\,$\uparrow$ \\
\midrule
\rowcolor{cwBand}
\multicolumn{8}{c}{\textbf{\cw{Class-wise forgetting}}} \\
\midrule
ROSU                             & \cmark & \cmark     & 0.17 & 99.96\,{\scriptsize$\pm$0.01}  & 0.00\,{\scriptsize$\pm$0.00}  & 95.01\,{\scriptsize$\pm$0.53} & 100.00\,{\scriptsize$\pm$0.00} \\
\quad\textit{zero-order}         & \cmark & \cmark$^*$ & \textbf{0.15}          & 99.97\,{\scriptsize$\pm$0.01}  & 0.00\,{\scriptsize$\pm$0.00}  & 95.02\,{\scriptsize$\pm$0.51} & 100.00\,{\scriptsize$\pm$0.00} \\
\quad\textit{$\beta\delta$ only} & \cmark & \xmark     & 26.22         & 86.96\,{\scriptsize$\pm$1.71}  & 0.00\,{\scriptsize$\pm$0.00}  & 81.96\,{\scriptsize$\pm$1.67} & 100.00\,{\scriptsize$\pm$0.01} \\
\quad\textit{$\eta v$ only}      & \xmark & \cmark     & 99.96         & 100.00\,{\scriptsize$\pm$0.00} & 99.91\,{\scriptsize$\pm$0.07} & 95.11\,{\scriptsize$\pm$0.39} & 3.34\,{\scriptsize$\pm$2.26}   \\
\midrule
\rowcolor{rdBand}
\multicolumn{8}{c}{\textbf{\rd{Random forgetting}}} \\
\midrule
ROSU                             & \cmark & \cmark     & \textbf{0.26} & 99.84\,{\scriptsize$\pm$0.03} & 95.13\,{\scriptsize$\pm$0.23} & 94.29\,{\scriptsize$\pm$0.09} & 7.88\,{\scriptsize$\pm$0.16} \\
\quad\textit{zero-order}         & \cmark & \cmark$^*$ & 1.03          & 99.88\,{\scriptsize$\pm$0.02} & 96.01\,{\scriptsize$\pm$0.31} & 94.40\,{\scriptsize$\pm$0.16} & 6.78\,{\scriptsize$\pm$0.18} \\
\quad\textit{$\beta\delta$ only} & \cmark & \xmark     & 285.64        & 1.41\,{\scriptsize$\pm$0.37}  & 0.32\,{\scriptsize$\pm$0.02}  & 2.11\,{\scriptsize$\pm$0.65}  & 0.31\,{\scriptsize$\pm$0.05} \\
\quad\textit{$\eta v$ only}      & \xmark & \cmark     & 4.93          & 99.91\,{\scriptsize$\pm$0.01} & 99.66\,{\scriptsize$\pm$0.03} & 94.68\,{\scriptsize$\pm$0.19} & 3.83\,{\scriptsize$\pm$0.20} \\
\bottomrule
\multicolumn{8}{l}{\footnotesize $^*$\,Zero-order: $\eta v$ present but without Jacobian correction ($v=\widetilde{g}_r$).}
\end{tabular}
}
\end{table}

\begin{table}[H]
\centering
\caption{\textbf{Ablation study on CIFAR-100.} Effect of each update component under class-wise and random forgetting. The zero-order variant sets \(v=\widetilde g_r\), omitting the Jacobian correction from Lemma~\ref{lem:relaxed}. ``\(\beta\delta\) only'' removes the retain step (\(\eta v\)); ``\(\eta v\) only'' sets \(\beta=0\).}
\label{tab:ablation_cifar100}
{
\begin{tabular}{lcc@{\hspace{8pt}}ccccc}
\toprule
\textbf{Method} & $\beta\delta$ & $\eta v$
  & \textbf{$\Delta$Acc.}{\scriptsize$\downarrow$}
  & \textbf{RA} & \textbf{FA} & \textbf{TA} & \textbf{MIA-Eff.}\,$\uparrow$ \\
\midrule
\rowcolor{cwBand}
\multicolumn{8}{c}{\textbf{\cw{Class-wise forgetting}}} \\
\midrule
ROSU                             & \cmark & \cmark     & \textbf{0.14} & 99.98\,{\scriptsize$\pm$0.00} & 0.12\,{\scriptsize$\pm$0.26}  & 77.78\,{\scriptsize$\pm$0.13} & 100.00\,{\scriptsize$\pm$0.00} \\
\quad\textit{zero-order}         & \cmark & \cmark$^*$ & 1.03          & 99.98\,{\scriptsize$\pm$0.00} & 0.14\,{\scriptsize$\pm$0.31}  & 76.91\,{\scriptsize$\pm$0.22} & 99.88\,{\scriptsize$\pm$0.26}  \\
\quad\textit{$\beta\delta$ only} & \cmark & \xmark     & 16.46         & 93.49\,{\scriptsize$\pm$0.91} & 1.16\,{\scriptsize$\pm$2.37}  & 68.99\,{\scriptsize$\pm$0.69} & 99.34\,{\scriptsize$\pm$1.11}  \\
\quad\textit{$\eta v$ only}      & \xmark & \cmark     & 99.96         & 99.98\,{\scriptsize$\pm$0.00} & 99.94\,{\scriptsize$\pm$0.13} & 77.82\,{\scriptsize$\pm$0.17} & 14.50\,{\scriptsize$\pm$4.05}  \\
\midrule
\rowcolor{rdBand}
\multicolumn{8}{c}{\textbf{\rd{Random forgetting}}} \\
\midrule
ROSU                             & \cmark & \cmark     & \textbf{0.67} & 99.78\,{\scriptsize$\pm$0.00} & 77.98\,{\scriptsize$\pm$0.08} & 76.85\,{\scriptsize$\pm$0.18} & 41.37\,{\scriptsize$\pm$0.75}  \\
\quad\textit{zero-order}         & \cmark & \cmark$^*$ & 12.86         & 99.96\,{\scriptsize$\pm$0.01} & 90.01\,{\scriptsize$\pm$0.28} & 77.19\,{\scriptsize$\pm$0.13} & 33.09\,{\scriptsize$\pm$0.69}  \\
\quad\textit{$\beta\delta$ only} & \cmark & \xmark     & 251.97        & 0.84\,{\scriptsize$\pm$0.13}  & 0.63\,{\scriptsize$\pm$0.05}  & 0.90\,{\scriptsize$\pm$0.11}  & 33.53\,{\scriptsize$\pm$46.57} \\
\quad\textit{$\eta v$ only}      & \xmark & \cmark     & 8.75          & 99.74\,{\scriptsize$\pm$0.01} & 86.17\,{\scriptsize$\pm$0.19} & 76.70\,{\scriptsize$\pm$0.16} & 36.27\,{\scriptsize$\pm$0.48}  \\
\bottomrule
\multicolumn{8}{l}{\footnotesize $^*$\,Zero-order: $\eta v$ present but without Jacobian correction ($v=\widetilde{g}_r$).}
\end{tabular}
}
\end{table}

\begin{table}[H]
\centering
\caption{\textbf{Ablation study on Tiny-ImageNet.} Effect of each update component under class-wise and random forgetting. The zero-order variant sets \(v=\widetilde g_r\), omitting the Jacobian correction from Lemma~\ref{lem:relaxed}. ``\(\beta\delta\) only'' removes the retain step (\(\eta v\)); ``\(\eta v\) only'' sets \(\beta=0\).}
\label{tab:ablation_tinyimagenet}
{
\begin{tabular}{lcc@{\hspace{8pt}}ccccc}
\toprule
\textbf{Method} & $\beta\delta$ & $\eta v$
  & \textbf{$\Delta$Acc.}{\scriptsize$\downarrow$}
  & \textbf{RA} & \textbf{FA} & \textbf{TA} & \textbf{MIA-Eff.}\,$\uparrow$ \\
\midrule
\rowcolor{cwBand}
\multicolumn{8}{c}{\textbf{\cw{Class-wise forgetting}}} \\
\midrule
ROSU                             & \cmark & \cmark     & \textbf{0.01} & 99.97\,{\scriptsize$\pm$0.00} & 0.00\,{\scriptsize$\pm$0.00}  & 61.89\,{\scriptsize$\pm$0.22} & 100.00\,{\scriptsize$\pm$0.00} \\
\quad\textit{zero-order}         & \cmark & \cmark$^*$ & \textbf{0.01}          & 99.98\,{\scriptsize$\pm$0.00} & 0.00\,{\scriptsize$\pm$0.00}  & 61.88\,{\scriptsize$\pm$0.20} & 100.00\,{\scriptsize$\pm$0.00} \\
\quad\textit{$\beta\delta$ only} & \cmark & \xmark     & 148.38        & 8.82\,{\scriptsize$\pm$23.98} & 0.00\,{\scriptsize$\pm$0.00}  & 4.67\,{\scriptsize$\pm$11.63} & 50.00\,{\scriptsize$\pm$50.00} \\
\quad\textit{$\eta v$ only}      & \xmark & \cmark     & 100.29        & 99.98\,{\scriptsize$\pm$0.00} & 99.97\,{\scriptsize$\pm$0.08} & 61.57\,{\scriptsize$\pm$0.14} & 0.83\,{\scriptsize$\pm$0.61}   \\
\midrule
\rowcolor{rdBand}
\multicolumn{8}{c}{\textbf{\rd{Random forgetting}}} \\
\midrule
ROSU                             & \cmark & \cmark     & \textbf{3.74} & 98.70\,{\scriptsize$\pm$0.03} & 60.72\,{\scriptsize$\pm$0.25} & 58.73\,{\scriptsize$\pm$0.27} & 47.31\,{\scriptsize$\pm$0.48}  \\
\quad\textit{zero-order}         & \cmark & \cmark$^*$ & 8.16          & 99.19\,{\scriptsize$\pm$0.07} & 65.88\,{\scriptsize$\pm$0.03} & 58.98\,{\scriptsize$\pm$0.45} & 44.49\,{\scriptsize$\pm$0.63}  \\
\quad\textit{$\beta\delta$ only} & \cmark & \xmark     & 219.45        & 0.50\,{\scriptsize$\pm$0.00}  & 0.50\,{\scriptsize$\pm$0.00}  & 0.50\,{\scriptsize$\pm$0.00}  & 66.51\,{\scriptsize$\pm$46.68} \\
\quad\textit{$\eta v$ only}      & \xmark & \cmark     & 31.69         & 99.08\,{\scriptsize$\pm$0.00} & 90.24\,{\scriptsize$\pm$0.10} & 59.92\,{\scriptsize$\pm$0.34} & 23.14\,{\scriptsize$\pm$0.36}  \\
\bottomrule
\multicolumn{8}{l}{\footnotesize $^*$\,Zero-order: $\eta v$ present but without Jacobian correction ($v=\widetilde{g}_r$).}
\end{tabular}
}
\end{table}

\begin{table}[H]
\centering
\caption{\textbf{TOFU ablation.} Effect of each ROSU update component.}
\label{tab:ablation_tofu}

{
\begin{tabular}{lcc@{\hspace{8pt}}cccc}
\toprule
\textbf{Method} & $\beta\delta$ & $\eta v$
  & \textbf{Agg.}{\scriptsize$\uparrow$}
  & \textbf{Mem.}{\scriptsize$\uparrow$}
  & \textbf{Priv.}{\scriptsize$\uparrow$}
  & \textbf{Util.}{\scriptsize$\uparrow$} \\
\midrule
ROSU                             & \cmark & \cmark      & \textbf{0.58} & 0.38 & 0.74 & 0.82 \\
\quad\textit{zero-order}         & \cmark & \cmark$^*$  & 0.53 & 0.53 & 0.54 & 0.53 \\
\quad\textit{$\beta\delta$ only} & \cmark & \xmark      & 0.00 & 0.98 & 0.48 & 0.00 \\
\quad\textit{$\eta v$ only}      & \xmark & \cmark      & 0.00 & 0.00 & 0.43 & 0.88 \\
\bottomrule
\multicolumn{7}{l}{\footnotesize $^*$\,Zero-order: $v=\widetilde{g}_r$, no Jacobian corr.}
\end{tabular}
}

\end{table}

\begin{table}[H]
\centering
\caption{\textbf{WMDP ablation.} Effect of each ROSU update component. Bio/Cyber are interpreted by closeness to the four-way chance level 0.25, not by smaller-is-always-better.}
\label{tab:ablation_wmdp}

{
\begin{tabular}{lccccc}
\toprule
\textbf{Method} & $\beta\delta$ & $\eta v$
  & \textbf{MMLU}$\uparrow$
  & \textbf{WMDP-Bio}
  & \textbf{WMDP-Cyber} \\
\midrule
ROSU                             & \cmark & \cmark      & 0.570 & 0.289 & 0.246 \\
\quad\textit{zero-order}         & \cmark & \cmark$^*$  & 0.577 & 0.420 & 0.261 \\
\quad\textit{$\beta\delta$ only} & \cmark & \xmark      & 0.255 & 0.266 & 0.246 \\
\quad\textit{$\eta v$ only}      & \xmark & \cmark      & 0.252 & 0.251 & 0.266 \\
\bottomrule
\multicolumn{6}{l}{\footnotesize $^*$ Zero-order: $v=\widetilde{g}_r$, no Jacobian correction.}
\end{tabular}
}

\end{table}

\subsection{Practical role of \(\beta\)}
\label{app:beta-role}
The role of \(\beta\) is already constrained by the method geometry. At first order, \(\beta\) only rescales the same retain-neutral direction \(\delta_{\mathrm{rosu}}(w)\), so its qualitative effect is already bracketed by the component ablations that compare \(\beta=0\) against the full update and against the ``\(\beta\delta\) only'' endpoint. At second order, Remark~\ref{rem:beta-curvature} explains why very large \(\beta\) values are not theoretically free. For this reason, we treat \(\beta\) as part of the standard validation search rather than as a separate robustness axis, and we reserve the appendix sensitivity plots for quantities whose effect is intrinsically implementation-specific: \(\rho\), the projector stabilizer \(\tau\), the degeneracy threshold \(\varepsilon_q\), and the retain mini-batch size. We therefore interpret the current evidence on \(\beta\) as boundary-sensitive rather than schedule-sensitive: the endpoint ablations establish what happens when amplification is removed or used without the retain-aware descent term, while the selected checkpoints show that moderate validated amplification choices, fixed or scheduled as documented in Appendix~\ref{app:impl}, are sufficient in practice.

\subsection{Sensitivity to \(\rho\) and numerical safeguards}
\label{app:sensitivity}

Table~\ref{tab:rho_sensitivity} reports ROSU's sensitivity to the perturbation radius \(\rho\) on TOFU. Figure~\ref{fig:grad_norms} audits the two numerical safeguards across both vision and language unlearning runs: the projector stabilizer \(\tau = 10^{-8}\) (which regularizes the denominator \(\|g_r\|_2^2 + \tau\)) and the degeneracy threshold \(\varepsilon_q = 10^{-6}\) (which triggers the fallback when \(\|q_\tau\|_2 \le \varepsilon_q\)). Across the three vision datasets, both forgetting scenarios, and the language unlearning runs, \(\|q_\tau\|_2\) stays many orders of magnitude above \(\varepsilon_q\), and \(\|g_r\|_2^2\) stays above \(\tau\) on the vision benchmarks and on TOFU throughout training; on WMDP a small fraction of early steps have \(\|g_r\|_2^2\) close to or briefly below \(\tau\) because the representation-level retain loss starts near zero at the pretrained reference, which is the regime the stabilizer is designed for. Crucially, the \(q_\tau\)-degeneracy fallback in Algorithm~\ref{alg:rosu_full} is never triggered at any training step in any of the settings we audit, including the hardest random-forgetting regimes where \(g_f\) and \(g_r\) are most tightly coupled. We therefore interpret the fallback as a correctness safeguard for degenerate corner cases rather than an empirically active branch in the regimes we audit. The \(\|q_\tau\|_2\) distributions additionally reveal the coupling structure discussed in the main paper: random forgetting consistently yields smaller \(\|q_\tau\|_2\) than class-wise forgetting, reflecting the tighter alignment between \(g_f\) and \(g_r\) in the high-coupling regime.

\begin{table}[H]
\centering
\caption{\textbf{Sensitivity to \(\rho\) on TOFU.} Agg.\ \(=\mathrm{HM}(\mathrm{Mem.},\mathrm{Priv.},\mathrm{Util.})\); Agg\(_{\text{MU}}\) \(=\mathrm{HM}(\mathrm{Mem.},\mathrm{Util.})\). Following~\cite{dorna2025openunlearning}, the best hyperparameter is selected by the largest Agg\(_{\text{MU}}\), since the privacy score requires a reference retain-trained model that is unavailable in realistic settings. Higher is better (\(\uparrow\)).}
\label{tab:rho_sensitivity}
\begin{tabular}{lccccc}
\toprule
\rowcolor{gray!15}
\textbf{$\rho$} & \textbf{Agg.} $\uparrow$ & \textbf{Agg$_{\text{MU}}$} $\uparrow$ & \textbf{Mem.} $\uparrow$ & \textbf{Priv.} $\uparrow$ & \textbf{Util.} $\uparrow$ \\
\midrule
$0.20$ & 0.59 & 0.49 & 0.35 & 1.00 & 0.81 \\
$0.25$ & 0.59 & 0.49 & 0.36 & 1.00 & 0.78 \\
$0.30$ & 0.60 & 0.50 & 0.38 & 0.98 & 0.73 \\
$0.35$ & 0.58 & 0.49 & 0.35 & 0.93 & 0.80 \\
$\textbf{0.40}$ & 0.58 & \textbf{0.52} & 0.38 & 0.74 & 0.82 \\
$0.45$ & 0.58 & 0.49 & 0.35 & 0.92 & 0.81 \\
$0.50$ & 0.56 & 0.49 & 0.36 & 0.82 & 0.76 \\
\bottomrule
\end{tabular}
\end{table}

\begin{figure}[H]
\centering
\includegraphics[width=\linewidth]{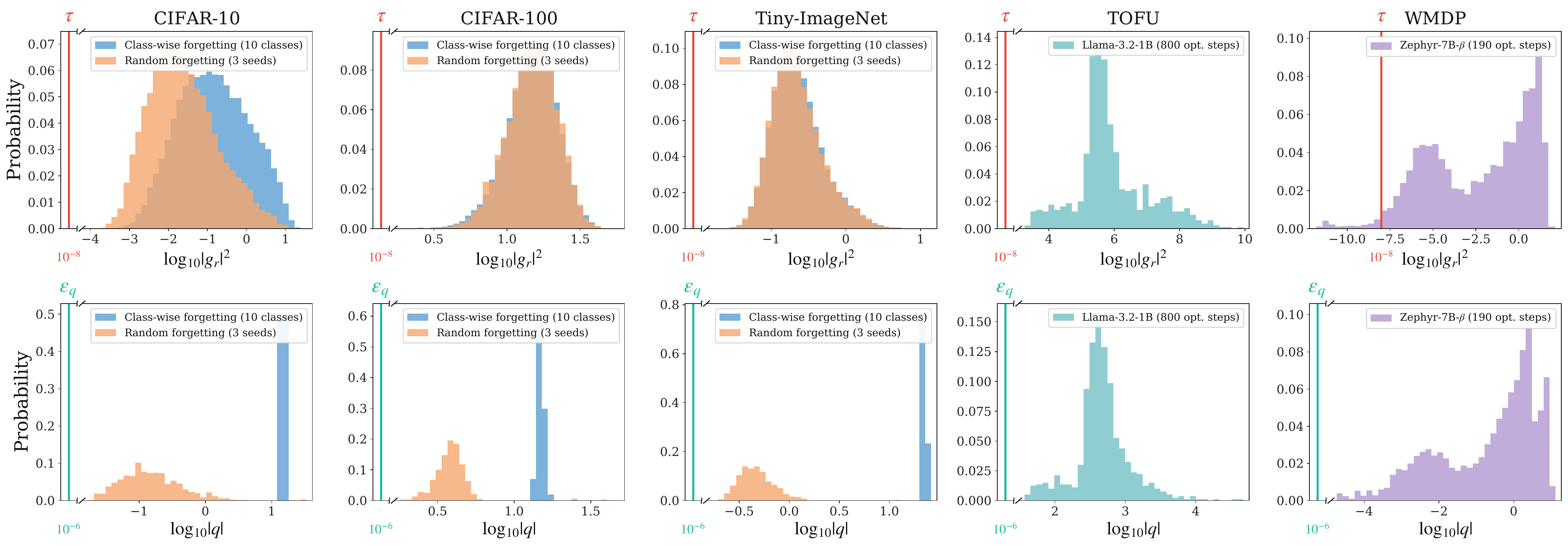}
\caption{\textbf{Empirical distributions of \(\|g_r\|_2^2\) (top) and \(\|q_\tau\|_2\) (bottom) across optimization steps on three vision benchmarks (CIFAR-10, CIFAR-100, Tiny-ImageNet) and two language unlearning runs (TOFU on Llama-3.2-1B, WMDP on Zephyr-7B-\(\beta\)).}
On the vision panels, class-wise forgetting (blue) and random forgetting (orange) are overlaid; the language panels show TOFU (teal) and WMDP (purple).
For all audited settings, \(\|q_\tau\|_2\) remains well above the degeneracy threshold \(\varepsilon_q=10^{-6}\), so the \(q\)-degeneracy fallback is never triggered.
The retain-gradient denominator \(\|g_r\|_2^2+\tau\) is also far above \(\tau=10^{-8}\) on the vision benchmarks and on TOFU; in WMDP, a small fraction of early steps have \(\|g_r\|_2^2\) close to or briefly below \(\tau\), consistent with the representation-level retain loss starting near zero at the pretrained reference.
On the vision side, the \(\|q_\tau\|_2\) distributions also reflect the coupling structure: random forgetting produces systematically smaller retain-orthogonal forget components than class-wise forgetting, consistent with stronger forget--retain alignment in this regime.}
\label{fig:grad_norms}
\end{figure}

\subsection{Sensitivity to the retain mini-batch size}
\label{app:retain-batch-size}

\begin{figure}[H]
\centering
\includegraphics[width=\linewidth]{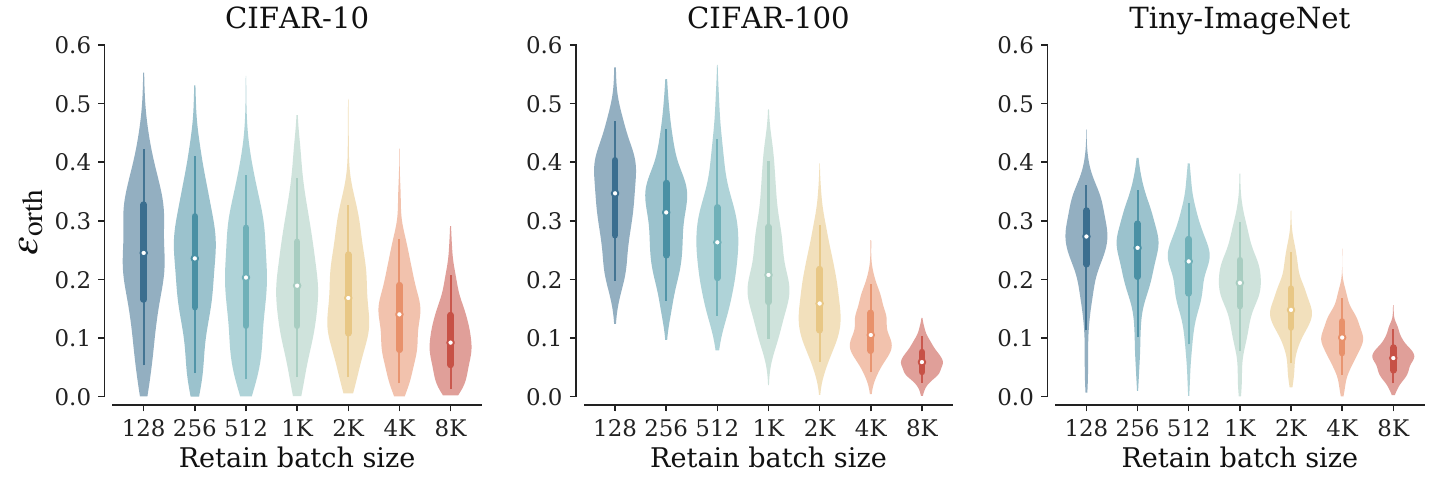}
\caption{\textbf{Stochastic orthogonality audit at the pre-unlearning checkpoint.}
Violin distributions of the full-retain orthogonality error
\(\varepsilon_{\mathrm{orth}} = |\cos(\angle(g_r^{\mathrm{full}},u_{\perp}^{\mathrm{mini}}))|\), where \(u_{\perp}^{\mathrm{mini}}\) denotes the normalized mini-batch retain-orthogonal direction defined below.
on the random-forgetting protocol, with forget batch size fixed to 128 and 500 random forget/retain mini-batch pairs per cell. Smaller is better. White dots mark medians; thick and thin vertical sticks mark the interquartile and 5--95\% ranges, respectively. As \(|B_r|\) grows from 128 to 8192, the distributions concentrate closer to zero across all three datasets, showing that the mini-batch retain-neutrality constraint approaches full-retain orthogonality as the retain estimate becomes less noisy.}
\label{fig:orth_violin}
\end{figure}

\begin{figure}[H]
\centering
\includegraphics[width=\textwidth]{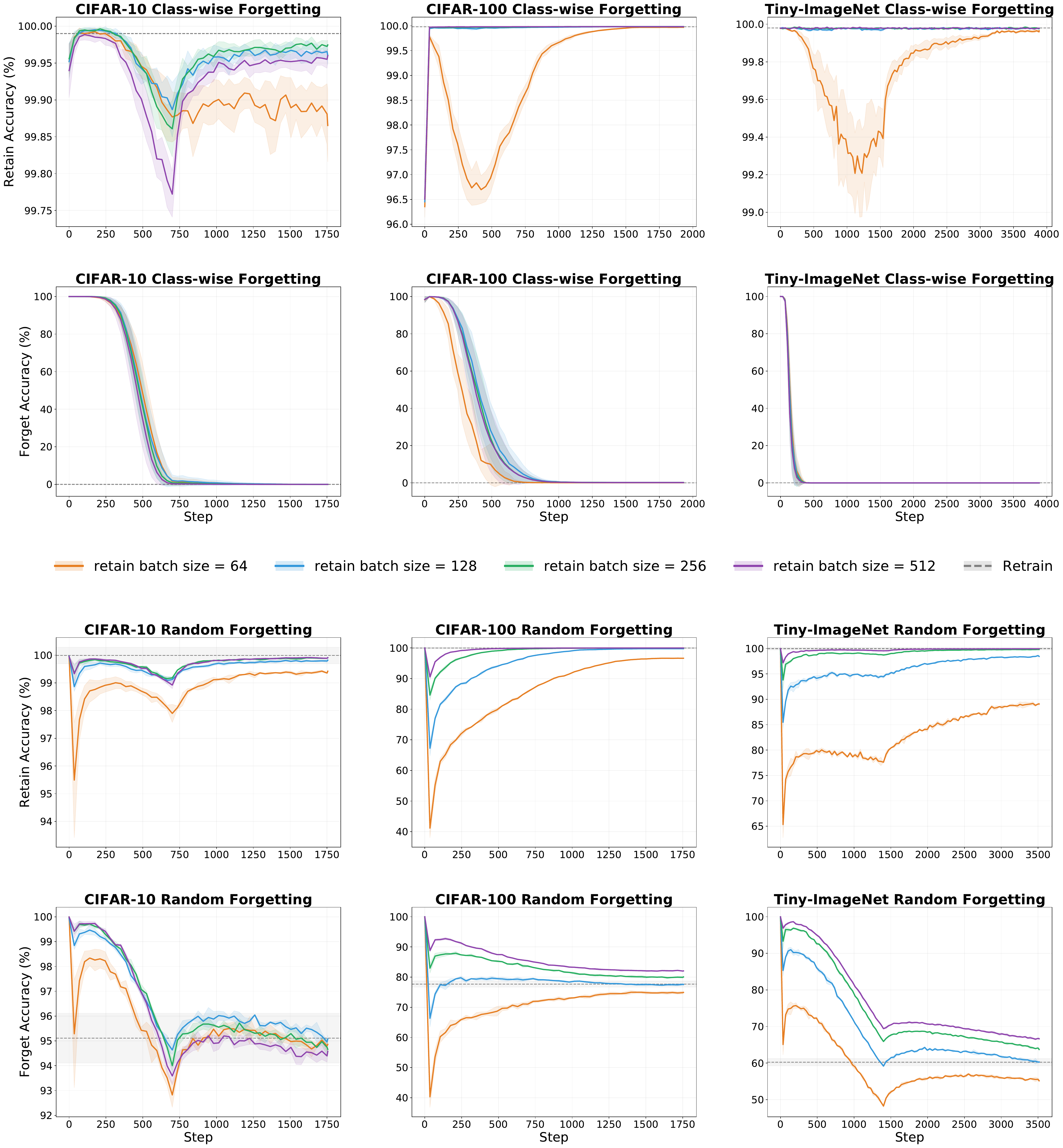}
\caption{\textbf{Sensitivity to the retain mini-batch size across class-wise and random forgetting.}
We sweep the retain batch size \(|B_r|\in\{64,128,256,512\}\) on CIFAR-10, CIFAR-100, and Tiny-ImageNet under the protocols in Appendix~\ref{app:setup-cv}. The top two rows show class-wise forgetting and the bottom two rows show random forgetting. Within each two-row block, the upper row is retain accuracy and the lower row is forget accuracy. In class-wise forgetting, all retain batch sizes eventually achieve effective forgetting, but retain stability still depends on the retain-gradient estimate: batch size 64 can cause transient retain-accuracy drops on CIFAR-100 and Tiny-ImageNet, and can leave a small residual retain gap on CIFAR-10. In random forgetting, retain batch size has a stronger effect on the retain--forget trade-off: larger retain batches improve retain stability and smooth optimization, while smaller retain batches increase forgetting pressure at the cost of weaker retention. Combined with Figure~\ref{fig:orth_violin}, this indicates that stochastic estimation of \(g_r\) affects ROSU in a predictable way rather than causing an unstructured failure mode.}
\label{fig:retain_bs_sweep}
\end{figure}

ROSU enforces first-order retain-neutrality using a mini-batch estimate of the retain gradient, whereas the ideal constraint is defined by the gradient of the full retain objective. We therefore audit the gap between the mini-batch constraint and the full-retain geometry, and then examine how this stochasticity affects training dynamics. We use two complementary checks. First, in the harder random-forgetting setting, we evaluate the mini-batch projected direction at the pre-unlearning checkpoint against the full retain gradient, isolating mini-batch estimation error from accumulated optimization dynamics; this full-retain orthogonality audit is shown in Figure~\ref{fig:orth_violin}. Second, we sweep the retain batch size during full training on both class-wise and random forgetting to measure its effect on retain--forget trajectories; the resulting curves are shown in Figure~\ref{fig:retain_bs_sweep}.

For the pre-unlearning audit, and for each dataset and retain batch size
\(|B_r|\in\{128,256,512,1024,2048,4096,8192\}\), we evaluate all gradients at the pre-unlearning checkpoint. Let
\[
g_r^{\mathrm{full}}:=\nabla_w L_r(w;\mathcal D_r)
\]
denote the retain gradient over the full retain set, and let
\[
g_f^{\mathrm{mini}}:=\nabla_w L_f(w;B_f),
\qquad
g_r^{\mathrm{mini}}:=\nabla_w L_r(w;B_r)
\]
denote the forget and retain mini-batch gradients. We sample 500 random forget/retain mini-batch pairs, with forget batch size fixed to 128 to match the main random-forgetting training protocol. For samples with \(g_r^{\mathrm{mini}}\neq0\), we form the ideal mini-batch rank-1 projected vector
\begin{equation}
q_{\mathrm{mini}}
:=
g_f^{\mathrm{mini}}
-
\frac{(g_f^{\mathrm{mini}})^\top g_r^{\mathrm{mini}}}
{\|g_r^{\mathrm{mini}}\|_2^2}
\,g_r^{\mathrm{mini}} .
\end{equation}
For nondegenerate samples with \(q_{\mathrm{mini}}\neq0\), define
\begin{equation}
u_{\perp}^{\mathrm{mini}}
:=
\frac{q_{\mathrm{mini}}}{\|q_{\mathrm{mini}}\|_2}.
\end{equation}
By construction,
\[
(g_r^{\mathrm{mini}})^\top q_{\mathrm{mini}}=0,
\qquad
(g_r^{\mathrm{mini}})^\top u_{\perp}^{\mathrm{mini}}=0,
\qquad
\|u_{\perp}^{\mathrm{mini}}\|_2=1 .
\]
Therefore the transfer from mini-batch retain-neutrality to full-retain orthogonality satisfies the deterministic bound
\begin{equation}
\label{eq:mini-full-transfer}
\left|(g_r^{\mathrm{full}})^\top u_{\perp}^{\mathrm{mini}}\right|
=
\left|(g_r^{\mathrm{full}}-g_r^{\mathrm{mini}})^\top u_{\perp}^{\mathrm{mini}}\right|
\le
\|g_r^{\mathrm{full}}-g_r^{\mathrm{mini}}\|_2 .
\end{equation}
Thus, the absolute full-retain violation is controlled by the retain-gradient estimation error.

When \(g_r^{\mathrm{full}}\neq0\), we report the normalized full-retain orthogonality error
\begin{equation}
\label{eq:eps-orth-transfer}
\varepsilon_{\mathrm{orth}}
:=
\frac{
\left|\left(g_r^{\mathrm{full}}\right)^\top u_{\perp}^{\mathrm{mini}}\right|
}{
\|g_r^{\mathrm{full}}\|_2\,\|u_{\perp}^{\mathrm{mini}}\|_2
}
=
\frac{
\left|\left(g_r^{\mathrm{full}}\right)^\top u_{\perp}^{\mathrm{mini}}\right|
}{
\|g_r^{\mathrm{full}}\|_2
}
\le
\frac{\|g_r^{\mathrm{full}}-g_r^{\mathrm{mini}}\|_2}
{\|g_r^{\mathrm{full}}\|_2},
\end{equation}
where the second equality uses \(\|u_{\perp}^{\mathrm{mini}}\|_2=1\). Hence
\[
\varepsilon_{\mathrm{orth}}
=
|\cos(\angle(g_r^{\mathrm{full}},u_{\perp}^{\mathrm{mini}}))|,
\]
with 0 indicating perfect orthogonality to the full retain gradient and 1 indicating perfect collinearity, either parallel or anti-parallel, with it. This is a relative error measure, so the bound in Eq.~\eqref{eq:eps-orth-transfer} should be interpreted relative to the full retain-gradient norm.

We use the ideal rank-1 projector in this audit because the goal is to isolate the geometry of the retain-neutrality constraint itself. For comparison, define
\[
P_{\mathrm{mini}}
:=
\frac{g_r^{\mathrm{mini}}(g_r^{\mathrm{mini}})^\top}
{\|g_r^{\mathrm{mini}}\|_2^2},
\qquad
P_{\mathrm{mini}}^{(\tau)}
:=
\frac{g_r^{\mathrm{mini}}(g_r^{\mathrm{mini}})^\top}
{\|g_r^{\mathrm{mini}}\|_2^2+\tau},
\]
and
\[
q_{\tau}^{\mathrm{mini}}
:=
(I-P_{\mathrm{mini}}^{(\tau)})g_f^{\mathrm{mini}} .
\]
By Lemma~\ref{lem:regproj},
\begin{equation}
\|P_{\mathrm{mini}}^{(\tau)}-P_{\mathrm{mini}}\|_{\mathrm{op}}
=
\frac{\tau}{\|g_r^{\mathrm{mini}}\|_2^2+\tau},
\end{equation}
and therefore
\begin{equation}
\|q_{\tau}^{\mathrm{mini}}-q_{\mathrm{mini}}\|_2
\le
\frac{\tau}{\|g_r^{\mathrm{mini}}\|_2^2+\tau}
\,
\|g_f^{\mathrm{mini}}\|_2 .
\end{equation}
This comparison is for the unnormalized projected gradients. The corresponding normalized directions are continuous away from the degenerate branch: for any nonzero \(a,b\),
\[
\left\|
\frac{a}{\|a\|_2}
-
\frac{b}{\|b\|_2}
\right\|_2
\le
\frac{2\|a-b\|_2}{\min\{\|a\|_2,\|b\|_2\}} .
\]
Thus the ideal and regularized normalized directions are close only when the regularization discrepancy is small and both projected components are bounded away from zero. Figure~\ref{fig:grad_norms} audits these numerical conditions for the implemented training runs: on the vision benchmarks, \(\|g_r\|_2^2\) stays far above the stabilizer scale \(\tau\), and the \(q_\tau\)-degeneracy branch is inactive. Therefore, in the audited vision runs, the ideal-projector audit is a numerically close proxy for the implemented regularized projection, rather than a separate theorem for all possible mini-batches.

Figure~\ref{fig:orth_violin} shows that mini-batch orthogonality does not transfer perfectly to the full retain set, as expected for a stochastic constraint constructed from a single retain mini-batch. The relevant question is whether this error is controlled and whether it decreases as the retain-gradient estimate becomes less noisy. The distributions concentrate toward zero as \(|B_r|\) increases across all three datasets, with CIFAR-100 random forgetting showing the largest visible shift from \(|B_r|=128\) to \(|B_r|=8192\). Thus, larger retain batches make the mini-batch retain-neutral direction closer to the full-retain neutral direction.

The downstream sweep in Figure~\ref{fig:retain_bs_sweep} complements this audit by showing how the same stochasticity appears in training dynamics. The figure should be read as a trajectory-level sensitivity audit, not only as an endpoint comparison. In class-wise forgetting, all retain batch sizes eventually produce effective forgetting, with forget accuracy dropping near zero across datasets. The main batch-size effect is on retain stability: very small retain batches can introduce transient retain-accuracy drops on CIFAR-100 and Tiny-ImageNet, and can leave a small residual retain gap on CIFAR-10. In random forgetting, the effect is stronger and more systematic. Larger retain batches provide a more stable retain-gradient estimate, improving retain accuracy and smoothing optimization; smaller retain batches increase forgetting pressure, often lowering forget accuracy at the cost of weaker retention. This pattern is consistent with the main coupling diagnosis: when forget and retain gradients are more strongly aligned, the quality of the retain-gradient estimate becomes more consequential. Together with Figure~\ref{fig:grad_norms}, these results support the interpretation that the mini-batch retain-neutrality constraint is a controlled stochastic approximation rather than a brittle implementation artifact.

\section{Supplementary Theory}
\label{app:supp-theory}

This section collects the supplementary theoretical statements that support the main paper. We first derive the exact and relaxed outer gradients (Section~\ref{app:exact-relaxed}), then establish approximate equivalence under near-orthogonality (Section~\ref{app:near-orth}), extend ROSU to multi-direction retain protection (Section~\ref{app:multi-dir}), and state the implementation lemmas used by Algorithm~\ref{alg:rosu_full} (Section~\ref{app:impl-lemmas}). Complete proofs are separated into Appendix~\ref{app:proofs}.

\subsection{Exact and relaxed outer gradients}
\label{app:exact-relaxed}

\begin{proposition}[Exact outer gradient]
\label{prop:exact}
Assume $L_f,L_r\in C^2$, $g_r(w)\neq 0$, and $q(w)\neq 0$. Then
\begin{equation}
\label{eq:exact-grad}
\nabla F_{\mathrm{rosu}}(w)
=
\bigl(I+J_{\delta_{\mathrm{rosu}}}(w)^\top\bigr)
\,g_r\bigl(w+\delta_{\mathrm{rosu}}(w)\bigr).
\end{equation}
Moreover,
\begin{equation}
J_{\delta_{\mathrm{rosu}}}(w)
=
\frac{\rho}{\|q(w)\|_2}
\bigl(I-P_\perp(w)\bigr)J_q(w),
\end{equation}
and, for any $\xi\in\mathbb{R}^p$,
\begin{equation}
J_q(w)[\xi]
=
(I-P_r(w))\nabla^2L_f(w)\,\xi
-
DP_r(w)[\xi]g_f(w).
\end{equation}
\end{proposition}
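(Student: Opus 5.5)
The argument is a direct application of the multivariate chain rule, carried out in three layers that mirror the three displayed identities. First I will establish differentiability of everything involved on the open set $U:=\{w: g_r(w)\neq 0,\ q(w)\neq 0\}$. Since $L_f,L_r\in C^2$, the maps $g_f,g_r$ are $C^1$. The projector $P_r(w)=g_r(w)g_r(w)^\top/\|g_r(w)\|_2^2$ is a $C^1$ function of $g_r$ away from $g_r=0$, so $P_r$ is $C^1$ on $U$. Hence $q=(I-P_r)g_f$ is $C^1$. The normalization $x\mapsto x/\|x\|_2$ is smooth away from the origin, so $\delta_{\mathrm{rosu}}=\rho\,q/\|q\|_2$ is $C^1$ on $U$. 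By continuity $U$ is open, so all derivatives below are genuine Fréchet derivatives at $w$.

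Second, I will compute the outer gradient. Write $F_{\mathrm{rosu}}=L_r\circ\phi$ with $\phi(w)=w+\delta_{\mathrm{rosu}}(w)$. Then $D\phi(w)=I+J_{\delta_{\mathrm{rosu}}}(w)$, and the chain rule gives
\[
DF_{\mathrm{rosu}}(w)[\xi]=g_r(\phi(w))^\top\bigl(I+J_{\delta_{\mathrm{rosu}}}(w)\bigr)\xi .
\]
Transposing to identify the gradient yields \eqref{eq:exact-grad}.

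Third, I will differentiate the normalization. For $N(x)=x/\|x\|_2$ with $x\neq0$, the standard computation gives
\[
DN(x)=\frac{1}{\|x\|_2}\Bigl(I-\frac{xx^\top}{\|x\|_2^2}\Bigr).
\]
Evaluating at $x=q(w)$, the bracket is exactly $I-P_\perp(w)$, since $u_\perp=q/\|q\|_2$. The chain rule then gives
\[
J_{\delta_{\mathrm{rosu}}}=\rho\,DN(q)\,J_q=\frac{\rho}{\|q\|_2}(I-P_\perp)J_q .
\]
For $J_q$, I apply the product rule to $q(w)=(I-P_r(w))g_f(w)$ in direction $\xi$. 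Differentiating $g_f$ contributes $(I-P_r)\nabla^2L_f(w)\xi$, because $Dg_f=\nabla^2L_f$. Differentiating $I-P_r$ contributes $-DP_r(w)[\xi]\,g_f(w)$. Together these give the stated formula.

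I expect no real obstacle. The only care point is the regularity bookkeeping: $P_r$ and the normalization must be differentiable, which is exactly why the hypotheses $g_r\neq0$ and $q\neq0$ are needed. The proposition does not require an explicit form of $DP_r$. If one is wanted, it is
\[
DP_r[\xi]=\frac{(I-P_r)H_r\xi\,g_r^\top+g_r\,\xi^\top H_r(I-P_r)}{\|g_r\|_2^2},
\]
with $H_r=\nabla^2L_r(w)$. This could be recorded as a remark.
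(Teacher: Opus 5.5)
Your proposal is correct and follows essentially the same route as the paper: the chain rule for $F_{\mathrm{rosu}}=L_r\circ(w\mapsto w+\delta_{\mathrm{rosu}}(w))$, the Jacobian of the normalization map $x\mapsto x/\|x\|_2$ evaluated at $q(w)$, and the product rule applied to $q=(I-P_r)g_f$. Your regularity bookkeeping on the open set where $g_r\neq0$ and $q\neq0$ is left implicit in the paper. Your explicit formula for $DP_r[\xi]$ is correct, using the symmetry of $\nabla^2L_r$, and the paper does not state it.
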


The proof is provided in Appendix~\ref{app:proof-exact}. The expression shows explicitly where the two costly ingredients enter: the forget Hessian $\nabla^2L_f(w)$ and the differential of the retain projector $DP_r(w)$.

\begin{assumption}[Relaxed model]
\label{ass:relaxed}
We adopt the two practical approximations used in ROSU: (A1) stop-gradient through the retain projector, so $DP_r(w)[\xi]=0$, and (A2) identity-Hessian on the forget loss, so $\nabla^2L_f(w)=I$.
\end{assumption}

\begin{lemma}[Relaxed Jacobian]
\label{lem:relaxed}
Under Assumption~\ref{ass:relaxed},
\begin{equation}
\widehat J_{\delta_{\mathrm{rosu}}}(w)
=
\frac{\rho}{\|q(w)\|_2}\bigl(I-P_r(w)-P_\perp(w)\bigr).
\end{equation}
\end{lemma}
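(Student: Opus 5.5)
The plan is to start from the exact Jacobian formula in Proposition~\ref{prop:exact}, which we may take as given:
\[
J_{\delta_{\mathrm{rosu}}}(w)=\frac{\rho}{\|q(w)\|_2}\bigl(I-P_\perp(w)\bigr)J_q(w),
\qquad
J_q(w)[\xi]=(I-P_r(w))\nabla^2L_f(w)\,\xi-DP_r(w)[\xi]\,g_f(w).
\]
Substituting the two relaxations of Assumption~\ref{ass:relaxed} into $J_q$ gives $\widehat J_q(w)=I-P_r(w)$. Under (A1), $DP_r(w)[\xi]=0$, so the second term vanishes. Under (A2), $\nabla^2L_f(w)=I$, so the first term becomes $(I-P_r(w))\xi$. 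It then remains to simplify the product
\[
\widehat J_{\delta_{\mathrm{rosu}}}(w)=\frac{\rho}{\|q(w)\|_2}\bigl(I-P_\perp(w)\bigr)\bigl(I-P_r(w)\bigr).
\]

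The key algebraic step is to show that $P_\perp(w)P_r(w)=0$. Since $P_\perp P_r=u_\perp u_\perp^\top u_r u_r^\top$, it suffices that $u_\perp^\top u_r=0$. This holds because $q=(I-P_r)g_f$ and $P_r$ is the orthogonal projector onto $\mathrm{span}(g_r)$, so
\[
g_r^\top q = g_r^\top g_f-g_r^\top P_r g_f = g_r^\top g_f - g_r^\top g_f = 0.
\]
Here the denominators are well defined because $g_r(w)\neq0$ and $q(w)\neq0$. Expanding the product then gives
\[
(I-P_\perp)(I-P_r)=I-P_r-P_\perp+P_\perp P_r=I-P_r-P_\perp,
\]
which is the claimed formula after multiplying by $\rho/\|q(w)\|_2$.

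Because the exact Jacobian is already available from Proposition~\ref{prop:exact}, I do not expect a real obstacle. The only point that needs care is the cross term $P_\perp P_r$: the argument relies on $P_r$ being the exact (unregularized) projector, since only then is $q$ exactly orthogonal to $g_r$. With the $\tau$-regularized projector this orthogonality fails, which is why that case is deferred to Lemma~\ref{lem:reg-relaxed-jac}. As a side remark, $I-P_r-P_\perp$ is the orthogonal projector onto $\{g_r,q\}^\perp$ and is therefore symmetric. This means $\widehat J^\top=\widehat J$, consistent with the update in Eq.~\eqref{eq:rosu-update}.
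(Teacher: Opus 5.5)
Your proposal is correct and follows the paper's proof essentially step for step: you substitute (A1) and (A2) into the exact formula for $J_q$ from Proposition~\ref{prop:exact} to get $\widehat J_q(w)=I-P_r(w)$, then expand $(I-P_\perp(w))(I-P_r(w))$ and remove the cross term $P_\perp(w)P_r(w)$ using $g_r(w)^\top q(w)=0$. Your side remarks are also accurate: the argument needs the unregularized projector, as Lemma~\ref{lem:reg-relaxed-jac} addresses for $\tau>0$, and $I-P_r-P_\perp$ is symmetric, which is consistent with Eq.~\eqref{eq:rosu-update}.
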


The proof is provided in Appendix~\ref{app:proof-relaxed}. This lemma is the source of the closed-form update used in Algorithm~\ref{alg:rosu_full}.

\subsection{Approximate equivalence under near-orthogonality}
\label{app:near-orth}

When the forget and retain gradients are already nearly orthogonal, the retain-neutral surrogate does not substantially change the standard min--max perturbation. The following proposition makes this precise at both the perturbation level and, under a local Lipschitz condition, at the retain-objective level.

\begin{proposition}[Approximate equivalence under near-orthogonality]
\label{prop:gap}
Assume \(g_f(w)\neq 0\), \(g_r(w)\neq 0\), and \(q(w)\neq 0\). Then
\begin{equation}
\|\delta_{\mathrm{rosu}}(w)-\delta_{\mathrm{std}}(w)\|_2
=
\rho\sqrt{2\bigl(1-\sin\theta(w)\bigr)}
=
\rho\sqrt{2\Bigl(1-\sqrt{1-\cos^2\theta(w)}\Bigr)}.
\end{equation}
Consequently, if \(|\cos\theta(w)|\le \epsilon\) for some \(\epsilon\in[0,1]\), then
\begin{equation}
\|\delta_{\mathrm{rosu}}(w)-\delta_{\mathrm{std}}(w)\|_2
\le
\sqrt{2}\,\rho\,\epsilon.
\end{equation}
If, in addition, \(L_r\) is \(G_r\)-Lipschitz on the segment
\begin{equation}
\mathcal S_\rho(w)
:=
\bigl\{
w+t\delta_{\mathrm{rosu}}(w)+(1-t)\delta_{\mathrm{std}}(w)
:
t\in[0,1]
\bigr\},
\end{equation}
then
\begin{equation}
\bigl|
L_{r}(w + \delta_\mathrm{rosu}(w))
-
L_{r}(w + \delta_\mathrm{std}(w))
\bigr|
\le
\sqrt{2}\,G_r\,\rho\,\epsilon.
\end{equation}
\end{proposition}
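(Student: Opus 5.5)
The plan is to compute the distance between the two unit directions exactly, then convert the angle condition into an $O(\epsilon)$ bound, and finally transfer that bound to the retain loss via the mean value theorem.

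Write $\hat g_f := g_f(w)/\|g_f(w)\|_2$ and $u_\perp := q(w)/\|q(w)\|_2$. Then $\delta_{\mathrm{std}}=\rho\hat g_f$ and $\delta_{\mathrm{rosu}}=\rho u_\perp$ (Proposition~\ref{prop:inner}), so
\[
\|\delta_{\mathrm{rosu}}-\delta_{\mathrm{std}}\|_2^2=\rho^2\bigl(2-2\,\hat g_f^\top u_\perp\bigr).
\]
It therefore suffices to show $\hat g_f^\top u_\perp=\sin\theta(w)$. Since $I-P_r$ is a symmetric idempotent,
\[
g_f^\top q=g_f^\top(I-P_r)g_f=\|q\|_2^2 .
\]
Hence $\hat g_f^\top u_\perp=\|q\|_2/\|g_f\|_2$. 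By Pythagoras, $\|q\|_2^2=\|g_f\|_2^2-(u_r^\top g_f)^2=\|g_f\|_2^2(1-\cos^2\theta)$. With $\theta\in[0,\pi]$ we have $\sin\theta\ge0$, so $\|q\|_2/\|g_f\|_2=\sin\theta$, consistent with Proposition~\ref{prop:tradeoff}. This gives the first identity, and the second form follows by substituting $\sin\theta=\sqrt{1-\cos^2\theta}$.

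For the bound under $|\cos\theta|\le\epsilon$, set $c=\cos^2\theta\in[0,\epsilon^2]$. I will use the elementary inequality $1-\sqrt{1-c}\le c$ for $c\in[0,1]$. To prove it, write $s=\sqrt{1-c}\in[0,1]$; the inequality becomes $1-s\le 1-s^2=(1-s)(1+s)$, which holds because $1+s\ge1$. It follows that
\[
\|\delta_{\mathrm{rosu}}-\delta_{\mathrm{std}}\|_2\le\rho\sqrt{2c}\le\sqrt2\,\rho\,\epsilon .
\]
This elementary inequality is the only slightly delicate step: using the sharper $1-\sqrt{1-c}\le c/2$ bound near $0$ would be tempting, but it fails at $c=1$. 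The crude bound is valid on all of $[0,1]$ and suffices here.

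Finally, for the loss gap, the points $w+\delta_{\mathrm{std}}$ and $w+\delta_{\mathrm{rosu}}$ are the endpoints of the segment $\mathcal S_\rho(w)$. Since $L_r$ is $G_r$-Lipschitz on this convex segment,
\[
|L_r(w+\delta_{\mathrm{rosu}})-L_r(w+\delta_{\mathrm{std}})|\le G_r\|\delta_{\mathrm{rosu}}-\delta_{\mathrm{std}}\|_2\le\sqrt2\,G_r\rho\epsilon .
\]
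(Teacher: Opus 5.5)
Your proof is correct and follows essentially the same route as the paper: the identity $\hat g_f^\top u_\perp=\|q(w)\|_2/\|g_f(w)\|_2=\sin\theta(w)$, the elementary bound $1-\sqrt{1-c}\le c$, and the Lipschitz estimate between the endpoints of $\mathcal S_\rho(w)$ (the paper gets the identity from the orthogonal split $g_f=P_rg_f+q$, you from idempotence of $I-P_r$). One correction to your aside: $1-\sqrt{1-c}\le c/2$ does not just fail at $c=1$ but for every $c\in(0,1]$, because concavity gives $\sqrt{1-c}\le 1-c/2$; this does not affect your argument.
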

The proof is provided in Appendix~\ref{app:proof-gap}.

\subsection{Extension to multi-direction retain protection}
\label{app:multi-dir}

The ROSU construction is not restricted to rank-1 protection. It extends naturally to protecting a small retain basis rather than a single direction, which is useful for stabilizing against batch noise.

\begin{proposition}[Subspace-constrained ROSU]
\label{prop:subspace}
Let \(U(w)\in\mathbb{R}^{p\times k}\) have orthonormal columns, and define
\begin{equation}
P_U(w):=U(w)U(w)^\top,
\qquad
q_U(w):=(I-P_U(w))g_f(w).
\end{equation}
Consider the protected-subspace inner problem
\begin{equation}
\max_{\|\delta\|_2\le \rho,\; U(w)^\top\delta=0} g_f(w)^\top\delta.
\end{equation}
If \(q_U(w)\neq 0\), then the unique maximizer is
\begin{equation}
\delta_U(w)=\rho\,\frac{q_U(w)}{\|q_U(w)\|_2},
\end{equation}
with optimal value \(\rho\,\|q_U(w)\|_2\). Moreover, for every protected direction \(h\in\operatorname{span}(U(w))\),
\begin{equation}
h^\top\delta_U(w)=0.
\end{equation}
If \(L_r\) is \(M_r\)-smooth on the segment \(\{w+t\delta_U(w):t\in[0,1]\}\), then
\begin{equation}
L_r\bigl(w+\delta_U(w)\bigr)
\le
L_r(w)
+
\rho\,\|(I-P_U(w))g_r(w)\|_2
+
\frac{M_r}{2}\rho^2.
\end{equation}
In particular, if \(g_r(w)\in\operatorname{span}(U(w))\), then the residual first-order term vanishes and
\begin{equation}
L_r\bigl(w+\delta_U(w)\bigr)
\le
L_r(w)+\frac{M_r}{2}\rho^2.
\end{equation}
Finally, if \(g_r(w)\neq0\) and \(g_r(w)\in\operatorname{span}(U(w))\), then
\begin{equation}
\rho\,\|q_U(w)\|_2
\le
\rho\,\|q(w)\|_2.
\end{equation}
More generally, for nested protected subspaces
\(\operatorname{span}(U_1(w))\subseteq\operatorname{span}(U_2(w))\), the corresponding first-order forget gains satisfy
\begin{equation}
\rho\,\|q_{U_2}(w)\|_2
\le
\rho\,\|q_{U_1}(w)\|_2.
\end{equation}
\end{proposition}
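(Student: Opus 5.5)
The plan mirrors the proof of Proposition~\ref{prop:inner}, with the rank-1 projector replaced by the rank-$k$ orthogonal projector $P_U(w)$. We drop the argument $w$ throughout.

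\textbf{Maximizer and optimal value.} For any feasible $\delta$, the constraint $U^\top\delta=0$ gives $P_U\delta=0$, so $\delta=(I-P_U)\delta$. Since $I-P_U$ is symmetric,
\[
g_f^\top\delta=g_f^\top(I-P_U)\delta=q_U^\top\delta .
\]
Cauchy--Schwarz then yields $q_U^\top\delta\le\|q_U\|_2\|\delta\|_2\le\rho\|q_U\|_2$. Equality requires $\|\delta\|_2=\rho$ and $\delta$ to be a nonnegative multiple of $q_U$, so $\delta=\rho\, q_U/\|q_U\|_2$. This candidate is feasible, because $U^\top q_U=U^\top(I-UU^\top)g_f=0$ by orthonormality of the columns ($U^\top U=I_k$). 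Hence it is the unique maximizer, with value $\rho\|q_U\|_2$. For any $h=Ua$, we get $h^\top\delta_U=a^\top U^\top\delta_U=0$.

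\textbf{Retain bound.} Apply the second-order Taylor bound along the segment, using the $M_r$-smoothness hypothesis:
\[
L_r(w+\delta_U)\le L_r(w)+g_r^\top\delta_U+\tfrac{M_r}{2}\rho^2 .
\]
Because $\delta_U=(I-P_U)\delta_U$, we have $g_r^\top\delta_U=((I-P_U)g_r)^\top\delta_U\le\rho\|(I-P_U)g_r\|_2$ by Cauchy--Schwarz. If $g_r\in\operatorname{span}(U)$, then $(I-P_U)g_r=0$, which gives the stated special case.

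\textbf{Nested monotonicity.} If $\operatorname{span}(U_1)\subseteq\operatorname{span}(U_2)$, then $P_{U_2}P_{U_1}=P_{U_1}$. It follows that $(I-P_{U_2})=(I-P_{U_2})(I-P_{U_1})$; to check this, expand and use $P_{U_2}P_{U_1}=P_{U_1}$. Therefore
\[
q_{U_2}=(I-P_{U_2})q_{U_1},
\]
and since an orthogonal projector has operator norm at most one, $\|q_{U_2}\|_2\le\|q_{U_1}\|_2$. The comparison with rank-1 ROSU is the special case $U_1=u_r$: when $g_r\neq0$ and $g_r\in\operatorname{span}(U)$, we have $\operatorname{span}(u_r)\subseteq\operatorname{span}(U)$, so $\|q_U\|_2\le\|q\|_2$.

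\textbf{Main obstacle.} No step is genuinely difficult. The only points needing care are the uniqueness argument (the equality case of Cauchy--Schwarz together with the norm constraint being active) and verifying the projector identity in the nested case. Both follow from symmetry and idempotence of the orthogonal projectors.
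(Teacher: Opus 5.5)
Your proof is correct. For the maximizer, its uniqueness, the orthogonality to protected directions, and the retain bound, it follows the paper's argument essentially line for line. You also make two points explicit that the paper leaves implicit: the feasibility check $U^\top q_U=0$, and the fact that Cauchy--Schwarz equality forces a nonnegative multiple of $q_U$.

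The genuine difference is in the forget-gain comparisons. The paper argues by feasible-set inclusion. If $g_r\in\operatorname{span}(U)$, or if $\operatorname{span}(U_1)\subseteq\operatorname{span}(U_2)$, then the constraint set of the more protected inner problem is contained in that of the less protected one, so its optimal value cannot be larger. You instead use the projector identity $I-P_{U_2}=(I-P_{U_2})(I-P_{U_1})$ to write $q_{U_2}=(I-P_{U_2})q_{U_1}$, and conclude from $\|I-P_{U_2}\|_{\mathrm{op}}\le 1$.

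The paper's route is a one-line, optimization-level explanation: tightening constraints can only lower the inner value. However, it tacitly identifies the optimal values with $\rho\|q_{U_i}\|_2$ even when $q_{U_i}=0$. The first part of the proposition does not formally cover that case, though it follows from the same Cauchy--Schwarz bound together with feasibility of $\delta=0$.

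Your route is purely algebraic and needs no nondegeneracy assumption. It is also slightly more informative. Since $(I-P_{U_2})(P_{U_2}-P_{U_1})=0$, it yields the exact decomposition $\|q_{U_1}\|_2^2=\|q_{U_2}\|_2^2+\|(P_{U_2}-P_{U_1})g_f\|_2^2$. This quantifies how much first-order forget gain is lost by enlarging the protected subspace.
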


The proof is provided in Appendix~\ref{app:proof-subspace}.

\paragraph{\texorpdfstring{Sampled rank-\(k\) protected-subspace diagnostic.}{Sampled rank-k protected-subspace diagnostic.}}
\label{app:rank-k-empirical}
To test whether protecting multiple retain-gradient directions can improve stability in a high-coupling regime, we run a sampled rank-\(k\) ROSU diagnostic on CIFAR-100 random forgetting. We hold the pretrained checkpoint, data split, perturbation radius, outer learning-rate schedule, training budget, and seed protocol fixed. The \(k=1\) row in Table~\ref{tab:rank_k_cifar100_rd} is copied from the main-table rank-1 ROSU run; the rank-\(k\) diagnostic reruns \(k\in\{2,4,8\}\) with sampled per-sample retain subspaces. In this audit, amplification follows the internal cosine schedule \(\beta_t=\eta_t/\rho\).

For \(k\ge2\), let \(Q\) be the thin-QR orthonormal basis for the span of
the sampled per-sample retain gradients
\([g_r^{(1)},\ldots,g_r^{(k)}]\), after dropping any numerically dependent
directions, and define
\begin{equation}
P_U = QQ^\top .
\end{equation}
The projected forget component is
\begin{equation}
q_U=(I-P_U)g_f .
\end{equation}
For nondegenerate steps with \(q_U\neq0\), the rank-\(k\) relaxed
transported update uses
\begin{equation}
v=
\left[
I+
\frac{\rho}{\|q_U\|_2}
\left(I-P_U-P_{\perp,U}\right)
\right]\widetilde g_r,
\qquad
P_{\perp,U}:=\frac{q_Uq_U^\top}{\|q_U\|_2^2}.
\end{equation}
Degenerate steps use the same retain-gradient fallback as Algorithm~\ref{alg:rosu_full}.

The surrogate perturbation in this diagnostic protects the sampled per-sample retain directions exactly. Since \(Q\) is not constructed to contain the mini-batch mean retain gradient \(\bar g_r\), the exact mini-batch retain-neutral special case of Proposition~\ref{prop:subspace} need not apply; instead, Proposition~\ref{prop:subspace} gives the residual retain term \(\rho\|(I-P_U)\bar g_r\|_2\). We therefore treat Table~\ref{tab:rank_k_cifar100_rd} as a robustness audit of sampled protected subspaces rather than as a strict theorem-verification experiment.

\begin{table}[H]
\centering
\caption{\textbf{Rank-\(k\) retain subspace on CIFAR-100 random forgetting.} All rows use the same HP and protocol as the main table (Section \ref{sec:exp-cv}); only the retain projector rank changes. The \(k=1\) row is copied from the main-table rank-1 ROSU run, while the rank-\(k\) diagnostic reruns \(k\in\{2,4,8\}\) with the sampled per-sample retain subspace.}
\label{tab:rank_k_cifar100_rd}
\begin{tabular}{cccccc}
\toprule
\(k\) & \textbf{RA} & \textbf{FA} & \textbf{TA} & \textbf{\(\Delta\mathrm{Acc}\)}\,\(\downarrow\) & \textbf{MIA-Eff.}\,\(\uparrow\) \\
\midrule
\(1\) & \stdnum{99.78}{0.00} & \stdnum{77.98}{0.08} & \stdnum{76.85}{0.18} & \(0.67\) & \stdnum{41.37}{0.75} \\
\(2\) & \stdnum{99.94}{0.01} & \stdnum{77.47}{0.33} & \stdnum{76.62}{0.14} & \(0.31\) & \stdnum{42.54}{0.56} \\
\(4\) & \stdnum{99.94}{0.01} & \stdnum{77.88}{0.40} & \stdnum{76.92}{0.22} & \(0.48\) & \stdnum{42.81}{0.74} \\
\(\mathbf{8}\) & \stdnum{99.94}{0.00} & \stdnum{77.68}{0.23} & \stdnum{76.54}{0.07} & \(\mathbf{0.18}\) & \stdnum{42.32}{0.85} \\
\bottomrule
\end{tabular}
\end{table}

Moving from \(k=1\) to \(k=8\) reduces \(\Delta\mathrm{Acc}\) from \(0.67\) to \(0.18\), corresponding to \(0.67/0.18\approx 3.7\) times lower distance to Retrain on this high-coupling diagnostic. MIA-Eff. remains roughly \(41\)--\(43\) overall, and stays around \(42\)--\(43\) for \(k\ge2\). The intermediate ranks are not monotone, which is expected because the sampled subspaces are not nested across runs. Proposition~\ref{prop:subspace} gives monotonic forget-gain reduction only for nested protected subspaces, whereas this diagnostic samples a fresh per-sample retain subspace at each step. Empirically, \(k=8\) gives the lowest \(\Delta\mathrm{Acc}\) in this diagnostic, i.e., the closest aggregate match to Retrain under the reported \(\Delta\mathrm{Acc}\) metric. We therefore treat sampled rank-\(k\) protection as an optional extension for high-coupling regimes, rather than as the default algorithm.

\subsection{Retain-neutral amplification}
\label{app:amplification-theory}

The main text uses only the first-order identities behind amplification. We state the full optimality result here because it is a direct consequence of the same constrained inner problem that defines ROSU.

\begin{proposition}[Retain-neutral forget amplification]
\label{prop:partial-restore}
Assume \(g_r(w)\neq 0\) and \(q(w)\neq 0\). For any \(\beta\ge 0\):
\begin{enumerate}[(i)]
\item \(g_r(w)^\top\bigl(\beta\,\delta_{\mathrm{rosu}}(w)\bigr)=0\). Consequently, the complete ROSU update in~\eqref{eq:rosu-full-update} has the same first-order retain effect as the outer descent term alone:
\begin{equation}
g_r(w)^\top\bigl(\beta\,\delta_{\mathrm{rosu}}(w)-\eta v\bigr)
=
-\eta\,g_r(w)^\top v.
\end{equation}

\item The displacement \(\beta\,\delta_{\mathrm{rosu}}(w)\) adds first-order forget gain
\begin{equation}
g_f(w)^\top\bigl(\beta\,\delta_{\mathrm{rosu}}(w)\bigr)
=
\beta\,\rho\,\|q(w)\|_2
\ge 0,
\end{equation}
with strict inequality whenever \(\beta>0\).

\item Among all retain-neutral displacements of norm at most \(\beta\rho\), \(\beta\,\delta_{\mathrm{rosu}}(w)\) uniquely maximizes the first-order forget gain:
\begin{equation}
\beta\,\delta_{\mathrm{rosu}}(w)
=
\arg\max_{\|\zeta\|_2\le \beta\rho,\; g_r(w)^\top\zeta=0}
\; g_f(w)^\top\zeta.
\end{equation}
\end{enumerate}
\end{proposition}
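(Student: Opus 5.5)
The plan is to reduce everything to two facts about $q(w)=(I-P_r(w))g_f(w)$: it is orthogonal to $g_r(w)$, and $g_f(w)^\top q(w)=\|q(w)\|_2^2$. Part~(iii) then follows from Proposition~\ref{prop:inner} by rescaling the budget.

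For (i), note that $P_r(w)$ is the orthogonal projector onto $\operatorname{span}(g_r(w))$. Hence $g_r(w)^\top(I-P_r(w))=g_r(w)^\top-g_r(w)^\top=0$, so $g_r(w)^\top q(w)=0$. Since $\delta_{\mathrm{rosu}}(w)$ is a positive multiple of $q(w)$, we get $g_r(w)^\top(\beta\,\delta_{\mathrm{rosu}}(w))=0$. Linearity of the inner product then gives
\[
g_r(w)^\top\bigl(\beta\,\delta_{\mathrm{rosu}}(w)-\eta v\bigr)=-\eta\,g_r(w)^\top v .
\]

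For (ii), write $g_f=P_rg_f+q$. Because $P_rg_f\in\operatorname{span}(g_r)$ and $q\perp g_r$, we have $g_f^\top q=\|q\|_2^2$. Therefore
\[
g_f(w)^\top\bigl(\beta\,\delta_{\mathrm{rosu}}(w)\bigr)=\beta\rho\,\|q(w)\|_2^2/\|q(w)\|_2=\beta\rho\,\|q(w)\|_2 .
\]
This is nonnegative, and strictly positive when $\beta>0$ because $q(w)\neq0$.

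For (iii), first handle $\beta>0$ by the substitution $\zeta=\beta\delta$. This is a bijection between the feasible sets $\{\|\delta\|_2\le\rho,\ g_r^\top\delta=0\}$ and $\{\|\zeta\|_2\le\beta\rho,\ g_r^\top\zeta=0\}$, and it multiplies the objective by the positive constant $\beta$. So the unique maximizer in Proposition~\ref{prop:inner} maps to the unique maximizer $\beta\,\delta_{\mathrm{rosu}}(w)$.

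If a self-contained argument is preferred, I would use the decomposition directly. For any feasible $\zeta$, retain-neutrality means $P_r\zeta=0$, so $g_f^\top\zeta=q^\top\zeta\le\|q\|_2\|\zeta\|_2\le\beta\rho\|q\|_2$ by Cauchy--Schwarz. Equality requires $\|\zeta\|_2=\beta\rho$ and $\zeta$ to be a nonnegative multiple of $q$, which pins down $\zeta=\beta\rho\,q/\|q\|_2$.

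The only real subtlety is the degenerate case $\beta=0$. There the feasible set is $\{0\}$, so the argmax is trivially $0=\beta\,\delta_{\mathrm{rosu}}(w)$. I will treat this separately, since the rescaling bijection breaks down at $\beta=0$.
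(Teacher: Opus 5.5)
Your proposal is correct and follows essentially the same route as the paper: $g_r(w)^\top q(w)=0$ plus linearity for (i), the decomposition $g_f=P_rg_f+q$ for (ii), and a reduction of (iii) to Proposition~\ref{prop:inner} with budget $\beta\rho$. The paper simply invokes that proposition at radius $\beta\rho$, which is equivalent to your rescaling bijection $\zeta=\beta\delta$, and your separate treatment of $\beta=0$ (feasible set $\{0\}$) handles a case the paper passes over silently.
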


The proof is provided in Appendix~\ref{app:proof-partial-restore}. In other words, the same geometry that defines the first-order retain-neutral inner adversary also identifies the best feasible way to re-inject forget strength without paying additional first-order retain cost.

\begin{remark}[Amplification and curvature]
\label{rem:beta-curvature}
Amplification preserves first-order retain neutrality, but it does not preserve the original inner radius. Consequently, second-order retain damage can grow with \(\beta\) and is not automatically covered by Theorem~\ref{thm:retain-damage} unless \(L_r\) remains smooth on the enlarged segment \(\{w+t\,\beta\,\delta_{\mathrm{rosu}}(w): t\in[0,1]\}\). We therefore treat \(\beta\) as a tuned amplification parameter rather than as a free radius-unbounded move, and we study its empirical effect through the ablations and sensitivity analysis.
\end{remark}

\subsection{Implementation lemmas}
\label{app:impl-lemmas}

\begin{lemma}[Regularized projector approximation]
\label{lem:regproj}
Assume \(g_r(w)\neq 0\). Then
\begin{equation}
P_r^{(\tau)}(w) = \frac{\|g_r(w)\|_2^2}{\|g_r(w)\|_2^2+\tau}\,P_r(w),
\end{equation}
and therefore
\begin{equation}
\|P_r^{(\tau)}(w)-P_r(w)\|_{\mathrm{op}} = \frac{\tau}{\|g_r(w)\|_2^2+\tau}.
\end{equation}
If \(q_\tau(w):=(I-P_r^{(\tau)}(w))g_f(w)\), then
\begin{equation}
\|q_\tau(w)-q(w)\|_2
\le
\frac{\tau}{\|g_r(w)\|_2^2+\tau}\,\|g_f(w)\|_2.
\end{equation}
\end{lemma}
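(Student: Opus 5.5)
The plan is to first pin down the regularized projector, since the statement leaves $P_r^{(\tau)}(w)$ implicit. Matching Algorithm~\ref{alg:rosu_full} and the audit in Appendix~\ref{app:retain-batch-size}, I take
\[
P_r^{(\tau)}(w):=\frac{g_r(w)g_r(w)^\top}{\|g_r(w)\|_2^2+\tau}.
\]
Write $c:=\|g_r(w)\|_2^2>0$ (using $g_r(w)\neq 0$). Multiplying and dividing by $c$ gives
\[
P_r^{(\tau)}(w)=\frac{c}{c+\tau}\cdot\frac{g_rg_r^\top}{c}=\frac{c}{c+\tau}P_r(w),
\]
which is the first identity.

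For the operator-norm identity, subtract to get
\[
P_r^{(\tau)}(w)-P_r(w)=\Bigl(\frac{c}{c+\tau}-1\Bigr)P_r(w)=-\frac{\tau}{c+\tau}P_r(w).
\]
Since $P_r(w)=u_ru_r^\top$ is a rank-one orthogonal projector with $\|u_r\|_2=1$, we have $\|P_r(w)\|_{\mathrm{op}}=1$. This is attained at $u_r$ and bounded above by idempotence and symmetry. Taking norms, and using $\tau\ge0$ so the scalar's absolute value is $\tau/(c+\tau)$, gives exactly $\tau/(c+\tau)$.

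For the bound on $q_\tau$, note that $q_\tau(w)-q(w)=-(P_r^{(\tau)}(w)-P_r(w))g_f(w)$. Applying the submultiplicative bound $\|Ax\|_2\le\|A\|_{\mathrm{op}}\|x\|_2$ together with the previous identity yields the claim. A slightly sharper form follows for free: the difference equals $\frac{\tau}{c+\tau}\,(u_r^\top g_f)\,u_r$, with norm $\frac{\tau}{c+\tau}\|g_f\|_2|\cos\theta|$. I only need the stated weaker bound.

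No step is a genuine obstacle; the argument is pure algebra. The only point needing care is matching the implicit definition of $P_r^{(\tau)}$ to the algorithm's $q_\tau$. One must also confirm that $\tau\ge 0$, which holds since $\tau>0$ is assumed in Algorithm~\ref{alg:rosu_full}, so the denominator is positive and the absolute value is handled correctly.
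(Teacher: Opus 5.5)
Your proposal is correct and follows the paper's proof essentially line by line. You rewrite $P_r^{(\tau)}$ as the scalar $\|g_r\|_2^2/(\|g_r\|_2^2+\tau)$ times $P_r$, subtract to get $-\frac{\tau}{\|g_r\|_2^2+\tau}P_r$, use $\|P_r\|_{\mathrm{op}}=1$, and bound $q_\tau-q=(P_r-P_r^{(\tau)})g_f$ by the operator norm. Your sharper remark is also correct and goes slightly beyond the paper: the difference equals $\frac{\tau}{\|g_r\|_2^2+\tau}(u_r^\top g_f)\,u_r$, so its norm carries an extra factor $|\cos\theta|$.
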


The proof is provided in Appendix~\ref{app:proof-regproj}.

\begin{lemma}[Degenerate orthogonal component]
\label{lem:qsmall}
Assume \(g_r(w)\neq 0\). If \(\|q(w)\|_2\le \varepsilon_q\), then the optimal value of the ROSU inner problem is at most \(\rho\,\varepsilon_q\):
\begin{equation}
\max_{\|\delta\|_2\le \rho,\; g_r(w)^\top\delta=0}
g_f(w)^\top\delta
\le
\rho\,\varepsilon_q.
\end{equation}
\end{lemma}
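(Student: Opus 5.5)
The plan is to bound the objective directly over the feasible set, and never solve the inner problem explicitly. Take any $\delta$ with $\|\delta\|_2\le\rho$ and $g_r(w)^\top\delta=0$. Decompose $g_f(w)=P_r(w)g_f(w)+(I-P_r(w))g_f(w)=P_r(w)g_f(w)+q(w)$.

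The first step is to kill the retain-aligned part. Since $P_r(w)=u_r(w)u_r(w)^\top$ is symmetric, we have
\[
\bigl(P_r(w)g_f(w)\bigr)^\top\delta=g_f(w)^\top u_r(w)\,\bigl(u_r(w)^\top\delta\bigr).
\]
This vanishes because $u_r(w)^\top\delta=g_r(w)^\top\delta/\|g_r(w)\|_2=0$; here the assumption $g_r(w)\neq0$ makes $u_r(w)$ well defined. Hence $g_f(w)^\top\delta=q(w)^\top\delta$ for every feasible $\delta$.

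The second step applies Cauchy--Schwarz:
\[
q(w)^\top\delta\le\|q(w)\|_2\|\delta\|_2\le\rho\,\|q(w)\|_2\le\rho\,\varepsilon_q.
\]
Taking the supremum over the feasible set gives the claim. The feasible set is compact and nonempty, since it contains $\delta=0$, so the maximum exists.

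There is no real obstacle. The only care needed is that the argument must not assume $q(w)\neq0$, since Proposition~\ref{prop:inner} requires that and the degenerate case includes $q(w)=0$. The direct bound above covers that case as well, where the value is simply $0$. If $q(w)\neq0$, one could alternatively cite Proposition~\ref{prop:inner}: its optimal value $\rho\|q(w)\|_2$ is at most $\rho\varepsilon_q$. The direct argument handles both cases uniformly.
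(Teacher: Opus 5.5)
Your proposal is correct and matches the paper's proof: both use the projection identity $g_f(w)^\top\delta = q(w)^\top\delta$ on the feasible set, then apply Cauchy--Schwarz and the radius constraint to get $\rho\|q(w)\|_2\le\rho\,\varepsilon_q$. Like the paper, your argument also covers the degenerate case $q(w)=0$ directly, without invoking Proposition~\ref{prop:inner}.
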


The proof is provided in Appendix~\ref{app:proof-qsmall}.

\begin{lemma}[Regularized relaxed-Jacobian implementation]
\label{lem:reg-relaxed-jac}
Assume \(g_r(w)\neq0\) and \(q_\tau(w)\neq0\). Define
\begin{equation}
P_r^{(\tau)}(w):=
\frac{g_r(w)g_r(w)^\top}{\|g_r(w)\|_2^2+\tau},
\qquad
q_\tau(w):=(I-P_r^{(\tau)}(w))g_f(w),
\end{equation}
\begin{equation}
u_\tau(w):=\frac{q_\tau(w)}{\|q_\tau(w)\|_2},
\qquad
P_\tau(w):=u_\tau(w)u_\tau(w)^\top.
\end{equation}
The product-form relaxed Jacobian for the regularized map is
\begin{equation}
\widehat J_{\tau}^{\mathrm{prod}}(w)
=
\frac{\rho}{\|q_\tau(w)\|_2}
(I-P_\tau(w))(I-P_r^{(\tau)}(w)).
\end{equation}
Algorithm~\ref{alg:rosu_full} uses
\begin{equation}
\widehat J_{\tau}^{\mathrm{impl}}(w)
=
\frac{\rho}{\|q_\tau(w)\|_2}
(I-P_r^{(\tau)}(w)-P_\tau(w)).
\end{equation}
Then
\begin{equation}
\widehat J_{\tau}^{\mathrm{prod}}(w)
-
\widehat J_{\tau}^{\mathrm{impl}}(w)
=
\frac{\rho}{\|q_\tau(w)\|_2}
P_\tau(w)P_r^{(\tau)}(w),
\end{equation}
and
\begin{equation}
\bigl\|
\widehat J_{\tau}^{\mathrm{prod}}(w)
-
\widehat J_{\tau}^{\mathrm{impl}}(w)
\bigr\|_{\mathrm{op}}
\le
\frac{
\rho\,\tau\,\|g_f(w)\|_2\,\|g_r(w)\|_2^2
}{
\|q_\tau(w)\|_2^2
\bigl(\|g_r(w)\|_2^2+\tau\bigr)^2
}.
\end{equation}
\end{lemma}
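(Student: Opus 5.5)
The plan is to prove the two claims in order, first the algebraic identity and then the operator-norm bound. Both come down to elementary manipulations with rank-one projectors. The one nontrivial ingredient is that the regularized direction $q_\tau(w)$ is only approximately orthogonal to $g_r(w)$, and that its residual component along $g_r(w)$ is exactly of order $\tau$.

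\emph{Identity.} Expand the product form:
\[
(I-P_\tau(w))(I-P_r^{(\tau)}(w)) = I-P_r^{(\tau)}(w)-P_\tau(w)+P_\tau(w)P_r^{(\tau)}(w).
\]
Subtracting $I-P_r^{(\tau)}(w)-P_\tau(w)$ and multiplying by $\rho/\|q_\tau(w)\|_2$ gives the claimed difference $\frac{\rho}{\|q_\tau(w)\|_2}P_\tau(w)P_r^{(\tau)}(w)$. No assumption beyond $q_\tau(w)\neq 0$ is needed, since that is what makes $u_\tau(w)$ well defined.

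\emph{Bound.} Write the cross term as an outer product:
\[
P_\tau P_r^{(\tau)}=\frac{(u_\tau^\top g_r)\,u_\tau g_r^\top}{\|g_r\|_2^2+\tau},
\qquad\text{so}\qquad
\|P_\tau P_r^{(\tau)}\|_{\mathrm{op}}=\frac{|u_\tau^\top g_r|\,\|g_r\|_2}{\|g_r\|_2^2+\tau}.
\]
This uses $\|ab^\top\|_{\mathrm{op}}=\|a\|_2\|b\|_2$ and $\|u_\tau\|_2=1$.

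The key step is to compute $g_r^\top q_\tau$ exactly. Directly from the definition of $q_\tau$,
\[
g_r^\top q_\tau = g_f^\top g_r\Bigl(1-\frac{\|g_r\|_2^2}{\|g_r\|_2^2+\tau}\Bigr)=\frac{\tau\, g_f^\top g_r}{\|g_r\|_2^2+\tau}.
\]
Hence, by Cauchy--Schwarz,
\[
|u_\tau^\top g_r|\le \frac{\tau\,\|g_f\|_2\|g_r\|_2}{\|q_\tau\|_2(\|g_r\|_2^2+\tau)}.
\]
Substituting this into the operator-norm expression and multiplying by $\rho/\|q_\tau\|_2$ yields exactly the stated bound
\[
\frac{\rho\,\tau\,\|g_f\|_2\|g_r\|_2^2}{\|q_\tau\|_2^2(\|g_r\|_2^2+\tau)^2}.
\]

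The main point to get right is this residual inner product $g_r^\top q_\tau$. Bounding it crudely via Lemma~\ref{lem:regproj} would lose the exact power structure of the bound. Computing it in closed form gives the right powers of $\|g_r\|_2$ and $(\|g_r\|_2^2+\tau)$ directly, and also shows the cross term vanishes as $\tau\to 0$, consistent with the unregularized Lemma~\ref{lem:relaxed}.
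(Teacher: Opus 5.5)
Your proposal is correct and follows the paper's proof essentially line by line. Both expand the product to isolate the cross term $P_\tau P_r^{(\tau)}$, write it as the rank-one matrix $(u_\tau^\top g_r)\,u_\tau g_r^\top/(\|g_r\|_2^2+\tau)$, compute the residual $g_r^\top q_\tau=\tau\,g_f^\top g_r/(\|g_r\|_2^2+\tau)$ exactly, and finish with Cauchy--Schwarz. The only cosmetic differences are two. The paper keeps $|g_f^\top g_r|$ until the last step, so it first obtains an exact equality for the operator norm. It also notes that the transposed difference, which is what actually enters the transported update $v$, has the same operator norm.
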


The proof is provided in Appendix~\ref{app:proof-reg-relaxed-jac}.

\begin{lemma}[Regularized degenerate fallback]
\label{lem:reg-qsmall}
Assume \(g_r(w)\neq0\). If \(\|q_\tau(w)\|_2\le\varepsilon_q\), then the exact ROSU inner value is bounded by
\begin{equation}
\max_{\|\delta\|_2\le\rho,\; g_r(w)^\top\delta=0}
g_f(w)^\top\delta
\le
\rho\left(
\varepsilon_q
+
\frac{\tau}{\|g_r(w)\|_2^2+\tau}\|g_f(w)\|_2
\right).
\end{equation}
\end{lemma}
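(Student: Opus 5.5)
The plan is to chain Lemma~\ref{lem:qsmall} with the perturbation bound of Lemma~\ref{lem:regproj}. Since $g_r(w)\neq 0$, the exact projected component $q(w)=(I-P_r(w))g_f(w)$ is well defined. The first step is to bound its norm using the regularized quantity that Algorithm~\ref{alg:rosu_full} actually tests. By the triangle inequality and the last estimate of Lemma~\ref{lem:regproj},
\begin{equation}
\|q(w)\|_2
\le
\|q_\tau(w)\|_2+\|q(w)-q_\tau(w)\|_2
\le
\varepsilon_q+\frac{\tau}{\|g_r(w)\|_2^2+\tau}\,\|g_f(w)\|_2,
\end{equation}
where the first summand uses the hypothesis $\|q_\tau(w)\|_2\le\varepsilon_q$. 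Denote the right-hand side by $\varepsilon_q'$.

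The second step is to bound the inner value directly, which avoids having to restate Lemma~\ref{lem:qsmall} with a modified threshold. Take any feasible $\delta$, so that $\|\delta\|_2\le\rho$ and $g_r(w)^\top\delta=0$. Then $P_r(w)\delta=0$, hence $(I-P_r(w))\delta=\delta$. Because $I-P_r(w)$ is symmetric,
\begin{equation}
g_f(w)^\top\delta
=
g_f(w)^\top(I-P_r(w))\delta
=
q(w)^\top\delta
\le
\|q(w)\|_2\,\rho .
\end{equation}
Taking the supremum over the feasible set and inserting the bound from the first step gives $\rho\,\varepsilon_q'$, which is exactly the claimed inequality. Equivalently, one can apply Lemma~\ref{lem:qsmall} with threshold $\varepsilon_q'$; its only hypothesis is $\|q(w)\|_2\le\varepsilon_q'$, which the first step supplies.

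There is no substantial obstacle; the argument is a two-line composition. The only care needed is in the first step. One must use the norm-difference bound of Lemma~\ref{lem:regproj}, which holds for every $g_f$, and not any normalized-direction estimate, since that would require $q$ to be bounded away from zero. The argument also correctly covers the case $q(w)=0$, where the inner value is $0$.
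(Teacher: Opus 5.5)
Your proposal is correct and follows the paper's proof: the paper likewise combines the triangle inequality with the bound $\|q(w)-q_\tau(w)\|_2\le \frac{\tau}{\|g_r(w)\|_2^2+\tau}\|g_f(w)\|_2$ from Lemma~\ref{lem:regproj}, and then applies Lemma~\ref{lem:qsmall} with the enlarged threshold. Your inlined Cauchy--Schwarz step is just the proof of Lemma~\ref{lem:qsmall} written out, so the two arguments coincide.
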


The proof is provided in Appendix~\ref{app:proof-reg-qsmall}.

\section{Proofs}
\label{app:proofs}

\subsection{Proof of Proposition~\ref{prop:inner}}
\label{app:proof-inner}

\begin{proof}
Let \(\delta\) be any feasible point of
\begin{equation}
\max_{\|\delta\|_2\le \rho,\; g_r(w)^\top\delta=0} g_f(w)^\top\delta.
\end{equation}
Because \(g_r(w)^\top\delta=0\), we have
\begin{equation}
q(w)^\top\delta = \bigl((I-P_r(w))g_f(w)\bigr)^\top\delta = g_f(w)^\top (I-P_r(w)) \delta = g_f(w)^\top\delta.
\end{equation}

Applying Cauchy--Schwarz and the norm constraint gives
\begin{equation}
g_f(w)^\top\delta
=
q(w)^\top\delta
\le
\|q(w)\|_2\|\delta\|_2
\le
\rho\,\|q(w)\|_2.
\end{equation}
Hence no feasible \(\delta\) can achieve objective value larger than \(\rho\,\|q(w)\|_2\).

Now consider
\begin{equation}
\delta_{\mathrm{rosu}}(w)
=
\rho\,\frac{q(w)}{\|q(w)\|_2}.
\end{equation}
Since \(q(w)=(I-P_r(w))g_f(w)\), it lies in the null space of \(g_r(w)^\top\), so \(g_r(w)^\top\delta_{\mathrm{rosu}}(w)=0\) and \(\|\delta_{\mathrm{rosu}}(w)\|_2=\rho\). Thus \(\delta_{\mathrm{rosu}}(w)\) is feasible. Moreover,
\begin{equation}
g_f(w)^\top\delta_{\mathrm{rosu}}(w)
=
\rho\,\frac{g_f(w)^\top q(w)}{\|q(w)\|_2}
=
\rho\,\frac{q(w)^\top q(w)}{\|q(w)\|_2}
=
\rho\,\|q(w)\|_2.
\end{equation}
Therefore \(\delta_{\mathrm{rosu}}(w)\) attains the upper bound and is optimal.

Finally, equality in Cauchy--Schwarz holds iff \(\delta\) is colinear with \(q(w)\), and equality in the norm bound holds iff \(\|\delta\|_2=\rho\). Because \(q(w)\neq 0\), the unique feasible point satisfying both conditions is
\begin{equation}
\delta
=
\rho\,\frac{q(w)}{\|q(w)\|_2}
=
\delta_{\mathrm{rosu}}(w).
\end{equation}
Hence the maximizer is unique, and the optimal value is \(\rho\,\|q(w)\|_2\).
\end{proof}

\subsection{Proof of Theorem~\ref{thm:retain-damage}}
\label{app:proof-retain-damage}

\begin{proof}
\textbf{Part~(i).}
By \(M_r\)-smoothness, for \(a=\delta_{\mathrm{rosu}}(w)\),
\begin{equation}
L_r\bigl(w+\delta_{\mathrm{rosu}}(w)\bigr)
\le
L_r(w) + g_r(w)^\top\delta_{\mathrm{rosu}}(w) + \frac{M_r}{2}\|\delta_{\mathrm{rosu}}(w)\|_2^2.
\end{equation}
By construction, \(g_r(w)^\top\delta_{\mathrm{rosu}}(w)=0\) and \(\|\delta_{\mathrm{rosu}}(w)\|_2=\rho\). Substituting these yields
\begin{equation}
L_r\bigl(w+\delta_{\mathrm{rosu}}(w)\bigr)
\le
L_r(w) + \frac{M_r}{2}\rho^2.
\end{equation}

\textbf{Part~(ii).}
We apply the quadratic lower bound to \(\delta_{\mathrm{std}}(w)\) and the upper bound to \(\delta_{\mathrm{rosu}}(w)\):
\begin{equation}
L_r\bigl(w+\delta_{\mathrm{std}}(w)\bigr)
\ge
L_r(w) + g_r(w)^\top\delta_{\mathrm{std}}(w) - \frac{M_r}{2}\rho^2,
\end{equation}
and
\begin{equation}
L_r\bigl(w+\delta_{\mathrm{rosu}}(w)\bigr)
\le
L_r(w) + \frac{M_r}{2}\rho^2.
\end{equation}
Subtracting the second inequality from the first and using \(g_r(w)^\top\delta_{\mathrm{std}}(w) = \rho\,\|g_r(w)\|_2\cos\theta(w)\) gives
\begin{equation}
L_r\bigl(w+\delta_{\mathrm{std}}(w)\bigr) - L_r\bigl(w+\delta_{\mathrm{rosu}}(w)\bigr)
\ge
\rho\,\|g_r(w)\|_2\,\cos\theta(w) - M_r\rho^2.
\end{equation}
This completes the proof.
\end{proof}

\subsection{Proof of Proposition~\ref{prop:gap}}
\label{app:proof-gap}

\begin{proof}
Write the orthogonal decomposition of the forget gradient as
\begin{equation}
g_f(w)=a+b,
\qquad
a:=P_r(w)g_f(w),
\qquad
b:=q(w)=(I-P_r(w))g_f(w).
\end{equation}
Since \(P_r(w)\) is the orthogonal projector onto the retain-gradient direction, \(a^\top b=0\). Moreover,
\begin{equation}
\delta_{\mathrm{std}}(w)
=
\rho\,\frac{a+b}{\|a+b\|_2},
\qquad
\delta_{\mathrm{rosu}}(w)
=
\rho\,\frac{b}{\|b\|_2}.
\end{equation}
Using \(\|u-v\|_2^2=\|u\|_2^2+\|v\|_2^2-2u^\top v\), we obtain
\begin{align}
\|\delta_{\mathrm{rosu}}(w)-\delta_{\mathrm{std}}(w)\|_2^2
&=
\rho^2
\left\|
\frac{b}{\|b\|_2}
-
\frac{a+b}{\|a+b\|_2}
\right\|_2^2 \\
&=
2\rho^2
\left(
1-
\frac{b^\top(a+b)}
{\|b\|_2\|a+b\|_2}
\right).
\end{align}
Because \(a^\top b=0\),
\begin{equation}
b^\top(a+b)=\|b\|_2^2.
\end{equation}
Therefore,
\begin{align}
\|\delta_{\mathrm{rosu}}(w)-\delta_{\mathrm{std}}(w)\|_2^2
&=
2\rho^2
\left(
1-
\frac{\|b\|_2}{\|a+b\|_2}
\right) \\
&=
2\rho^2
\left(
1-
\frac{\|q(w)\|_2}{\|g_f(w)\|_2}
\right).
\end{align}
Since the retain-parallel and retain-orthogonal components are orthogonal,
\begin{equation}
\frac{\|q(w)\|_2}{\|g_f(w)\|_2}
=
\sin\theta(w)
=
\sqrt{1-\cos^2\theta(w)}.
\end{equation}
This gives the exact perturbation-gap identity
\begin{equation}
\|\delta_{\mathrm{rosu}}(w)-\delta_{\mathrm{std}}(w)\|_2
=
\rho\sqrt{2\bigl(1-\sin\theta(w)\bigr)}
=
\rho\sqrt{2\Bigl(1-\sqrt{1-\cos^2\theta(w)}\Bigr)}.
\end{equation}

Next, for any \(|x|\le1\),
\begin{equation}
1-\sqrt{1-x^2}
=
\frac{x^2}{1+\sqrt{1-x^2}}
\le
x^2.
\end{equation}
Substituting \(x=\cos\theta(w)\) yields
\begin{equation}
1-\sin\theta(w)
\le
\cos^2\theta(w).
\end{equation}
Hence
\begin{equation}
\|\delta_{\mathrm{rosu}}(w)-\delta_{\mathrm{std}}(w)\|_2
\le
\sqrt{2}\,\rho\,|\cos\theta(w)|.
\end{equation}
If \(|\cos\theta(w)|\le\epsilon\), then
\begin{equation}
\|\delta_{\mathrm{rosu}}(w)-\delta_{\mathrm{std}}(w)\|_2
\le
\sqrt{2}\,\rho\,\epsilon.
\end{equation}

Finally, assume \(L_r\) is \(G_r\)-Lipschitz on \(\mathcal S_\rho(w)\). The two endpoints \(w+\delta_{\mathrm{rosu}}(w)\) and \(w+\delta_{\mathrm{std}}(w)\) both belong to this segment. Therefore,
\begin{align}
&
\bigl|
L_r(w+\delta_{\mathrm{rosu}}(w))
-
L_r(w+\delta_{\mathrm{std}}(w))
\bigr| \\
&\qquad\le
G_r
\bigl\|
(w+\delta_{\mathrm{rosu}}(w))
-
(w+\delta_{\mathrm{std}}(w))
\bigr\|_2 \\
&\qquad=
G_r
\|\delta_{\mathrm{rosu}}(w)-\delta_{\mathrm{std}}(w)\|_2 \\
&\qquad\le
\sqrt{2}\,G_r\,\rho\,\epsilon.
\end{align}
This proves the proposition.
\end{proof}

\subsection{Proof of Proposition~\ref{prop:subspace}}
\label{app:proof-subspace}

\begin{proof}
For any feasible \(\delta\) satisfying \(U(w)^\top\delta=0\), we have
\begin{equation}
g_f(w)^\top\delta
=
q_U(w)^\top\delta
\le
\|q_U(w)\|_2\,\|\delta\|_2
\le
\rho\,\|q_U(w)\|_2.
\end{equation}
Equality holds iff \(\delta\) is colinear with \(q_U(w)\) and has norm \(\rho\), which gives the unique maximizer
\begin{equation}
\delta_U(w)=\rho\,\frac{q_U(w)}{\|q_U(w)\|_2}.
\end{equation}

For any \(h\in\operatorname{span}(U(w))\), write \(h=U(w)c\). Then
\begin{equation}
h^\top\delta_U(w)=c^\top U(w)^\top\delta_U(w)=0.
\end{equation}
For the retain loss, smoothness gives
\begin{equation}
L_r(w+\delta_U)
\le
L_r(w)+g_r(w)^\top\delta_U+\frac{M_r}{2}\rho^2.
\end{equation}
Since \(P_U(w)g_r(w)\in\operatorname{span}(U(w))\), we have
\begin{equation}
g_r(w)^\top\delta_U(w)
=
\bigl((I-P_U(w))g_r(w)\bigr)^\top\delta_U(w)
\le
\rho\,\|(I-P_U(w))g_r(w)\|_2.
\end{equation}
This proves the residual retain bound. If \(g_r(w)\in\operatorname{span}(U(w))\), then \((I-P_U(w))g_r(w)=0\), giving the retain-neutral special case.

For the forget-gain comparison, if \(g_r(w)\neq0\) and \(g_r(w)\in\operatorname{span}(U(w))\), then
\begin{equation}
\{\delta:\|\delta\|_2\le\rho,\; U(w)^\top\delta=0\}
\subseteq
\{\delta:\|\delta\|_2\le\rho,\; g_r(w)^\top\delta=0\},
\end{equation}
so the subspace-constrained optimal value cannot exceed the rank-1 ROSU optimal value. The nested-subspace claim follows from the same feasible-set inclusion: if \(\operatorname{span}(U_1(w))\subseteq\operatorname{span}(U_2(w))\), then the feasible set under \(U_2\) is contained in the feasible set under \(U_1\).
\end{proof}

\subsection{Proof of Proposition~\ref{prop:tradeoff}}
\label{app:proof-tradeoff}

\begin{proof}
By definition of the standard perturbation,
\begin{equation}
g_f(w)^\top \delta_{\mathrm{std}}(w)
= g_f(w)^\top \left( \rho\frac{g_f(w)}{\|g_f(w)\|_2} \right)
= \rho\,\|g_f(w)\|_2.
\end{equation}

For the ROSU perturbation,
\begin{equation}
g_f(w)^\top \delta_{\mathrm{rosu}}(w)
=
\rho\,\frac{g_f(w)^\top q(w)}{\|q(w)\|_2}.
\end{equation}
Using the decomposition \(g_f(w)=P_r(w)g_f(w)+q(w)\) and the orthogonality \(\bigl(P_r(w)g_f(w)\bigr)^\top q(w)=0\), we obtain
\begin{equation}
g_f(w)^\top \delta_{\mathrm{rosu}}(w)
=
\rho\,\frac{q(w)^\top q(w)}{\|q(w)\|_2}
=
\rho\,\|q(w)\|_2.
\end{equation}
Because \(\|q(w)\|_2=\|g_f(w)\|_2\sin\theta(w)\),
\begin{equation}
g_f(w)^\top \delta_{\mathrm{rosu}}(w) = \rho\,\|g_f(w)\|_2\sin\theta(w).
\end{equation}
Taking the ratio gives
\begin{equation}
\frac{g_f(w)^\top\delta_{\mathrm{rosu}}(w)}{g_f(w)^\top\delta_{\mathrm{std}}(w)}
=
\frac{\rho\,\|g_f(w)\|_2\sin\theta(w)}{\rho\,\|g_f(w)\|_2}
=
\sin\theta(w)
=
\sqrt{1-\cos^2\theta(w)}.
\end{equation}
This proves the claim.
\end{proof}

\subsection{Proof of Lemma~\ref{lem:regproj}}
\label{app:proof-regproj}

\begin{proof}
Let \(u_r(w):=g_r(w)/\|g_r(w)\|_2\). Then
\begin{equation}
P_r(w)=u_r(w)u_r(w)^\top.
\end{equation}
The regularized projector can be rewritten as
\begin{equation}
P_r^{(\tau)}(w)
= \frac{\|g_r(w)\|_2^2}{\|g_r(w)\|_2^2+\tau}\,P_r(w).
\end{equation}
Subtracting \(P_r(w)\) from \(P_r^{(\tau)}(w)\) gives
\begin{equation}
P_r^{(\tau)}(w)-P_r(w)
=
-\frac{\tau}{\|g_r(w)\|_2^2+\tau}\,P_r(w).
\end{equation}
Because \(\|P_r(w)\|_{\mathrm{op}}=1\),
\begin{equation}
\|P_r^{(\tau)}(w)-P_r(w)\|_{\mathrm{op}}
=
\frac{\tau}{\|g_r(w)\|_2^2+\tau}.
\end{equation}
Finally,
\begin{equation}
q_\tau(w)-q(w)
=
\bigl(P_r(w)-P_r^{(\tau)}(w)\bigr)g_f(w),
\end{equation}
so by the operator norm inequality,
\begin{equation}
\|q_\tau(w)-q(w)\|_2
\le
\|P_r(w)-P_r^{(\tau)}(w)\|_{\mathrm{op}}\,
\|g_f(w)\|_2.
\end{equation}
This proves the bound.
\end{proof}

\subsection{Proof of Lemma~\ref{lem:qsmall}}
\label{app:proof-qsmall}

\begin{proof}
Let \(\delta\) be any feasible point of the constrained inner problem. Because \(g_r(w)^\top\delta=0\), the same projection identity used in Proposition~\ref{prop:inner} gives
\begin{equation}
g_f(w)^\top\delta = q(w)^\top\delta.
\end{equation}
By Cauchy--Schwarz and the radius constraint,
\begin{equation}
g_f(w)^\top\delta
=
q(w)^\top\delta
\le
\|q(w)\|_2\,\|\delta\|_2
\le
\rho\,\|q(w)\|_2
\le
\rho\,\varepsilon_q.
\end{equation}
Taking the maximum over all feasible \(\delta\) proves the claim, including the degenerate case \(q(w)=0\).
\end{proof}

\subsection{\texorpdfstring{Proof of Lemma~\ref{lem:reg-relaxed-jac}}{Proof of regularized relaxed-Jacobian lemma}}
\label{app:proof-reg-relaxed-jac}

\begin{proof}
For readability, we suppress the dependence on \(w\) and write
\(g_f=g_f(w)\), \(g_r=g_r(w)\), \(q_\tau=q_\tau(w)\),
\(P_\tau=P_\tau(w)\), and \(P_r^{(\tau)}=P_r^{(\tau)}(w)\).
Let
\[
d_\tau:=\|g_r\|_2^2+\tau.
\]
Then
\[
P_r^{(\tau)}=\frac{g_rg_r^\top}{d_\tau},
\qquad
q_\tau
=
g_f-\frac{g_f^\top g_r}{d_\tau}g_r,
\qquad
u_\tau=\frac{q_\tau}{\|q_\tau\|_2},
\qquad
P_\tau=u_\tau u_\tau^\top.
\]
Unlike the unregularized projector \(P_r\), the matrix \(P_r^{(\tau)}\) is a regularized rank-one operator and is not idempotent unless \(\tau=0\). Therefore \(q_\tau\) is not exactly orthogonal to \(g_r\), and the product-form Jacobian contains a small residual cross-term.

Expanding the product-form relaxation gives
\begin{align}
(I-P_\tau)(I-P_r^{(\tau)})
&=
I-P_r^{(\tau)}-P_\tau+P_\tau P_r^{(\tau)}.
\end{align}
Since the implemented relaxation drops this last product term, we have the exact identity
\begin{align}
\widehat J_{\tau}^{\mathrm{prod}}
-
\widehat J_{\tau}^{\mathrm{impl}}
&=
\frac{\rho}{\|q_\tau\|_2}P_\tau P_r^{(\tau)}.
\end{align}

It remains to bound the operator norm of \(P_\tau P_r^{(\tau)}\). Using the rank-one forms of
\(P_\tau\) and \(P_r^{(\tau)}\),
\begin{align}
P_\tau P_r^{(\tau)}
&=
u_\tau u_\tau^\top
\frac{g_rg_r^\top}{d_\tau}
=
\frac{(u_\tau^\top g_r)}{d_\tau}\,
u_\tau g_r^\top.
\end{align}
Since \(\|u_\tau\|_2=1\) and the operator norm of a rank-one matrix
\(ab^\top\) is \(\|a\|_2\|b\|_2\), this gives
\begin{align}
\|P_\tau P_r^{(\tau)}\|_{\mathrm{op}}
&=
\frac{|u_\tau^\top g_r|\,\|g_r\|_2}{d_\tau}.
\end{align}

We now compute the residual non-orthogonality between \(q_\tau\) and \(g_r\). By definition,
\begin{align}
g_r^\top q_\tau
&=
g_r^\top g_f
-
\frac{g_f^\top g_r}{d_\tau}\|g_r\|_2^2  \\
&=
(g_f^\top g_r)
\left(
1-\frac{\|g_r\|_2^2}{\|g_r\|_2^2+\tau}
\right)  \\
&=
\frac{\tau\, g_f^\top g_r}{d_\tau}.
\end{align}
Therefore,
\begin{align}
|u_\tau^\top g_r|
=
\frac{|q_\tau^\top g_r|}{\|q_\tau\|_2}
=
\frac{\tau\,|g_f^\top g_r|}
{d_\tau\|q_\tau\|_2}.
\end{align}
Substituting this identity into the previous operator-norm expression yields the sharper equality
\begin{align}
\|P_\tau P_r^{(\tau)}\|_{\mathrm{op}}
&=
\frac{
\tau\,|g_f^\top g_r|\,\|g_r\|_2
}{
d_\tau^2\|q_\tau\|_2
}.
\end{align}

Multiplying by the prefactor \(\rho/\|q_\tau\|_2\), we obtain
\begin{align}
\bigl\|
\widehat J_{\tau}^{\mathrm{prod}}
-
\widehat J_{\tau}^{\mathrm{impl}}
\bigr\|_{\mathrm{op}}
&=
\frac{
\rho\,\tau\,|g_f^\top g_r|\,\|g_r\|_2
}{
\|q_\tau\|_2^2
d_\tau^2
}.
\end{align}
Finally, applying Cauchy--Schwarz,
\[
|g_f^\top g_r|
\le
\|g_f\|_2\|g_r\|_2,
\]
gives
\begin{align}
\bigl\|
\widehat J_{\tau}^{\mathrm{prod}}
-
\widehat J_{\tau}^{\mathrm{impl}}
\bigr\|_{\mathrm{op}}
&\le
\frac{
\rho\,\tau\,\|g_f\|_2\,\|g_r\|_2^2
}{
\|q_\tau\|_2^2
(\|g_r\|_2^2+\tau)^2
}.
\end{align}
This proves the claimed bound.

The same bound also applies to the corresponding transported-gradient form. Indeed,
\[
\bigl(\widehat J_{\tau}^{\mathrm{prod}}\bigr)^\top
-
\bigl(\widehat J_{\tau}^{\mathrm{impl}}\bigr)^\top
=
\frac{\rho}{\|q_\tau\|_2}P_r^{(\tau)}P_\tau,
\]
and
\[
\|P_r^{(\tau)}P_\tau\|_{\mathrm{op}}
=
\|(P_\tau P_r^{(\tau)})^\top\|_{\mathrm{op}}
=
\|P_\tau P_r^{(\tau)}\|_{\mathrm{op}}.
\]
Thus dropping the cross-term in the implemented vector update incurs the same operator-norm error.
\end{proof}
When \(\tau=0\), the residual \(g_r^\top q_\tau\) vanishes and the cross-term disappears exactly, recovering the unregularized relaxed Jacobian. For \(\tau>0\), the only discrepancy comes from the small non-orthogonality introduced by the denominator stabilizer, and the bound scales linearly in \(\tau\) away from the degenerate branch.

\subsection{\texorpdfstring{Proof of Lemma~\ref{lem:reg-qsmall}}{Proof of regularized degenerate-fallback lemma}}
\label{app:proof-reg-qsmall}

\begin{proof}
By Lemma~\ref{lem:regproj},
\begin{equation}
\|q(w)-q_\tau(w)\|_2
\le
\frac{\tau}{\|g_r(w)\|_2^2+\tau}\|g_f(w)\|_2.
\end{equation}
Hence
\begin{equation}
\|q(w)\|_2
\le
\|q_\tau(w)\|_2+\|q(w)-q_\tau(w)\|_2
\le
\varepsilon_q+
\frac{\tau}{\|g_r(w)\|_2^2+\tau}\|g_f(w)\|_2.
\end{equation}
Applying Lemma~\ref{lem:qsmall} with this enlarged threshold gives the result.
\end{proof}

\subsection{Proof of Proposition~\ref{prop:exact}}
\label{app:proof-exact}

\begin{proof}
By definition,
\begin{equation}
F_{\mathrm{rosu}}(w) = L_r\bigl(w+\delta_{\mathrm{rosu}}(w)\bigr).
\end{equation}
Applying the multivariate chain rule yields
\begin{equation}
\nabla F_{\mathrm{rosu}}(w)
=
\left(
\frac{\partial \bigl(w+\delta_{\mathrm{rosu}}(w)\bigr)}{\partial w}
\right)^\top
\nabla L_r\bigl(w+\delta_{\mathrm{rosu}}(w)\bigr)
=
\bigl(I+J_{\delta_{\mathrm{rosu}}}(w)^\top\bigr)\,
g_r\bigl(w+\delta_{\mathrm{rosu}}(w)\bigr).
\end{equation}
To compute the Jacobian \(J_{\delta_{\mathrm{rosu}}}(w)\), we use the differential of the normalization map \(x \mapsto x/\|x\|_2\), whose Jacobian is \(\frac{1}{\|x\|_2}\left(I-\frac{xx^\top}{\|x\|_2^2}\right)\). Substituting \(x=q(w)\) and noting that \(P_\perp(w)=\frac{q(w)q(w)^\top}{\|q(w)\|_2^2}\), we obtain
\begin{equation}
J_{\delta_{\mathrm{rosu}}}(w)
=
\frac{\rho}{\|q(w)\|_2}\,\bigl(I-P_\perp(w)\bigr)\,J_q(w).
\end{equation}
Finally, differentiating \(q(w) = (I-P_r(w))g_f(w)\) gives, for any direction \(\xi\),
\begin{equation}
J_q(w)[\xi]
=
(I-P_r(w))\,\nabla^2L_f(w)\,\xi
-
DP_r(w)[\xi]\,g_f(w).
\end{equation}
This completes the exact derivative.
\end{proof}

\subsection{Proof of Proposition~\ref{prop:approx-grad}}
\label{app:proof-approx-grad}

\begin{proof}
Define \(\widehat J_{\delta_{\mathrm{rosu}}}(w)\) as in Proposition~\ref{prop:approx-grad}. We first isolate the difference between the exact Jacobian and the relaxed Jacobian.

By Proposition~\ref{prop:exact},
\begin{equation}
J_{\delta_{\mathrm{rosu}}}(w)
=
\frac{\rho}{\|q(w)\|_2}
\bigl(I-P_\perp(w)\bigr)J_q(w).
\end{equation}
By Lemma~\ref{lem:relaxed},
\begin{equation}
\widehat J_{\delta_{\mathrm{rosu}}}(w)
=
\frac{\rho}{\|q(w)\|_2}
\bigl(I-P_r(w)-P_\perp(w)\bigr).
\end{equation}
Since \(q(w)\) lies in the null space of \(P_r(w)\), we have \(P_\perp(w)P_r(w)=0\), and therefore
\begin{equation}
I-P_r(w)-P_\perp(w)
=
\bigl(I-P_\perp(w)\bigr)\bigl(I-P_r(w)\bigr).
\end{equation}
Thus
\begin{equation}
\begin{aligned}
J_{\delta_{\mathrm{rosu}}}(w)-\widehat J_{\delta_{\mathrm{rosu}}}(w)
&=
\frac{\rho}{\|q(w)\|_2}
\bigl(I-P_\perp(w)\bigr)J_q(w)
-
\frac{\rho}{\|q(w)\|_2}
\bigl(I-P_\perp(w)\bigr)\bigl(I-P_r(w)\bigr) \\
&=
\frac{\rho}{\|q(w)\|_2}
\bigl(I-P_\perp(w)\bigr)
\bigl(J_q(w)-(I-P_r(w))\bigr).
\end{aligned}
\label{eq:jac-diff-factorized}
\end{equation}

We now expand the term \(J_q(w)-(I-P_r(w))\). Recall that
\begin{equation}
q(w)=(I-P_r(w))g_f(w).
\end{equation}
For any direction \(\xi\in\mathbb{R}^p\), the directional derivative of \(q\) follows from the product rule:
\begin{equation}
\begin{aligned}
J_q(w)\xi
&=
D\!\left[(I-P_r(\cdot))g_f(\cdot)\right]_{w}[\xi] \\
&=
D(I-P_r(w))[\xi]\,g_f(w)
+
(I-P_r(w))\,Dg_f(w)[\xi].
\end{aligned}
\end{equation}
Because \(D(I - P_r(w))[\xi]=-DP_r(w)[\xi]\) and \(Dg_f(w)[\xi]=\nabla^2L_f(w)\xi\), this becomes
\begin{equation}
J_q(w)\xi
=
(I-P_r(w))\nabla^2L_f(w)\xi
-
DP_r(w)[\xi]g_f(w).
\label{eq:Jq-expanded}
\end{equation}
Now subtracting \((I-P_r(w))\xi\) from both sides, we get
\begin{equation}
\begin{aligned}
\bigl(J_q(w)-(I-P_r(w))\bigr)\xi
&=
(I-P_r(w))\nabla^2L_f(w)\xi
-
DP_r(w)[\xi]g_f(w)
-
(I-P_r(w))\xi \\
&=
(I-P_r(w))\bigl(\nabla^2L_f(w)-I\bigr)\xi
-
DP_r(w)[\xi]g_f(w).
\end{aligned}
\label{eq:Jq-relaxed-diff}
\end{equation}
This identity makes the role of the two approximations explicit: the first term measures the error from replacing the forget Hessian by the identity, and the second term measures the error from freezing the retain projector.

For \(\|\xi\|_2 \leq 1\), applying the triangle inequality gives
\begin{equation}
\begin{aligned}
\bigl\|\bigl(J_q(w)-(I-P_r(w))\bigr)\xi\bigr\|_2
&\le
\|(I-P_r(w))\bigl(\nabla^2L_f(w)-I\bigr)\xi\|_2 \\
&\quad
+ \|DP_r(w)[\xi]g_f(w)\|_2.
\end{aligned}
\end{equation}
Since \(\|I - P_r(w)\|_{\mathrm{op}} =  1\) and \(\|AB\|_{\mathrm{op}} \leq \|A\|_{\mathrm{op}}\|B\|_{\mathrm{op}}\) for any two matrices \(A\) and \(B\),
\begin{equation}
\begin{aligned}
\bigl\|\bigl(J_q(w)-(I-P_r(w))\bigr)\xi\bigr\|_2
&\le
\|\nabla^2L_f(w)-I\|_{\mathrm{op}}\,
\|\xi\|_2 + \|DP_r(w)[\xi]\|_{\mathrm{op}}\,
\|g_f(w)\|_2 \\
&\le
\varepsilon_H+\varepsilon_P\|g_f(w)\|_2.
\end{aligned}
\end{equation}
Taking the supremum over all \(\xi\) yields
\begin{equation}
\bigl\|J_q(w)-(I-P_r(w))\bigr\|_{\mathrm{op}}
\le
\varepsilon_H+\varepsilon_P\|g_f(w)\|_2.
\end{equation}
Combining this with~\eqref{eq:jac-diff-factorized} and \(\|I-P_\perp(w)\|_{\mathrm{op}} = 1\) gives
\begin{equation}
\bigl\|J_{\delta_{\mathrm{rosu}}}(w)-\widehat J_{\delta_{\mathrm{rosu}}}(w)\bigr\|_{\mathrm{op}}
\le
\frac{\rho}{\|q(w)\|_2}
\Bigl(\varepsilon_H+\varepsilon_P\|g_f(w)\|_2\Bigr).
\end{equation}

For the outer gradient,
\begin{equation}
\nabla F_{\mathrm{rosu}}(w)-\widehat\nabla F_{\mathrm{rosu}}(w)
=
\bigl(J_{\delta_{\mathrm{rosu}}}(w)-\widehat J_{\delta_{\mathrm{rosu}}}(w)\bigr)^\top\widetilde g_r(w).
\end{equation}
Applying the operator-norm inequality yields
\begin{equation}
\bigl\|\nabla F_{\mathrm{rosu}}(w)-\widehat\nabla F_{\mathrm{rosu}}(w)\bigr\|_2
\le
\bigl\|J_{\delta_{\mathrm{rosu}}}(w)-\widehat J_{\delta_{\mathrm{rosu}}}(w)\bigr\|_{\mathrm{op}}\,
\|\widetilde g_r(w)\|_2,
\end{equation}
which gives the claimed bound:
\begin{equation}
\bigl\|\nabla F_{\mathrm{rosu}}(w)-\widehat\nabla F_{\mathrm{rosu}}(w)\bigr\|_2
\le
\frac{\rho\,\|\widetilde g_r(w)\|_2}{\|q(w)\|_2}
\Bigl(\varepsilon_H+\varepsilon_P\|g_f(w)\|_2\Bigr).
\end{equation}
If Assumption~\ref{ass:smooth} also holds on the segment \(\{w+t\delta_{\mathrm{rosu}}(w):t\in[0,1]\}\), then
\begin{equation}
\|\widetilde g_r(w)-g_r(w)\|_2
=
\|\nabla L_r(w+\delta_{\mathrm{rosu}}(w))-\nabla L_r(w)\|_2
\le
M_r\|\delta_{\mathrm{rosu}}(w)\|_2
=
M_r\rho.
\end{equation}
Therefore,
\begin{equation}
\|\widetilde g_r(w)\|_2
\le
\|g_r(w)\|_2+M_r\rho.
\end{equation}
This yields the base-point-gradient version discussed after Proposition~\ref{prop:approx-grad}.
\end{proof}

\subsection{Proof of Proposition~\ref{prop:partial-restore}}
\label{app:proof-partial-restore}

\begin{proof}
\textbf{Part~(i).}
Since \(q(w)=(I-P_r(w))g_f(w)\), we have
\begin{equation}
g_r(w)^\top q(w)
=
g_r(w)^\top g_f(w)
-
\frac{g_r(w)^\top g_f(w)}{\|g_r(w)\|_2^2}\|g_r(w)\|_2^2
=
0.
\end{equation}
Therefore \(g_r(w)^\top u_\perp(w)=0\), which implies
\begin{equation}
g_r(w)^\top\bigl(\beta\,\delta_{\mathrm{rosu}}(w)\bigr)
=
\beta\,\rho\,g_r(w)^\top u_\perp(w)
=
0.
\end{equation}
The identity for the augmented update follows by linearity:
\begin{equation}
g_r(w)^\top\bigl(\beta\,\delta_{\mathrm{rosu}}(w)-\eta\,v\bigr)
=
-\eta\,g_r(w)^\top v.
\end{equation}

\textbf{Part~(ii).}
Using the orthogonal decomposition
\begin{equation}
g_f(w)=P_r(w)g_f(w)+q(w),
\end{equation}
we have
\begin{equation}
g_f(w)^\top u_\perp(w)
=
\frac{g_f(w)^\top q(w)}{\|q(w)\|_2}
=
\frac{\|q(w)\|_2^2}{\|q(w)\|_2}
=
\|q(w)\|_2.
\end{equation}
Therefore
\begin{equation}
g_f(w)^\top\bigl(\beta\,\delta_{\mathrm{rosu}}(w)\bigr)
=
\beta\,\rho\,g_f(w)^\top u_\perp(w)
=
\beta\,\rho\,\|q(w)\|_2
\ge 0.
\end{equation}
If \(\beta>0\), then the inequality is strict because \(q(w)\neq 0\) and \(\rho>0\).

\textbf{Part~(iii).}
The optimization problem
\begin{equation}
\max_{\|\zeta\|_2\le\beta\rho,\; g_r(w)^\top \zeta = 0}\; g_f(w)^\top \zeta
\end{equation}
is structurally identical to the ROSU inner problem in Proposition~\ref{prop:inner} with the perturbation budget \(\rho\) replaced by \(\beta\rho\). By Proposition~\ref{prop:inner}, the unique maximizer is
\begin{equation}
\zeta^* = (\beta\rho)\,\frac{q(w)}{\|q(w)\|_2} = \beta\,\delta_{\mathrm{rosu}}(w),
\end{equation}
with optimal value \(\beta\,\rho\,\|q(w)\|_2\).
\end{proof}

\subsection{Proof of Lemma~\ref{lem:relaxed}}
\label{app:proof-relaxed}

\begin{proof}
Applying approximations (A1) and (A2) to the exact formulation of \(J_q(w)\) eliminates the second term and replaces the Hessian with the identity matrix, yielding
\begin{equation}
\widehat J_q(w)=I-P_r(w).
\end{equation}
Substituting this into \(\widehat J_{\delta_{\mathrm{rosu}}}(w)\) gives
\begin{equation}
\widehat J_{\delta_{\mathrm{rosu}}}(w)
=
\frac{\rho}{\|q(w)\|_2}
\bigl(I-P_\perp(w)\bigr)\bigl(I-P_r(w)\bigr).
\end{equation}
Distributing the matrix product yields \(I - P_r(w) - P_\perp(w) + P_\perp(w)P_r(w)\). Because \(P_\perp(w)\) is the projector onto the span of \(q(w)\), and \(q(w)\) lies in the null space of \(P_r(w)\), the cross-term vanishes:
\begin{equation}
P_\perp(w)P_r(w)=P_r(w)P_\perp(w)=0.
\end{equation}
Therefore,
\begin{equation}
\widehat J_{\delta_{\mathrm{rosu}}}(w)
=
\frac{\rho}{\|q(w)\|_2}\bigl(I-P_r(w)-P_\perp(w)\bigr).
\end{equation}
\end{proof}

\section{Limitations and Discussion}
\label{app:limitation}

Several promising directions remain for future research. On the theoretical side, ROSU uses two local relaxations: stopping gradients through the retain projector (A1) and replacing the local forget Hessian with an identity surrogate (A2). Proposition~\ref{prop:approx-grad} bounds the resulting deviation from the exact chain-rule gradient, and the ablations show that the relaxed update is effective and stable in practice. A more complete characterization of when this relaxation is accurate or stabilizing in unlearning remains an open question. The surrogate-point linearization and the retain-neutral amplification step \(\beta\delta_{\mathrm{rosu}}\) could also be analyzed at a finer scale, where sharper second-order expansions may yield more detailed regime-dependent guarantees. Another useful direction is to adapt the surrogate construction to settings where only forget data are available, or where the retain reference is provided by cached statistics rather than live mini-batches. Finally, higher-rank retain subspaces and more efficient implementations may improve stability in strongly coupled regimes while keeping the method lightweight.

\paragraph{Broader impacts.}
ROSU contributes a principled retain-neutral surrogate geometry for approximate unlearning. By improving the forgetting--retention trade-off in high-coupling regimes, it can reduce the cost of responding to data-deletion requests, removing memorized sensitive content, and mitigating hazardous-knowledge risks in vision and language models. At the same time, ROSU provides a local first-order surrogate guarantee rather than an information-theoretic deletion certificate. It should therefore complement, not replace, post-unlearning audits such as membership-inference, reconstruction, and extraction tests in compliance-sensitive deployments. As with other selective model-editing methods, responsible use also requires clear documentation of the intended deletion scope and validation protocol.

\newpage
\bibliographystyle{unsrt}
\bibliography{refs}

\end{document}